\titleformat{\subsubsection}[runin]
{\normalfont \bfseries}{\thesubsubsection}{1em}{}
\newtheorem{theorem}{Theorem}[section]
\newtheorem*{theorem*}{Theorem}
\newtheorem{lemma}[theorem]{Lemma}
\theoremstyle{definition}
\newtheorem{remark}[theorem]{Remark}
\newtheorem{definition}[theorem]{Definition}
\newcommand{\RR}{\mathbb{R}}
\newcommand{\NN}{\mathbb{N}}
\newcommand{\EE}{\mathbb{E}}
\newcommand{\N}{\mathcal{N}}
\newcommand{\Gdomain}{G_{\leq n}(\Omega)}
\newcommand{\argmin}{\mathrm{argmin}}
\newcommand{\relu}{\mathrm{ReLU}}
\renewcommand{\upsilon}{\eta} 
\newcommand{\SnOmega}{\mathcal{S}_{\le n}(\Omega)}
\newcommand{\SnOmegaEqual}{\mathcal{S}_{=n}(\Omega)}
\newcommand{\Holder}{H\"{o}lder}
\newcommand{\lms}{ \{\!\!\{ }
\newcommand{\rms}{ \}\!\!\} }
\newcommand{\cglobal}{c^{\mathrm{global}}}
\newcommand{\cglobalhat}{\hat{c}^{\mathrm{global}}}
\newcommand{\Fadapt}{m_\relu^{\mathrm{adapt}}}
\newcommand{\twopartdef}[4]
{
	\left\{
	\begin{array}{ll}
		#1 & \mbox{if } #2 \\
		#3 & \mbox{if } #4
	\end{array}
	\right.
}
\newcommand{\blankT}{\Bar{T}_z}
\newcommand{\prob}{\mathbb{P}}
\title{On the \Holder{}  Stability of Multiset and Graph Neural Networks}
\author{
Yair Davidson$^{1}$
  \quad Nadav Dym$^{1,2}$
  \vspace{0.1in} 
  \\ 
  $^1$ Department of Computer Science - Technion, Haifa, Israel
  \\
  $^2$ Department of Mathematics - Technion, Haifa, Israel
}
\begin{document}

\maketitle

\begin{abstract}

Extensive research efforts have been put into characterizing and constructing maximally separating multiset and graph neural networks. 
However, recent empirical evidence suggests the notion of separation itself doesn't capture several interesting phenomena. On the one hand, the quality of this separation may be very weak, to the extent that the embeddings of  "separable" objects might even be considered identical when using fixed finite precision. On the other hand, architectures which aren't capable of separation in theory, somehow achieve separation when taking the network to be wide enough.

In this work, we address both of these issues, by proposing a novel pair-wise separation quality analysis framework which is based on an adaptation of Lipschitz and \Holder{} stability to parametric functions. The proposed framework, which we name \emph{\Holder{} in expectation}, allows for separation quality analysis, without restricting the analysis to embeddings that can separate all the input space simultaneously. We prove that common sum-based models are lower-\Holder{} in expectation, with an exponent
 that decays rapidly with the network's depth. Our analysis leads to adversarial examples of  graphs which can be separated by three 1-WL iterations, but cannot be separated in practice by standard maximally powerful Message Passing Neural Networks (MPNNs). To remedy this, we propose two novel MPNNs with improved separation quality, one of which is lower Lipschitz in expectation. We show these MPNNs can easily classify our adversarial examples, and compare favorably with standard MPNNs on standard graph learning tasks.
\end{abstract}

\section{Introduction}

Motivated by  a multitude of applications, including molecular systems \citep{Gilmer2017NeuralMP}, social networks \citep{borisyuk2024lignn}, recommendation systems \citep{gao2023survey} and more, permutation invariant deep learning for both multisets and graphs have gained increasing interest in recent years. This in turn has inspired several theoretical works analyzing common permutation invariant models, and their expressive power and limitations. 

For multiset data, it is known that simple summation-based  multiset functions  are injective, and as a result, can approximate all continuous functions on multisets \citep{zaheer2017deep}. These results have been discussed and strengthened in many different recent publications \citep{Wagstaff2019OnTL,amir2023neural,pmlr-v237-tabaghi24a,wang2023polynomial}.  

For \textit{graph neural networks} (GNNs) the situation is more delicate, as all known graph neural networks with polynomial complexity have  limited expressive power. Our focus in this paper will be on the Message Passing Neural Network (MPNN) \citep{Gilmer2017NeuralMP} paradigm, which includes a variety of popular GNNs  \citep{xu2018how, Gilmer2017NeuralMP, kipf2017semisupervised}. 
In their seminal works, \citet{xu2018how, morris2019WL} analyze the separation capabilities of MPNNs, showing that an MPNN $f$ can separate two graphs $G$ and $H$ (i.e., $f(G) \neq f(H)$) only if the Weisfeiler-Lehman (WL) isomorphism test \citep{weisfeiler1968reduction} can also do so.
Accordingly, maximally expressive MPNNs are those which are able to separate all graph pairs which are WL-separable. They further show that MPNNs which employ injective multiset functions are maximally expressive.

The theoretical ability of a permutation invariant network to separate a pair of objects (multisets/graphs) is a necessary condition for all learning tasks which require such separation, e.g., binary classification tasks where the two objects have opposite labels. However, while current theory ensures separation, it tells us little about the separation quality, so  that  embeddings of "separable" objects may be extremely  similar, to the extent that in some cases, graphs which can be theoretically separated  are completely identical on a computer with fixed finite precision. This pheonomena was observed for graphs with analytic activations and very small width, see \citep{bravo2024dimensionality}, figure  2 top.

In a non-parametric setting, separation quality of a function $f$ can be studied via  bi-\Holder{}  stability guarantees:
for metric spaces $(X,d_X)$ and $(Y,d_Y)$, a function $f:X \to Y $ is $\beta$ \textit{upper-\Holder{}} and $\alpha$ \textit{lower-\Holder{}}  if there exist some positive constants $c,C$ such that 
 \[c d_X(x,x')^\alpha \leq   d_Y(f(x),f(x'))\leq C d_X(x,x')^\beta, \forall x,x'\in X. \]
The upper and lower \Holder{} conditions guarantee that embedding distances in $Y$ will not be much larger, or much smaller, than distances in the original space $X$ (in our case a space of multisets or graphs). 
A function $f$ will have higher separation quality the closer the \Holder{}  exponents are to one. When $\beta=1$ we say $f$ is (upper) Lipschitz, and when $\alpha=1$ we say that $f$ is lower Lipschitz.  

Bi-Lipschitz stability analysis is a central topic in  the study of frames \cite{balan1997stability,alexeev2012full}, phase retrieval \cite{bandeira2014saving,cheng2021stable}, and several group-invariant learning scenarios \cite{cahill2020complete,cahill2024towards}. For multisets, Bi-Lipschitz stability results are known for multiset-functions  based on max-filters \cite{cahill2024group} or  sorting \cite{balan2022permutation}. However, this is not the case for more common sum-based multiset functions: \citet{amir2023neural} showed that \emph{Relu-sum} multiset functions, which are based on summation of point-wise applied ReLU networks (see  \eqref{eq:msigma} below) are never even injective. Multiset functions which use  \emph{smooth activations} instead, are injective but  not lower-Lipschitz. Accordingly, a remaining challenge, which we will address in this paper, is characterizing the lower-\Holder{} stability of sum-based multiset functions.

For MPNN architectures for graphs,  there are even less stability results. The upper Lipschitz-ness of the MPNN  GIN \citep{xu2018how} and similar architectures was established in \citep{chuang2022tree}, however lower Lipschitz or \Holder{} guarantees have not been addressed to date.  Indeed, a recent survey \citep{morris2024future} lists bi-Lipschitz guarantees for MPNNs as one of the future challenges for theoretical GNN research. We will address this challenge in this paper as well.

As a first step for our stability analysis, we need to establish how to extend notions of \Holder{} stability to parametric functions. One simple approach is requiring the parametric function to be \emph{uniformly} \Holder{}, with the same exponent and constant regardless of the parameter choice. Indeed, we will show that all parameteric functions we consider in this paper are uniformly \emph{upper}-Lipschitz. 

In contrast, we believe the notion of uniform \emph{lower}-\Holder{} to be too stringent. For example, as mentioned above,  multiset networks based on ReLU activations are never injective, and so can't be uniformly lower-\Holder{}. Nonetheless, numerical estimates of the stability of such networks show that wide ReLU networks have comparable or even better stability than injective smooth-activation-based multiset functions (see \cite{amir2023neural} Figure 2). Accordingly,  we resort to a probabilistic framework of \emph{\Holder{} stability in expectation}, as presented in section \ref{sec:framework}.

\begin{wrapfigure}[15]{r}{0.4\columnwidth}
    \vspace{-1.25 em}
    \centering
    \includegraphics[width=1\linewidth]{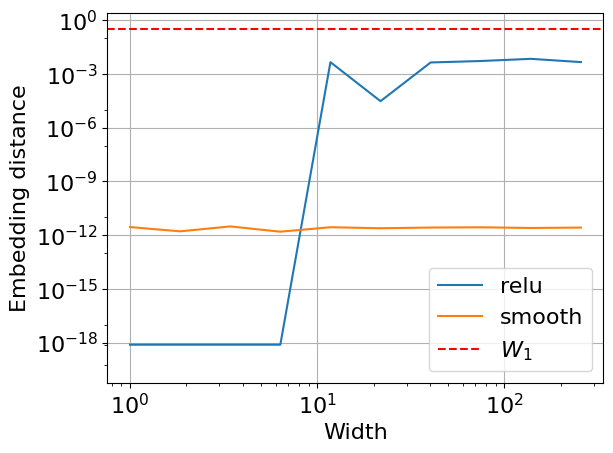}
    \caption{Separation quality on a single adversarial multiset-pair constructed as described in appendix \ref{app:adversarial_multiset_experiment}.}
    \label{fig:relu_vs_smooth_per_width}
\end{wrapfigure}

\paragraph{Main results: multisets}

With respect to our new relaxed notion of expected lower-\Holder{}, we show that ReLU-sum multiset networks have an expected lower-\Holder{} exponent of $\alpha=3/2$. Surprisingly, while smooth activations lead to injectivity, we find that their expected \Holder{} exponent is much worse: at best it is equal to the maximal number of elements in the multiset data, $\alpha\geq n$. This scenario is summarized in Figure \ref{fig:relu_vs_smooth_per_width}: For a given pair of multisets, smooth activations will separate even with a very small network width, but the separation quality can be very poor. In contrast, shallow Relu-sum networks may not attain separation. However, as width increases separation will be obtained (with high probability), and the expected quality of separation will surpass the quality attained with smooth activations.

The relatively moderate exponent $\alpha=3/2$ of ReLU networks is guaranteed only when the range of the bias of the network contains the range of the multiset features, an assumption which may be problematic to fulfill in practice. To address this, we suggest an \emph{Adaptive ReLU} summation network, where the bias is adapted to the values in the multiset. This network is guaranteed the same $3/2$ exponent while overcoming the range issue.

Finally, we consider multiset functions based on linear functions and column-wise sorting. These were shown to be bi-Lipschitz, when wide enough, in \citep{balan2022permutation}. We show that they  are lower Lipschitz \emph{in expectation} even with a width of 1.

\begin{wraptable}[12]{r}{0.35\columnwidth}
    \vspace{-1.25 em}
    \centering
    \caption{Summary of lower-\Holder{} exponent bounds for multiset (top) and graph (bottom) models.}
    \label{tab:expnoent_bounds}
    \vspace{0.75 em}
    \scalebox{0.77}{
    \setlength{\tabcolsep}{3pt} %
    
    \begin{tabular}{l|c}\hline
         Model & bounds\\\hline\\[-1.5\medskipamount]
         relu sum & $\alpha=3/2$\\\\[-1.5\medskipamount]
         adaptive relu & $\alpha=3/2$\\
         smooth sum & $n\le\alpha$\\
         \textbf{sort based} & $\alpha=1$\\\hline\\[-1.5\medskipamount]
         ReluMPNN & $1+\frac{K+1}{2}\leq \alpha $\\\\[-1.5\medskipamount]
         SmoothMPNN & $2^{K+1} \leq \alpha$\\
         \textbf{SortMPNN} & $\alpha=1$\\
         \hline
    \end{tabular}}
    
\end{wraptable}
\paragraph{Main Results: MPNNs}
MPNNs are constructed upon multiple application of multiset functions to node neighborhoods. We consider four MPNNs based on the four multiset functions we analyzed: SortMPNN, ReluMPNN, SmoothMPNN and AdaptMPNN. We show that SortMPNN is lower-Lipschitz in expectation, even with a width of 1. SortMPNN is thus the first MPNN with both upper and lower Lipschitz guarantees. In contrast, we show that ReluMPNN and SmoothMPNN are only lower-\Holder{}, with an exponent that deteriorates as the MPNN depth grows.  The exponent bounds of both the graph and multiset models appear in table \ref{tab:expnoent_bounds}.

Our analysis provides an adversarial example illustrating the deterioration of the expected lower-\Holder{}  exponent of sum-based MPNNs as the depth increases. We show in Table \ref{tab:epsilon-Tree} that sum-based MPNNs fail to learn a simple binary classification task on such data, while SortMPNN and AdaptMPNN handle this task easily. We also provide experiments on several graph datasets which show that SortMPNN often outperforms standard MPNNs on non-adversarial data, and is more robust to reduction in model size.

\subsubsection{Notation} $S^{d-1}$ denotes the unit sphere in $\RR^d$. 
The notation `$a\sim S^{d-1}$ and $b\sim [-B,B]$' implies that the distribution on  $a$ (respectively $b$) is taken to be uniform on $S^{d-1}$ (respectively $[-B,B]$), and that $a$ and $b$ are drawn independently. $x \cdot y $ denotes the inner produce of $x,y$.

\section{General framework}\label{sec:framework}
In this section, we present the framework for analyzing separation quality of parametric functions. We begin by extending the notion of \Holder{}  stability to a family of parametric functions.

\subsection{\Holder{} stability for parametric functions}\label{sub:Holder}
Let  $(X,d_X)$ and $(Y,d_Y)$  be  psuedo-metric\footnotemark spaces. Let $W$ be some set of parameters,  and let $f(x;w):X \times W \to Y$ be a parametric function. We say that $f(x;w)$ is \emph{uniformly} Lipschitz with constant $L>0$,  if  for all $w\in W$, the 
function $x\mapsto f(x;w)$ is $L$-Lipschitz. This definition can naturally be extended to other upper-\Holder{}  exponents, but this will not be necessary as the parametric functions we discuss in this paper will all end up being uniformly Lipschitz. Similarly, we could define a notion of uniform lower-\Holder{}, but as discussed in this introduction, this notion is too stringent for the problems we are discussing, and so we introduce the alternative notion of lower-\Holder{} \emph{in expectation}: 

\footnotetext{A psuedo metric $d_X$ differs from a standard metric only in that the $d_X(x_1,x_2)=0$ is possible even if $x_1\neq x_2 $.  }

\begin{definition}
\label{HolderExpectation}
Let $\alpha>0,p\geq 1 $. Let $(X,d_X)$ and $(Y,d_Y)$  be  psuedo-metric spaces, and let $(W,\mathcal{B},\mu)$ be a probability space. Let $f(x;w):X \times W \to Y$ be a parametric function, so that for every fixed $x\in X $ the function $f(x;\cdot) $ is measurable.

We say that $f$ is $\alpha$ lower-\Holder{} \emph{in expectation} if there exists a strictly positive
$c$ such that
\[
c^p\leq   \EE_{w\sim \mu} \left\{ \frac{d_Y(f(x;w),f(x';w))}{d_X(x,x')^\alpha} \right\} ^p, \quad \forall(x,x')\in X\times X, \text{ with } d(x,x')>0  \]   
\end{definition}

A few remarks on this definition are in order. Firstly, we note that if the set of $w$ for which $f(x;w)$  is $\alpha$ lower-\Holder{} has positive probability, then  $f(x;w)$ will be $\alpha$ lower-\Holder{} in expectation. The opposite is not true: Relu-sum networks for multisets are never injective, and hence never lower-\Holder{}, but we will show that they are lower-\Holder{} in expectation. This is possible since it is defined in a pairwise sense, similar to the separation analysis in \citet{morris2019WL}.

Next, we note that our definition takes an expectation over the parameters, but requires uniform boundedness over  $x,x'$ pairs. This choice was made since data distribution is unknown, while the parameter distribution at initialization can be chosen by the algorithm. Furthermore, we conjecture that bad separation at initialization will be difficult to overcome during training.  Supporting evidence for this claim will be shown later on (see table \ref{tab:epsilon-Tree}), where  models with a large lower-\Holder{} in expectation exponent, fail to learn an adversarial binary classification task. 

Thirdly, we note that \Holder{} in expectation exponents that are closer to one will be regarded as having higher separation quality. 
Fourthly, $p$ in the definition refers to the $\ell_p$ norm chosen for the feature spaces.  Results stated in the introduction are for the default $p=2$.

\phantomsection
\label{par:variance_reduction}
Finally, while the expected value across random parameters  is informative as is, the variance can potentially have a strong effect on the separation in practice. Luckily,  our analysis will typically apply for  models $f(x;w)$ of  width $1$, and models with width $W$ can be viewed as  stacking of $W$ multiple model instances. Wider models will have the same expected distortion as the width one model, but the variance will converge to zero, in a rate proportional to $1/W$. 
For more details see Appendix \ref{app:reducing_variance}.

We now proceed to analyze  \Holder{} stability for parametric models on multisets.

\section{Multiset \Holder{}  stability}\label{sec:multisets}
\textbf{Standing assumptions:}
Throughout the rest of this section, we will say that $(n,d,p,\Omega,z)$ satisfy our standing assumptions, if $n,d$ are natural numbers,
$p\geq 1$, the set $\Omega$ is a compact subset of $\RR^d$, and $z$ will be a point in $\RR^d\setminus \Omega$. 

A \emph{multiset} $\lms x_1,\ldots,x_k\rms$ is a collection of $n$ unordered elements where (unlike sets) repetitions are allowed. We denote by $\mathcal{S}_{\le n}(\Omega)$ and $\mathcal{S}_{= n}(\Omega)$  the space of all multisets with at most $n$ elements (respectively, exactly $n$ elements), which reside in $\Omega$.

 A popular choice of a metric on $\mathcal{S}_{= n}(\Omega)$ is the Wasserstein metric 
\[
{W}_{1}(\lms x_1,\ldots,x_n \rms,\lms y_1,\ldots,y_n \rms) =\min_{\tau \in S_n} \sum_{j=1}^n \|x_j-y_{\tau(j)}\|_{1}
\]

Following \citep{chuang2022tree}, we extend the Wasserstein metric to the space  $\mathcal{S}_{\le n}(\Omega)$ by  the \textit{augmentation} map which adds the vector $z$ to a given multiset until it is of maximal size $n$:
$$\rho_{(z)}\left(\lms x_1,\ldots,x_k\rms \right)=\lms x_1,\ldots,x_k,x_{k+1}=z,\ldots.x_n=z\rms $$
This induces the \textit{augmented Wasserstein} metric $W_{1}^z $ on $\mathcal{S}_{\le n}(\Omega)$, 
$$W_{1}^z(S_1,S_2)=W_{1}(\rho_{(z)}(S_1),\rho_{(z)}(S_2)), \quad S_1,S_2\in \mathcal{S}_{\le n}(\Omega) $$

Now that we have introduced the augmented Wasserstein as the metric we will use for multisets, we turn to analyze the multiset neural networks we are interested in. 

A popular building block in architectures for multisets \citep{zaheer2017deep}, as well as MPNNs \citep{xu2018how,Sch_tt_2017,Gilmer2017NeuralMP}, is based on summation of element-wise neural network application. For a given activation $\sigma:\RR \to \RR$, we define a parametric function $m_\sigma$ on multisets in  $\mathcal{S}_{\le n}(\Omega)$, with parameters $(a,b)\in \RR^d\oplus \RR$, by 
\begin{equation}\label{eq:msigma}
m_\sigma(\lms x_1,\ldots,x_r \rms;a,b)=\sum_{i=1}^r \sigma(a\cdot x_i-b).
\end{equation}
We note that we focus on $m_\sigma$ with scalar outputs, since the variant with vector output has the same \Holder{} properties, as discussed in the end of Section \ref{sec:framework}.

\subsubsection{ReLU summation}
We first consider the expected \Holder{} stability of $m_\sigma$, when $\sigma=\relu$:

\begin{restatable}{theorem}{ReluMoments}
    \label{ReluMoments}
    For $(n,d,p,\Omega,z)$ satisfying our standing assumptions, assume that $a\sim S^{d-1}$ and $b \sim[-B,B] $. Then $m_{ReLU}(\cdot;a,b) $ is uniformly Lipschitz. Moreover,
    \begin{enumerate}
        \item $m_{ReLU}(\cdot;a,b) $ is not $\alpha $ lower-\Holder{} in expectation for any $\alpha<\frac{p+1}{p}$.
        \item If $\|x\|<B$ for all $x\in \Omega$, then $m_{ReLU}(\cdot;a,b) $ is  $\frac{p+1}{p}$ lower-\Holder{} in expectation.
    \end{enumerate}
\end{restatable}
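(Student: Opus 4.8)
\emph{Overview and uniform Lipschitzness.} The plan is to treat the three assertions in turn, in each case conditioning on the random direction $a\in S^{d-1}$ to reduce to a one-dimensional question. Throughout, for multisets $S_1,S_2$ (augmented to size $n$, with elements $x_1,\dots,x_n$ and $y_1,\dots,y_n$) write $D_a(b)=m_{\relu}(S_1;a,b)-m_{\relu}(S_2;a,b)=\sum_i\relu(a\cdot x_i-b)-\sum_j\relu(a\cdot y_j-b)$. For the uniform Lipschitz bound, fix $(a,b)$, pass to the size-$n$ augmentations, and pick the optimal matching $\tau$ realizing $W_{1}^z(S_1,S_2)$. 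Since $\relu$ is $1$-Lipschitz and $\|a\|_2=1$, $|D_a(b)|\le\sum_i|a\cdot(x_i-y_{\tau(i)})|\le\sum_i\|x_i-y_{\tau(i)}\|_2\le C_{d,p}\,W_{1}^z(S_1,S_2)$. The one subtlety is that $m_{\relu}$ acts on the unpadded multiset, producing a correction $(n-|S_i|)\relu(a\cdot z-b)$; but $\mathrm{dist}_p(\Omega,\{z\})>0$ gives $W_{1}^z(S_1,S_2)\ge\big||S_1|-|S_2|\big|\cdot\mathrm{dist}_p(\Omega,\{z\})$, so this term too is controlled by a constant times $W_{1}^z(S_1,S_2)$, uniformly in $(a,b)$.

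\emph{Claim 1 (sharpness).} For $n\ge 2$ pick $x_0$ in the interior of $\Omega$ and a unit vector $v$ with $x_0\pm\epsilon v\in\Omega$ for small $\epsilon$, and set $S_1^\epsilon=\lms x_0,x_0\rms$ and $S_2^\epsilon=\lms x_0-\epsilon v,\,x_0+\epsilon v\rms$. For small $\epsilon$, $W_{1}^z(S_1^\epsilon,S_2^\epsilon)=2\epsilon\|v\|_p$ (match the two copies of $x_0$ to the perturbed points and the $z$-paddings to each other), while the padding terms cancel in the difference and
\[
D_a(b)=2\relu(s)-\relu(s-\epsilon t)-\relu(s+\epsilon t)=-\max\{0,\epsilon|t|-|s|\},\qquad s=a\cdot x_0-b,\ \ t=a\cdot v .
\]
This is nonzero only on $\{|s|<\epsilon|t|\}$, a $b$-event of probability $\le\epsilon/B$, and there $|D_a(b)|\le\epsilon|t|\le\epsilon$; hence $\EE_{a,b}\big[|D_a(b)|^p\big]\le\epsilon^{p+1}/B$. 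Dividing by $W_{1}^z(S_1^\epsilon,S_2^\epsilon)^{\alpha p}=\Theta(\epsilon^{\alpha p})$ gives a quantity of order $\epsilon^{\,p+1-\alpha p}\to0$ as $\epsilon\to0$ whenever $\alpha<\tfrac{p+1}{p}$; so $m_{\relu}$ is not $\alpha$ lower-\Holder{} in expectation for such $\alpha$.

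\emph{Claim 2 (the lower bound).} Now assume $\|x\|<B$ on $\Omega$, so $|a\cdot x|<B$ for all $a\in S^{d-1}$; take $S_1,S_2$ with exactly $n$ elements (the general case follows by augmentation, the $z$-terms treated as above). The starting point is the identity, for fixed $a$,
\[
D_a(b)=\int_b^B\Delta_a(t)\,dt,\qquad \Delta_a(t)=\#\{i:a\cdot x_i>t\}-\#\{j:a\cdot y_j>t\},
\]
where $\Delta_a$ is an integer step function with at most $2n$ jumps, supported in $(-B,B)$, and $\int_\RR|\Delta_a(t)|\,dt$ equals the one-dimensional Wasserstein distance between the projected multisets $\lms a\cdot x_1,\dots,a\cdot x_n\rms$ and $\lms a\cdot y_1,\dots,a\cdot y_n\rms$. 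I would then prove a \emph{one-dimensional lemma}: for any such $\Delta_a$ with $\int|\Delta_a|=\ell$, $\ \tfrac{1}{2B}\int_{-B}^{B}|D_a(b)|^p\,db\ge \tfrac{c_p}{n^p B}\,\ell^{\,p+1}$, the extremal profile being a train of at most $n$ equal "balanced bumps" of width $\ell/n$. Taking expectations (first over $b$, then over $a$), applying Jensen for $t\mapsto t^{p+1}$ to pass to the expected one-dimensional distance, and finally bounding the expected (sliced) one-dimensional Wasserstein distance below by a dimension- and $n$-dependent multiple of $W_{1}^z(S_1,S_2)$ — valid because the multisets lie in a fixed ball — yields $\EE_{a,b}\big[|D_a(b)|^p\big]\ge c^p\,W_{1}^z(S_1,S_2)^{\,p+1}$, i.e. $m_{\relu}$ is $\tfrac{p+1}{p}$ lower-\Holder{} in expectation.

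\emph{Main obstacle.} The projection/integration bookkeeping is routine; the two genuinely non-trivial points, both in Claim 2, are (a) the one-dimensional lemma — one must rule out oscillatory step functions $\Delta_a$ that keep $D_a$ uniformly small, which is exactly where the "at most $2n$ jumps" constraint (hence the $n^{-p}$ factor) is used — and (b) the lower bound on the expected sliced one-dimensional Wasserstein distance in terms of $W_{1}^z$, which is the source of the dimension- and $n$-dependent, but $\alpha$-independent, constant $c$.
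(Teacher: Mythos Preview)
Your treatment of uniform Lipschitzness and of Claim~1 is essentially the paper's own argument (the same $\pm\epsilon$ pair, the same computation that $D_a(b)$ is supported on a $b$-interval of length $O(\epsilon)$ with values $O(\epsilon)$). For Claim~2 you take a genuinely different route. The paper decomposes $m_{\relu}$ as $q\circ s$ with $s(X;a)=\sort(a^T X)$ and $q(x;b)=\sum_i\relu(x_i-b)$; it invokes the bi-Lipschitzness of the sort embedding \cite{balan2022permutation} to handle $s$, and proves a one-dimensional lemma for $q$ by a \emph{constructive} argument: for sorted $x,y$ with $|y_s-x_s|=\|y-x\|_\infty$, it exhibits a specific bias $b_0\in\{x_s,y_s\}$ with $|D(b_0)|\ge\tfrac12\|y-x\|_\infty$, then uses that $D$ is $2n$-Lipschitz in $b$ to get $|D(b)|\ge\tfrac14\|y-x\|_\infty$ on an interval of length $\sim\|y-x\|_\infty/n$. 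Your version replaces this by the integral identity $D_a(b)=\int_b^B\Delta_a$, a \emph{variational} 1D lemma bounding $\int|D_a|^p$ below by $c_p n^{-p}\ell^{p+1}$ (extremal profile: $\sim n$ balanced bumps), followed by Jensen and a sliced-Wasserstein lower bound. Both decompositions ultimately lean on the same non-trivial fact---your point~(b), that $\EE_a[\ell_a]\gtrsim W_1^z(S_1,S_2)$, is exactly the $p=1$ case of the paper's sort lemma, and ``the multisets lie in a fixed ball'' is not by itself a proof of it; you need the bi-Lipschitz result for the sort map (or an equivalent compactness/injectivity argument). What your approach buys is a clean CDF/layer-cake picture and a slightly sharper $n$-dependence in the 1D step; what the paper's approach buys is that the 1D lemma is proved by an elementary two-line estimate rather than by identifying an extremal step-function profile, which you have stated but not argued.
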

\paragraph{Proof idea: the $\pm \epsilon$ example.}\label{par:epsilon_multisets} The following example, which we nickname the $\pm \epsilon$ example, gives a good intuition for the \Holder{}  behavior of $m_\relu$. Denote $X_\epsilon=\lms -\epsilon, \epsilon \rms $ and $X_{2\epsilon}=\lms -2\epsilon, 2\epsilon \rms $. The Wasserstein distance between these two multisets is proportional to $\epsilon$. Now, let us consider the images of these multisets under $m_\relu$, where we assume for simplicity that $d=1$ and $a=1$. Note that when $b>2\epsilon$ we will get $ReLU(x-b)=0$ for all $x$ in either sets, and so $m_\relu(X_{2\epsilon};1,b)=m_\relu(X_\epsilon;1,b) $.
We will get a similar results when $b\leq -2\epsilon$. In this case, for every $x\geq -2\epsilon$ we have $\relu(x-b)=x-b$. Therefore, we will obtain that
$$m_\relu(X_{2\epsilon};1,b)=(-2\epsilon-b)+(2\epsilon-b)=(-\epsilon-b)+(\epsilon-b)=m_\relu(X_\epsilon;1,b), \quad \forall b<-2\epsilon .$$
We conclude that $|m_\relu(X_{2\epsilon};1,b)-m_\relu(X_\epsilon;1,b)|^p $ is zero for all $b$ outside the interval $(-2\epsilon,2\epsilon)$. Inside this interval of length $4\epsilon$, we will typically have $|m_\relu(X_{2\epsilon};1,b)-m_\relu(X_\epsilon;1,b)|^p\sim \epsilon^p $, so that the expectation over all $b$ will be proportional to $4\epsilon\cdot \epsilon^p\sim\epsilon^{p+1}$. To ensure that the ratio between $\epsilon^{p+1}$ and $W_p(X_{2\epsilon},X_\epsilon)^{\alpha p}\sim \epsilon^{\alpha p} $ will 
have a strictly positive lower bound as $\epsilon\rightarrow 0$
, we need to choose $\alpha \geq \frac{p+1}{p} $. The proof that $\alpha=\frac{p+1}{p}$ is actually enough, and several other details necessary to turning this argument into a full proof, are given in the appendix.

\subsubsection{Adaptive ReLU}
To attain the $(p+1)/p$ exponent in theorem \ref{ReluMoments}, we need to assume that the bias is drawn from an interval $[-B,B]$ which is large enough so that $\|x\|<B$ for all $x\in \Omega$. This assumption may be difficult to satisfy, especially in MPNNs where the features in intermediate layers  are effected by previous parameter choices. Additionally, the $\pm \epsilon$ example shows that when $b$ is outside the range of the multiset features, $m_\relu$ is not effective. Motivated by  these observations, we propose a novel parametric function for multisets, based on $\relu$, where the bias is automatically adapted to feature values. For a multiset $X=\lms x_1,\ldots,x_r \rms $ and $a\in \RR^d, t\in [0,1]$, the adaptive $\relu$ function $\Fadapt(X;a,t) $ is defined using $m=\min \{ a \cdot x_1,\ldots,a \cdot x_r\}, M=\max \{ a \cdot x_1,\ldots,a \cdot x_r\},  b=(1-t)m+tM$ and
\[
\Fadapt(X;a,t)=[r,m,M,\frac{1}{r}\sum_{i=1}^r \relu (a \cdot x_i-b)]
\]

The output of $\Fadapt$ is a four dimensional vector . The last coordinate of the vector is essentially $m_\relu$, where the bias $b$ now only varies between the minimal and maximal value of multiset features. The first three coordinates are simple invariant features of the multisets: its cardinality and minimal and maximal value.  In the appendix we prove 
\begin{restatable}{theorem}{AdaptiveRelu}
   For $(n,d,p,\Omega,z)$ satisfying our standing assumptions, assume that  $a\sim S^{d-1}$ and $t\sim[0,1]$.  Then  the function $\Fadapt:\mathcal{S}_{\le n}(\Omega)\times S^{d-1}\times [0,1] \to \RR^4 $  is uniformly Lipschitz and $\frac{p+1}{p}$ lower-\Holder{} in expectation. Moreover, when $n\geq 4$, the function $\Fadapt$  is not $\alpha $ lower-\Holder{} in expectation for any $\alpha<\frac{p+1}{p}$.
\end{restatable}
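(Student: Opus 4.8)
I would prove the two assertions separately. For \emph{uniform Lipschitzness} it suffices to bound, uniformly in $(a,t)$, the Lipschitz constant of each of the four coordinates of $\Fadapt$ with respect to $W_1^z$. The cardinality coordinate $X\mapsto r$ changes only between multisets of different cardinality, and then by at most $n$; since $\Omega$ is compact and $z\notin\Omega$, we have $\delta:=\mathrm{dist}_p(z,\Omega)>0$, and a counting argument on the augmentation shows $W_1^z(X,X')\ge\delta$ whenever $|X|\ne|X'|$, so this coordinate is $(n/\delta)$-Lipschitz. For $m=\min_i a\cdot x_i$ and $M=\max_i a\cdot x_i$, matching $X$ and $X'$ along an optimal coupling and using $\|a\|_2=1$ together with the equivalence of $\ell_2$ and $\ell_p$ on $\RR^d$ gives $|m(X)-m(X')|,|M(X)-M(X')|\le C_d\,W_1^z(X,X')$ for equal-cardinality multisets; the unequal case is absorbed by the $\delta$-gap and the boundedness of $\Omega$. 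Finally, the adaptive coordinate $g(X;a,t)=\tfrac1r\sum_i\relu(a\cdot x_i-b)$ with $b=(1-t)m+tM$ is Lipschitz in $X$ because $\relu$ is $1$-Lipschitz and $b$ is a convex combination of the (already Lipschitz) maps $m,M$; the same optimal-coupling estimate then applies, with the unequal-cardinality case again covered by the $\delta$-gap and the uniform bound $0\le g\le M-m$.

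For \emph{lower-\Holder{} continuity in expectation} I must exhibit $c>0$ with $\EE_{a,t}\big[\|\Fadapt(X;a,t)-\Fadapt(X';a,t)\|^p\big]\ge c^p\,W_1^z(X,X')^{p+1}$ for all $X\ne X'$. If $|X|\ne|X'|$ the cardinality coordinate forces $\|\Fadapt(X;a,t)-\Fadapt(X';a,t)\|\ge1$ for every $(a,t)$, while $W_1^z(X,X')$ is bounded by a constant $D_0$ depending only on $n,\Omega,z$, so the ratio is bounded below. For equal cardinalities I use the standard ``uncrossing'' fact that $W_1^z(X,X')$ equals the ordinary Wasserstein distance $W:=W_1(X,X')$, and I bound $\|\Fadapt(X;a,t)-\Fadapt(X';a,t)\|^p$ coordinatewise from below by a constant times $|m-m'|^p+|M-M'|^p+|g-g'|^p$. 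It then suffices to prove, for every fixed direction $a$, the one-dimensional estimate $|m-m'|^p+|M-M'|^p+\EE_t[|g-g'|^p]\gtrsim W_1(\pi_a X,\pi_a X')^{p+1}$ (where $\pi_a$ is projection onto $a$), and then to integrate over $a$: combining this with a sliced-Wasserstein comparison --- $\EE_{a\sim S^{d-1}}[W_1(\pi_a X,\pi_a X')]\ge c(d,n)\,W$, valid for multisets of at most $n$ points --- and Jensen's inequality (the power $p+1\ge1$ is convex) yields $\EE_{a,t}[\,\cdot\,]\gtrsim \EE_a[W_1(\pi_a X,\pi_a X')^{p+1}]\ge (c(d,n)W)^{p+1}$.

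The one-dimensional estimate is where adaptivity enters, and it is proved by a case split on $a$. Abbreviate $m,M,m',M'$ for the min/max of $\pi_a X,\pi_a X'$. If $\max(|m-m'|,|M-M'|)\ge\tfrac12 W_1(\pi_a X,\pi_a X')$, then, using $W_1(\pi_a X,\pi_a X')\le D_0$, the $m,M$ terms alone already dominate $W_1(\pi_a X,\pi_a X')^{p+1}$. The same two terms handle the degenerate case $m=M$: up to a null set of directions $a$ this means $X$ is a constant multiset (whence $g(X;a,\cdot)\equiv0$), which is then separated from $X'$ through the $m,M$ coordinates alone. In the remaining case the projected ranges of $X$ and $X'$ essentially coincide and are non-degenerate, so the affine renormalization $y_i:=(a\cdot x_i-m)/(M-m)$ gives $g(X;a,t)=(M-m)\cdot\tfrac1r\sum_i\relu(y_i-t)$ with $t\sim[0,1]$ --- and now the bias variable $t$ ranges over the \emph{full} interval $[0,1]$, which by construction contains all renormalized projections of $X$ and, in this case, approximately those of $X'$ as well. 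This is precisely the ``bias range covers the features'' hypothesis whose failure was the limitation of Theorem~\ref{ReluMoments}(2); applying the one-dimensional estimate at the core of that theorem to the renormalized multisets and undoing the rescaling (the prefactor $(M-m)^p$ from $g$ cancels all but one power of $M-m$ coming from $W_1(\text{renorm.})=W_1(\pi_a X,\pi_a X')/(M-m)$, and $M-m\le D_0$) yields $\EE_t[|g-g'|^p]\gtrsim W_1(\pi_a X,\pi_a X')^{p+1}$, as needed.

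\paragraph{Main obstacle.} The delicate point is the coupling between the bias and the data: the renormalization that makes $[0,1]$ cover the features is $a$- and $X$-dependent, so $X$ and $X'$ get rescaled by possibly different affine maps. One must partition the sample space of $a$ so that the regime of disagreeing maps --- and the degenerate/constant-multiset regime --- is genuinely absorbed by the $m,M$ coordinates, while the remaining regime is genuinely controlled by the renormalized $\relu$ sum, and then track the $d$- and $n$-dependent constants through the sliced-Wasserstein comparison without letting the exponent slip below $\tfrac{p+1}{p}$ (the exponent itself being inherited verbatim from the $\pm\epsilon$ computation behind Theorem~\ref{ReluMoments}).
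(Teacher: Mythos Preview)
Your high-level plan matches the paper's: reduce to one dimension via the sort map (your sliced-Wasserstein comparison is exactly the lower-Lipschitz-in-expectation property of $s(X;a)=\mathrm{sort}(a^TX)$ from Lemma~\ref{lem:sort_basic}), then establish a one-dimensional $(p+1)/p$ estimate and push it back through Jensen. The uniform upper-Lipschitz part and the unequal-cardinality case are handled correctly.

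The gap is in your one-dimensional ``remaining case''. When you renormalize by $(m,M)=(\min\pi_aX,\max\pi_aX)$, the renormalized second multiset $\tilde Y'=(\pi_aX'-m)/(M-m)$ need not lie in $[0,1]$: even under your Case~2 hypothesis $|m-m'|,|M-M'|<\tfrac12 W_1(\pi_aX,\pi_aX')$, the overshoot is of order $W_1(\pi_aX,\pi_aX')/(M-m)$, which is typically much larger than the radius $\delta=\tfrac{1}{8n}\|Y-\tilde Y'\|_\infty$ that the proof of Lemma~\ref{lem:infty} needs around the critical bias value $b_0$. Since $b_0$ may be one of the $\tilde Y'$-values, the bias $t\sim[0,1]$ may miss the entire interval $[b_0-\delta,b_0+\delta]$, and the ``approximately in $[0,1]$'' remark does not rescue the estimate. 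Your ``main obstacle'' is correctly identified but not closed by the proposed case split.

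The paper resolves this without any case split and without renormalizing. Writing the one-dimensional piece as $Q(x;t)=[n,m_x,M_x,\tfrac1n\sum_i\relu(x_i-b_x)]$ on sorted vectors, with $b_x=(1-t)m_x+tM_x$, the key identity is that the $m,M$ coordinates exactly finance the replacement of the two adaptive biases $b_x,b_y$ by the \emph{common} bias $b_{xy}=(1-t)m_{xy}+tM_{xy}$, where $m_{xy}=\min(m_x,m_y)$ and $M_{xy}=\max(M_x,M_y)$. Since $|m_x-m_{xy}|+|m_y-m_{xy}|+|M_x-M_{xy}|+|M_y-M_{xy}|=|m_x-m_y|+|M_x-M_y|$ and $\relu$ is $1$-Lipschitz, a single triangle inequality gives
\[
\|Q(x;t)-Q(y;t)\|_1 \;\ge\; \tfrac1n\bigl|q(x;b_{xy})-q(y;b_{xy})\bigr|.
\]
Now $b_{xy}$ sweeps all of $[m_{xy},M_{xy}]$ as $t\sim[0,1]$, and \emph{both} $x$ and $y$ lie in $[m_{xy},M_{xy}]$ by construction, so Lemma~\ref{lem:infty} applies verbatim; the resulting factor $(M_{xy}-m_{xy})^{-1}$ is harmless since $M_{xy}-m_{xy}\le 2B$. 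The conceptual point you are missing is that the $m,M$ coordinates are not reserved for a separate endpoint case --- they are spent on \emph{every} pair to align the two data-dependent bias ranges onto a single common interval that provably covers both feature sets.
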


We note that while adaptive ReLU solves the $\pm \epsilon$ example, a similar issue can be created for adaptive ReLU with multisets such as $\lms 1,0,0,-1\rms$ and $\lms 1,\epsilon,-\epsilon,-1\rms$, which force the adaptive bias to cover the interval $[1,1]$. Nonetheless, the delicate bias range assumption is no longer an issue.

\subsubsection{Summation with smooth activation}
We now consider the \Holder{}  properties of $m_\sigma$ when the activation $\sigma$ is smooth (e.g., sigmoid, SiLU, tanh). To understand this scenario, it can be instructive to first return to the $\pm \epsilon$ example. In this example, since the elements in $X_\epsilon$ and $X_{2\epsilon} $ both sum to the same number, one can deduce that (for any choice of bias) the first order Taylor expansion of $m_\sigma(X_{2\epsilon})-m_\sigma(X_{\epsilon})$  will vanish, so that this difference will be of the order of $\epsilon^2$. Based on this example we can already see that for smooth activations $\alpha \geq 2$. However, it turns out that there are adversarial examples with much worse behavior. Specifically, note that $X_\epsilon$ and $X_{2\epsilon}$ are 'problematic' because their first moments are identical. In the same spirit, by choosing a pair $X,X' $ of distinct multisets with $n$ elements in $\RR$, whose first $n-1$ moments are identical, we can deduce that $\alpha \geq n $: 
\begin{wrapfigure}[20]{R}{0.45\columnwidth}
    \centering
    \vspace{-2 em}
    \includegraphics[width=1\linewidth]{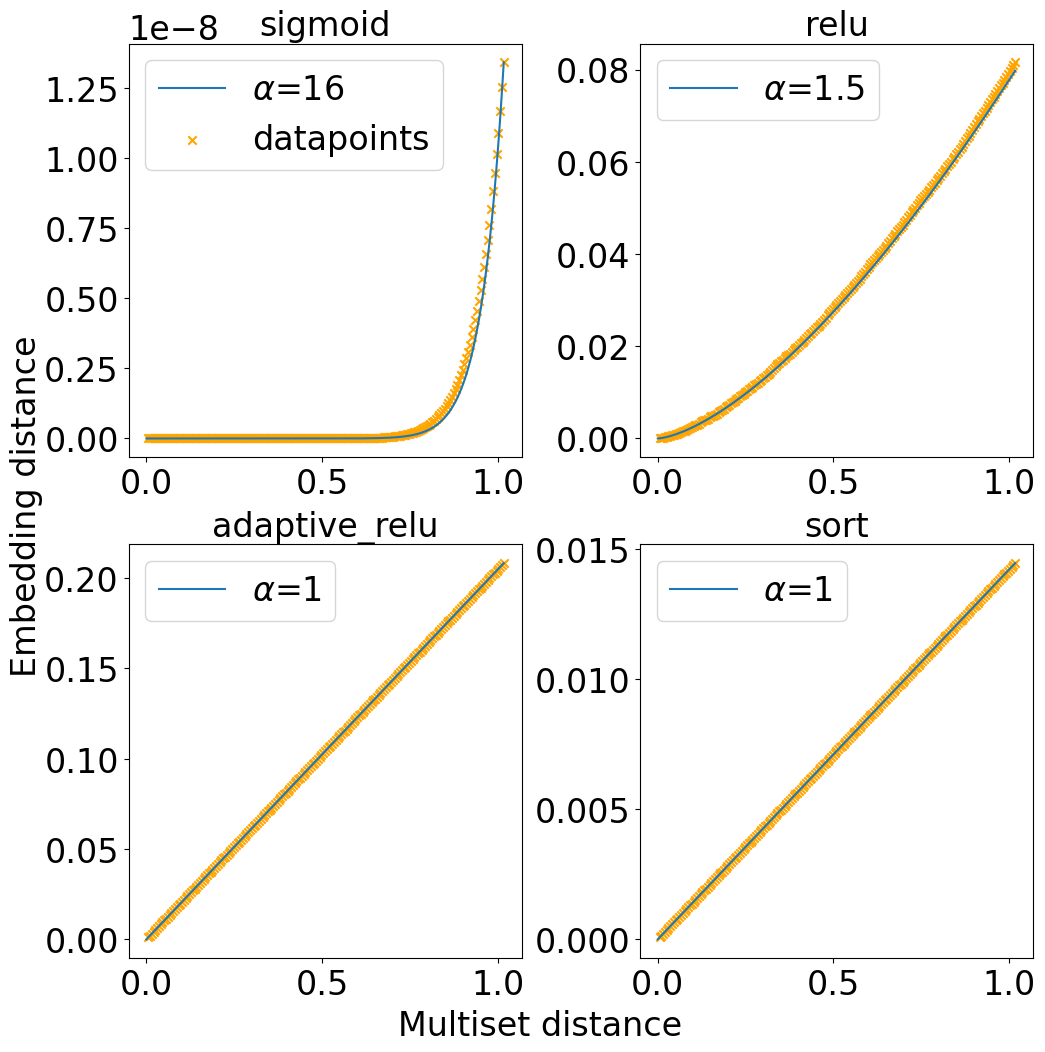}
    \caption{$l_2$ vs. $W_2$ distance on multiple adversarial multiset-pairs. 
    Results are in accordance with our theoretical results (see Table\ref{tab:expnoent_bounds})}
    \label{fig:multiset_exponents}
\end{wrapfigure}

\begin{restatable}{theorem}{SmoothMoments}\label{thm:Smooth}
 Assume $(n,d,p,\Omega,z)$ satisfy our standing assumptions,  
    $\sigma:\RR\mapsto\RR$   has $n$ continuous derivatives,  and   $a \sim S^{d-1},b\sim [-B,B]$. If  the function $m_{\sigma}(\cdot;a,b)$  is $\alpha$ lower-\Holder{} in expectation, then $\alpha \geq n$.
\end{restatable}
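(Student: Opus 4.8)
The plan is to exhibit, for the one-dimensional case $d=1$ (which suffices, since we may embed $\RR$ into $\RR^d$ along a coordinate axis and the feature $a\cdot x$ then ranges over a scaled copy of $\RR$), a family of multiset pairs $X_\epsilon, X'_\epsilon \in \SnOmega$ with $n$ elements each whose first $n-1$ power-sum moments coincide, whose Wasserstein distance $W_p(X_\epsilon,X'_\epsilon)$ is $\Theta(\epsilon)$, and for which $\EE_{a,b}\big[|m_\sigma(X_\epsilon;a,b)-m_\sigma(X'_\epsilon;a,b)|^p\big]^{1/p} = O(\epsilon^n)$. Feeding this into Definition~\ref{HolderExpectation} forces $c^p \le O(\epsilon^{np})/\Theta(\epsilon^{\alpha p})$, and letting $\epsilon \to 0$ with $c>0$ fixed yields $\alpha \ge n$. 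Concretely I would fix $n$ distinct reals $u_1,\dots,u_n$ and $n$ distinct reals $v_1,\dots,v_n$ with the same first $n-1$ moments (such configurations exist: the moment map from unordered $n$-tuples to $(\sum u_i, \sum u_i^2,\dots,\sum u_i^{n-1})\in\RR^{n-1}$ has generically $(n{-}1)$-dimensional image inside the $n$-dimensional space of tuples, so its generic fiber is a positive-dimensional set containing more than one multiset — equivalently, pick two monic degree-$n$ polynomials differing only in their constant term with all-real roots), and set $X_\epsilon=\lms \epsilon u_1,\dots,\epsilon u_n\rms$, $X'_\epsilon=\lms \epsilon v_1,\dots,\epsilon v_n\rms$, choosing the $u_i,v_i$ small enough that these multisets lie in $\Omega$ for $\epsilon$ in a neighbourhood of a fixed interior point of $\Omega$ (shift by that point; the shift is absorbed into $b$). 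The Wasserstein lower bound $W_p(X_\epsilon,X'_\epsilon)\ge c_0\epsilon$ holds because the two multisets are \emph{distinct} after rescaling, so the optimal matching cost scales linearly in $\epsilon$.

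The analytic heart is the upper bound on $\EE_{a,b}|m_\sigma(X_\epsilon;a,b)-m_\sigma(X'_\epsilon;a,b)|^p$. Fix $a\in S^{d-1}$ (a scalar $\pm 1$ when $d=1$) and $b\in[-B,B]$, and write $g(\epsilon) := m_\sigma(X_\epsilon;a,b)-m_\sigma(X'_\epsilon;a,b) = \sum_i \sigma(\epsilon a u_i - b) - \sum_i \sigma(\epsilon a v_i - b)$. Taylor-expand each $\sigma(\epsilon a u_i - b)$ about $-b$ to order $n-1$ with integral remainder: the degree-$k$ term is $\frac{(\epsilon a)^k}{k!}\sigma^{(k)}(-b)\sum_i u_i^k$, and since $\sum_i u_i^k = \sum_i v_i^k$ for $k=0,1,\dots,n-1$, all these terms cancel between the two sums. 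What remains is the order-$n$ remainder, which, using that $\sigma$ has $n$ continuous derivatives on the compact set $a\cdot\Omega - [-B,B]$ (hence $\sigma^{(n)}$ bounded there by some $M_n$), is bounded by $C_1 |a|^n \epsilon^n \sum_i(|u_i|^n + |v_i|^n) \le C_2 \epsilon^n$, with $C_2$ independent of $a$ (since $|a|=1$) and of $b$. Hence $|g(\epsilon)| \le C_2 \epsilon^n$ uniformly, so $\EE_{a,b}|g(\epsilon)|^p \le C_2^p \epsilon^{np}$, giving the claimed $O(\epsilon^n)$ in $L^p$ — in fact a deterministic, not merely expected, bound, which only makes the conclusion stronger.

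Combining: $c^p \le \EE_{a,b}\big[|g(\epsilon)|/W_p(X_\epsilon,X'_\epsilon)^\alpha\big]^p \le C_2^p \epsilon^{np}/(c_0\epsilon)^{\alpha p} = (C_2/c_0^\alpha)^p\, \epsilon^{(n-\alpha)p}$. If $\alpha < n$ the right-hand side tends to $0$ as $\epsilon\to 0^+$, contradicting $c>0$; therefore $\alpha \ge n$. I expect the main obstacle to be the two "soft" existence claims that need to be made precise: first, exhibiting the pair of multisets with $n-1$ matching moments and distinct supports lying inside $\Omega$ — handled cleanly via the polynomial-with-common-non-constant-coefficients construction, noting $\Omega$ has nonempty interior because $z\notin\Omega$ is irrelevant here but we do need $\Omega$ to contain a nondegenerate interval, which follows since $\Omega\subset\RR^d$ compact with the models nontrivial (or, if $\Omega$ could be lower-dimensional, restrict the whole argument to an affine line meeting $\mathrm{int}(\Omega)$ in an interval, and if no such line exists the multiset space is essentially a point and the statement is vacuous); and second, the uniform-in-$b$ control of the Taylor remainder, which just needs the compactness of the argument set of $\sigma$ together with the hypothesis that $\sigma \in C^n$ there. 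Neither is deep, but both should be spelled out so the quantifiers in Definition~\ref{HolderExpectation} (the constant $c$ must work for \emph{all} pairs) are respected.
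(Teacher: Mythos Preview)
Your proposal is correct and follows essentially the same route as the paper: construct two $n$-element multisets with matching first $n-1$ power sums, scale by $\epsilon$, use the $C^n$ Taylor expansion to show the embedding difference is uniformly $O(\epsilon^n)$ while the Wasserstein distance is $\Theta(\epsilon)$, and conclude $\alpha\ge n$. The paper makes the existence of the moment-matching pair slightly more explicit (via a dimension-count argument and a recursive construction in an appendix), and, like you, glosses over the need for $\Omega$ to contain a small segment; otherwise the arguments coincide.
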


\subsubsection{Sort based}
We now present a fourth and final multiset parametric function. Based on ideas from \citep{balan2022permutation,dym2023low}, we consider the parametric function
\begin{equation}\label{eq:sort}
	S_z(X;a,b)=b\cdot \mathrm{sort}(a \cdot \rho_{(z)}(X))\end{equation}
 
where $X\in \RR^{d \times n}$ and $a,b$ are in $S^{d-1}$ and $S^{n-1} $ respectively. Note that in contrast with our previous functions, $S_z$ explicitly includes the augmentation by $z$. On the bright side, the sort-based parametric functions are lower-Lipschitz in expectation.

\begin{restatable}{theorem}{SortHolder}
 For $(n,d,p,\Omega,z)$ satisfying our standing assumptions. Assume that $a\sim S^{d-1}$ and $b \sim S^{n-1}$. Then $S_z(\cdot;a,b)$ is uniformly upper Lipschitz and lower Lipschitz in expectation.
\end{restatable}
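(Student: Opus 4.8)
I would treat the two halves separately; the upper bound is routine and the lower bound in expectation is the substantive part.

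\emph{Uniform upper Lipschitz.} The map $X\mapsto S_z(X;a,b)$ is the composition of the augmentation $\rho_{(z)}$, the coordinate-wise projection $P\mapsto a\cdot P$, sorting, and the pairing $\langle\cdot,b\rangle$, and each step is non-expansive up to a constant depending only on $n,d,p$, uniformly in $(a,b)\in S^{d-1}\times S^{n-1}$. Concretely: (i) $W_p^z(X,X')=W_1(\rho_{(z)}(X),\rho_{(z)}(X'))$ by definition; (ii) fixing an optimal matching $\tau$ for $\rho_{(z)}(X),\rho_{(z)}(X')$ and using $|a\cdot v|\le\|a\|_2\|v\|_2\le c_{d,p}\|v\|_p$ shows the two projected multisets in $\RR$ are $c_{d,p}$-close in Wasserstein distance; (iii) on multisets of $n$ reals the sorted vector is an isometry into $(\RR^n,\ell_p)$, and in particular $\|\mathrm{sort}(u)-\mathrm{sort}(v)\|_p\le\|u-\pi(v)\|_p$ for every permutation $\pi$ by the rearrangement inequality, so the matching $\tau$ controls $\|\mathrm{sort}(a\cdot\rho_{(z)}X)-\mathrm{sort}(a\cdot\rho_{(z)}X')\|_p$; (iv) H\"older's inequality gives $|\langle\cdot,b\rangle|\le\|b\|_{p'}\|\cdot\|_p\le n^{|1/2-1/p'|}\|\cdot\|_p$. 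Composing yields a uniform Lipschitz constant $C=C(n,d,p)$.

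\emph{Lower Lipschitz in expectation: reduction.} Since $b\sim S^{n-1}$ is independent of $a$, rotational invariance gives $\EE_{b}|\langle v,b\rangle|^p=\gamma_{n,p}\|v\|_2^p$ for a constant $\gamma_{n,p}>0$ and every $v\in\RR^n$; applying this with $v=\mathrm{sort}(a\cdot\rho_{(z)}X)-\mathrm{sort}(a\cdot\rho_{(z)}X')$, it suffices to show $\EE_{a\sim S^{d-1}}\bigl\|\mathrm{sort}(a\cdot\rho_{(z)}X)-\mathrm{sort}(a\cdot\rho_{(z)}X')\bigr\|_2^{p}\ge c^p\,W_p^z(X,X')^p$ for some $c=c(n,d,p)>0$, absorbing $\gamma_{n,p}$ and the finite-dimensional norm-equivalence factors into the final constant. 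Writing $P=\rho_{(z)}(X)$ and $Q=\rho_{(z)}(X')$ (both configurations of exactly $n$ points in $\RR^d$), the left side and $W_p^z(X,X')$ are each invariant under a common translation of $P,Q$ and scale by the same power under a common dilation, so the ratio is $0$-homogeneous and it is enough to bound it below on the slice $W_p^z=1$. Building on the bi-Lipschitz results for sort-based embeddings in \cite{balan2022permutation,dym2023low}, finitely many sorted linear projections form a bi-Lipschitz embedding of the quotient $\RR^{d\times n}/S_n$; in particular, for fixed $P\neq Q$ as multisets, the set $\{a\in S^{d-1}:\mathrm{sort}(a\cdot P)=\mathrm{sort}(a\cdot Q)\}$ is a finite union of hyperplanes, hence null, so $\EE_a\|\mathrm{sort}(a\cdot P)-\mathrm{sort}(a\cdot Q)\|_2^p>0$.

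\emph{Lower Lipschitz in expectation: the crux.} It remains to turn this qualitative, direction-wise injectivity into the uniform quantitative bound; this is the main obstacle, because the configuration space is not compact and points of $P$ may coincide. I would split the slice $W_p^z=1$ into a ``separated'' regime $W_p^z\ge\delta\,\mathrm{diam}(P\cup Q)$ and a ``near-diagonal'' regime $W_p^z<\delta\,\mathrm{diam}(P\cup Q)$. In the separated regime, normalizing the diameter to $1$ confines $P,Q$ (after a common translation) to a fixed compact set on which $(P,Q)\mapsto\EE_a\|\mathrm{sort}(a\cdot P)-\mathrm{sort}(a\cdot Q)\|_2^p$ is continuous and, by the null-set observation, strictly positive, hence bounded below. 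In the near-diagonal regime I would fix an optimal matching, write $Q=P+D$ with $\sum_j\|d_j\|_p=W_p^z$ small relative to $\mathrm{diam}$, and note that for every direction $a$ whose coordinates $a\cdot P$ are separated by more than $2\max_j|a\cdot d_j|$, the vectors $a\cdot P$ and $a\cdot Q$ sort in the same order, so that $\mathrm{sort}(a\cdot P)-\mathrm{sort}(a\cdot Q)$ is a permutation of $(a\cdot d_j)_j$ and $\|\mathrm{sort}(a\cdot P)-\mathrm{sort}(a\cdot Q)\|_2^2=a^\top(DD^\top)a$; integrating the $(p/2)$-th power over the directions satisfying this separation condition — whose measure tends to $1$ as $W_p^z/\mathrm{diam}\to0$ — and using that $M\mapsto\EE_a(a^\top M a)^{p/2}$ is continuous and strictly positive on the compact set $\{M\succeq0,\ \mathrm{tr}\,M=1\}$, gives $\EE_a\|\mathrm{sort}(a\cdot P)-\mathrm{sort}(a\cdot Q)\|_2^p\gtrsim\|D\|_F^p\gtrsim W_p^z{}^{p}$. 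The genuine technical points are (a) repeated points of $P$, where the separation condition fails identically and one must instead group the tied coordinates and reduce to the distinct case inside each block, the inter-block gaps remaining generic, and (b) making the measure estimate for the ``good'' set of directions uniform over $P$ in the relevant compact set and over the displacement $D$; these are codimension-type phenomena but account for most of the bookkeeping.
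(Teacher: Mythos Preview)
Your upper-Lipschitz argument and the reduction via $\EE_b|\langle v,b\rangle|^p=\gamma_{n,p}\|v\|_2^p$ are correct and match the paper's decomposition $S_z=L\circ s$. The divergence is entirely at what you call the crux.

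The paper's argument there is one line, and it uses the very result you already cite. The bi-Lipschitz theorem of \cite{balan2022permutation} says that for $N$ large enough the stacked map $s_N(X;a_1,\ldots,a_N)=(s(X;a_1),\ldots,s(X;a_N))$ is lower Lipschitz for Lebesgue almost every $(a_1,\ldots,a_N)$; hence $s_N$ is lower Lipschitz in expectation. But with the metric $d_{Y^N}^p=\frac{1}{N}\sum d_Y^p$, one has $\EE_{a_1,\ldots,a_N}\|s_N(P)-s_N(Q)\|^p=\EE_a\|s(P)-s(Q)\|^p$ (it is the average of $N$ i.i.d.\ copies), so $s$ itself is lower Lipschitz in expectation with the same constant. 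No compactness, no linearization, no case split.

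Your direct two-regime route has a gap that your points (a)--(b) do not close. You split on $W_p^z/\mathrm{diam}$, but the linearization in the near-diagonal regime needs $\max_j\|d_j\|$ small relative to the \emph{minimum gap} $\min_{i\neq j}\|P_i-P_j\|$, not the diameter. After normalizing $W_p^z=1$, if $P$ contains a pair at distance $\epsilon$ with $0<\epsilon<2$, then $|a\cdot(P_i-P_j)|\le\epsilon$ for every $a$, while you need this to exceed $2\max_k|a\cdot d_k|$, which can be of order $1$; so your good-direction set is empty for that pair, yet you are not in the separated regime either. Your grouping fix handles $\epsilon=0$ but not small positive $\epsilon$, and the ``measure tends to $1$ as $W_p^z/\mathrm{diam}\to0$'' claim is false uniformly in $P$ for exactly this reason. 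A multi-scale or induction-on-$n$ argument could repair this, but it is substantially more work than the paper's reduction, which you have the ingredients for but did not exploit.
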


The proof of this claim is that such mappings were shown to be bi-Lipschitz for all parameters, when duplicated enough times (with independent parameters) \citep{balan2022permutation}. This implies  bi-Lipschitzness in expectation for the duplicated function, which in turn implies bi-Lipschitzness in expectation for a single function since the relevant expectations are identical.

\paragraph{Summary}

In Figure \ref{fig:multiset_exponents}, we choose  a pair of distinct multisets $X,X'$ with $16$ scalar features each, which have $15$ identical moments (for details on how this was constructed see Subsection \ref{sub:equal_moments} in the appendix). The figure plots the distance between embedding distances versus the Wasserstein distance, for pairs $\epsilon X, \epsilon X'$, for varying values of $\epsilon$.  The figure illustrates that the  $\alpha=3/2$ \Holder{}  exponent of ReLU (we take $p=2$) and the $\alpha\geq n$ exponent of sigmoid summation are indeed encountered in this example. The sort and adaptive ReLU methods display a linear plot in this case. Recall that while sort is indeed lower-Lipschitz in expectation, adaptive ReLU is not. In Remark \ref{rem:adapt} in the  Appendix, we will explain why adaptive ReLU displays a linear plot in this case, and how a $3/2$ slope can be obtained by slightly changing this experiment.

\section{MPNN \Holder{}  Stability}\label{sec:MPNN}
We will now analyze the expected \Holder{}  stability of MPNNs for graphs, using the various multiset embeddings  discussed previously. We begin by defining MPNNs, WL tests, and a graph WL-metric. 

\subsection{MPNNs and WL}
We denote a graph by $G=(V,E,X)$ where $V$ are the nodes, $E$ are the edges, and $X=(x_v)_{v\in V}$ are the initial node features. Throughout this section we will assume, as in the previous section, that $\Omega$ is a compact subset of $\RR^d$, and $z$ is some fixed point in $\RR^d \setminus \Omega $.  We denote the neighborhood of a node $v$ by $\mathcal{N}_v=\lms u\in V | (u,v)\in E \rms$. MPNNs iteratively use the graph structure to redefine node features in the following manner:
    initialize
    $x_v^{(0)}=x_v$.
    
    \textbf{For $\mathbf{k=1\ldots,K}$}
    \begin{align*}
        &\text{\textit{AGGREGATE}: } c_v^{(k)} = \phi^{(k)}(\lms x_u^{(k-1)} | u\in \mathcal{N}_v \rms)\\
        &\text{\textit{COMBINE}: } x_v^{(k)}=\psi^{(k)}(x_v^{(k-1)},c_v^{(k)})
    \end{align*}
    In order to achieve a final graph embedding, a final \textit{READOUT} step is performed
    \[ \textit{READOUT}: 
    \cglobal = \upsilon(\lms x_v^{(K)} | v\in V \rms)
    \]
We note that different instantiations of the COMBINE, AGGREGATE,  and READOUT parametric functions will lead to different architectures. We will discuss several choices, and their expected \Holder{}  stability properties, slightly later on. Once such an instantiation is chosen, an MPNN will be a parametric model of the form $f(G;w)=\cglobal $, where $w$ denotes the concatenation of all parameters of the COMBINE, AGGREGATE,  and READOUT functions.
    
The \textit{Weisfeiler-Lehman} (WL) graph isomorphism test \citep{weisfeiler1968reduction, Huang2021AST} checks whether two graphs $G,H$ are isomorphic (identical up to node permutation) by applying an MPNN-like procedure while choosing the COMBINE, AGGREGATE and READOUT functions to be hash functions, and running the procedure for at most $K=max\{|V_G|,|V_H|\}$ \citep{morris2019WL} iterations. The WL test can separate many, but not all, pairs of non-isomorphic graphs.

Notably, MPNNs can only separate pairs of graphs which are WL separable, and therefore if we hope for MPNN \Holder{}  stability in expectation, we need to choose a \emph{WL metric}: a pseudo-metric on graphs which is zero if and only if the pair of graphs cannot be separated by WL. Of the several possible choices in the recent literature for such a metric 
 \citep{grohe2020word2vec,boker2021graph, chen2022weisfeiler}, we choose the family of Tree Mover's Distance ($TMD^{(K)}$) suggested in \citep{chuang2022tree}, which is  described in detail in  Appendix \ref{app:tmd}. $TMD^{(K)}$ is a distance which is non-zero if and only if two graphs can be separated by   $K$ iterations of the WL test. It is based on recursive application of the augmented Wasserstein to multisets of WL computation trees. We note that    when $K$ is large enough, $TMD^{(K)} $ is a WL metric as discussed above. 

\subsection{MPNN stability analysis}
We now consider the \Holder{} stability of MPNNs. The graph domain we consider are graphs in $\Gdomain$, the collection of graphs with up to $n$ nodes, and node features in a compact set $\Omega\subseteq \RR^d$. 

It is natural to expect that the stability properties of an MPNN will be closely related to the AGGREGATE, READOUT and COMBINE functions composing the MPNN. 
As a COMBINE function, we will choose some parametric functions which is uniformly upper-Lipschtiz, and lower Lipschitz in expectation. As the COMBINE function is a vector-to-vector function, this is easy to achieve even via a linear function $x_v^{(k)}=Ax_v^{(k-1)}+Ba_v^{(k)}.$
In Appendix \ref{app:2-tuples} we prove the Lipschitz properties of this and three other COMBINE choices. 

 The AGGREGATE and READOUT functions are multiset to vector functions, and we have discussed the \Holder{}  properties of four different such choices in Section \ref{sec:multisets}. In practice, at each iteration $k$ we take $W_k$ parallel copies of these functions with independent parameters ($W$ corresponds to the width of the MPNN). As discussed in the end of Subsection \ref{par:variance_reduction}, this maintains the same results in expectation while reducing the variance. For example, the SortMPNN architecture will be defined using the sort-based parametric functions $S_z$ from \eqref{eq:sort} via

    \begin{align*}
        &\text{\textit{AGGREGATE}: } c_{v,i}^{(k)} = S_{z_k}\left(\lms x_u^{(k-1)} | u\in \mathcal{N}_v \rms;a_i^{(k)},b_i^{(k)} \right), \quad i=1,\ldots W_k\\
        &\text{\textit{COMBINE}: } x_v^{(k)}=A^{(k)}x_v^{(k-1)}+B^{(k)}c_v^{(k)}\\
    & \textit{READOUT}: 
    \cglobal_i = S_{z_{K+1}}\left(\lms x_v^{(K)} | v\in V \rms;a_i^{(K+1)},b_i^{(K+1)}\right), \quad i=1,\ldots,W_{K+1}
        \end{align*}
Similarly, we will use the terms AdaptMPNN, ReluMPNN and SmoothMPNN to denote the MPNN obtained by replacing $S_z$ with the appropriate functions $\Fadapt$, $m_\relu$ or $m_\sigma$ with a smooth $\sigma$. 

Our first result regards the upper Lipschitz bound.

\begin{restatable}{theorem}{LipschitzMPNN} (Uniformly Lipschitz  MPNN embeddings, informal version)
Let $f: \Gdomain \rightarrow \mathbb{R}^{m}$ be an MPNN with $K$ layers. If the functions used for the aggregation $\phi^{(k)}$, combine $\psi^{(k)}$, and readout $\upsilon$ are all uniformly upper Lipschitz, then $f$ is uniformly upper Lipschitz with respect to $TMD^{(K)} $. In particular,  ReluMPNN, SmoothMPNN, AdaptMPNN and SortMPNN are all uniformly upper Lipschitz.
\end{restatable}

It would seem natural to expect that similar results will hold for lower-\Holder{} in expectation guarantees: namely, that  if the AGGREGATE, COMBINE and READOUT functions used in the MPNN are lower-\Holder{}  in expectation, then the overall MPNN will be lower-\Holder{} in expectation as well. \footnote{Indeed, we made this claim in a previous version. We thank Soutrik Sarangi and Abir De for pointing  out the error in the proof.}  Unfortunately, this isn't always the case. In Appendix \ref{app:not_composable} we give an example of a pair of graphs which cannot be separated by any choice of parameter of a  Relu-MPNN with width $1$ and depth $2$, even though they are separated by $2$ iterations of WL. 

The argument above does not rule out the possibility that a wider Relu-MPNN will be lower-\Holder{} in expectation. Indeed, we suspect that this will be the case for an appropriate width $W=W(d,n)$, but we leave a formal proof of this result for future work. We do prove that, for arbitrarily wide ReluMPNN and SmoothMPNN, we cannot obtain a very good \Holder{} exponent, with a bound that deteriorates as the depth increases:

\begin{restatable}{theorem}{lb}\label{thm:epstrees}
 Assume that ReluMPNN with depth $K$ is $\alpha$ lower-\Holder{} in expectation with respect to $TMD^{(K)}$, then  $\alpha\geq 1+\frac{K+1}{p}$. If SmoothMPNN with depth $K$ is  $\alpha$ lower-\Holder{} in expectation then $\alpha\geq 2^{K+1}$. 
\end{restatable}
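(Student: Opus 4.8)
The plan is to construct an explicit family of adversarial tree-pairs — the $\epsilon$-Trees alluded to in the text — and track how the TMD between them scales with $\epsilon$, versus how the embedding distance scales, as we add depth. I would set up the recursion so that at depth $K$ the two rooted trees $T_K, T_K'$ differ only inside a single $\pm\epsilon$-type multiset buried at the bottom, but that this difference is "invisible" to the first layer of message passing in exactly the way the $\pm\epsilon$ example is invisible to $m_\relu$ when the bias lands outside $(-2\epsilon,2\epsilon)$ (for ReLU) or is killed by the vanishing first-order Taylor term (for smooth $\sigma$). The base case is the height-two tree whose child multiset is $X_\epsilon=\lms-\epsilon,\epsilon\rms$ versus $X_{2\epsilon}=\lms-2\epsilon,2\epsilon\rms$; here $\mathrm{TMD}(T_1,T_1')\sim\epsilon$ by the computation already in the excerpt, while after one AGGREGATE the expected $p$-th moment of the discrepancy is $\sim\epsilon^{p+1}$ (ReLU) or $\sim\epsilon^{2p}$ (smooth). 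Taking $p$-th roots, the embedding discrepancy at depth one is $\sim\epsilon^{1+1/p}$ (ReLU) resp. $\sim\epsilon^{2}$ (smooth), forcing $\alpha\ge 1+1/p$ resp. $\alpha\ge 2$, which matches the claimed bound at $K=0$ after reindexing.

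The inductive step is the heart of the argument. I would build $T_{K+1}$ from $T_K$ by attaching, at each leaf, a fresh copy of the depth-one gadget, so that the root of $T_{K+1}$ sees a multiset of subtree-embeddings whose pairwise discrepancy is governed by $T_K$'s discrepancy. Two quantitative facts must be propagated together through the recursion: (i) the TMD between $T_{K+1}$ and $T_{K+1}'$ is still $\Theta(\epsilon)$ — the tree distance is a sum over depths of Wasserstein terms, and we arrange the construction so the dominant contribution is the single bottom-level $\epsilon$-gap and is not amplified by going up; (ii) the expected embedding discrepancy picks up one more factor that worsens the exponent: for ReLU, the extra AGGREGATE layer multiplies the relevant "effective $\epsilon$" by sending a discrepancy of size $\delta$ to one of expected $p$-norm $\sim\delta\cdot$(something bounded) but crucially contributes an extra $\epsilon$ from the measure-of-bad-bias argument, giving the additive $+1/p$ per layer; for smooth $\sigma$, the vanishing-first-moment phenomenon squares the discrepancy each time a layer is added (because the children-multiset again has a matched first moment up to the relevant order), giving the multiplicative $2$ per layer and hence $2^{K+1}$. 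So the induction hypothesis is a pair of bounds "$\mathrm{TMD}\sim\epsilon$ and embedding-discrepancy $\lesssim\epsilon^{a_K}$ with $a_K=1+(K+1)/p$ (resp. $a_K=2^{K+1}$)", and the step advances $a_K\to a_{K+1}$ by the mechanism just described; then $\alpha$ lower-\Holder{} in expectation forces $c^p\le \epsilon^{p a_K}/\epsilon^{p\alpha}$ for all small $\epsilon$, hence $\alpha\ge a_K$.

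I would carry this out in the order: (1) define the $\epsilon$-Trees recursively and compute $\mathrm{TMD}(T_K,T_K')$, showing it is $\Theta(\epsilon)$ uniformly in $K$ (using the recursive TMD formula and the fact that augmentation by $z$ contributes equally on both sides); (2) prove the base-case embedding estimate, essentially re-deriving the $\pm\epsilon$ computation but now tracking the expectation over $(a,b)$ or $(a,t)$ — for smooth $\sigma$ this needs the second-order Taylor remainder bound, uniform over the compact feature range; (3) state the one-layer propagation lemma — "if the input subtree-embeddings differ by $\delta$ in the appropriate sense, the output of one AGGREGATE+COMBINE differs (in expected $p$-th moment) by at most $C\delta^p\cdot\epsilon$ for ReLU, resp. $C\delta^{2p}$ for smooth, while lower bounds hold for our specific non-generic configuration" — and prove it by the bad-bias/Taylor argument; (4) run the induction and conclude. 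The main obstacle I expect is step (3) in the ReLU case and the bookkeeping in step (1): I must ensure the $\epsilon$-Tree construction simultaneously keeps the TMD from inflating as depth grows (so the denominator stays $\Theta(\epsilon^{\alpha})$) while making the numerator degrade by precisely one factor of $\epsilon^{1/p}$ per layer — the geometry of how the bias range at layer $k$ depends on the feature ranges produced by layers $<k$ is exactly the delicacy the Adaptive ReLU section was introduced to sidestep, and getting a clean, parameter-distribution-independent lower bound on the "fraction of bad bias" at every level is where the real work lies. The smooth case is cleaner because the squaring is forced by an exact moment-matching identity rather than a measure estimate, so I would present it second but expect it to be the easier half.
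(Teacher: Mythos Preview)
Your high-level strategy is correct and matches the paper: build a recursive family of tree-pairs, show $\mathrm{TMD}\sim\epsilon$, and show the expected $p$-th power of the embedding gap decays as $\epsilon^{p+K+1}$ (ReLU) or $\epsilon^{p\cdot 2^{K+1}}$ (smooth). The two mechanisms you name --- the small-measure-of-bias event for ReLU and the vanishing first Taylor term for smooth $\sigma$ --- are exactly the ones the paper uses.

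There is, however, a genuine gap in your recursive construction. You propose to build $T_{K+1}$ from $T_K$ by ``attaching at each leaf a fresh copy of the depth-one gadget'' and then argue that the moment-matching (smooth) or exact-sum-equality (ReLU) will recur automatically. It will not. What makes the argument propagate one level up is the identity
\[
c_{v_1,w}+c_{v_2,w}=\hat c_{v_1,w}+\hat c_{v_2,w}\quad\text{for \emph{every} parameter }w,
\]
which says that the two ``interesting'' intermediate features in $G$ and $\hat G$ always have equal sum. This is what lets you (i) rerun the Taylor argument to square the discrepancy in the smooth case, and (ii) in the ReLU case, conclude the readout outputs are \emph{identical} whenever the new bias misses a width-$O(\epsilon)$ interval. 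This identity does not come for free from a two-tree recursion $T_K,T_K'$: the paper's construction uses \emph{four} building blocks $a_T,b_T,c_T,d_T$ at every level, combined so that the multiset union of the grandchildren is identical in $G_T$ and $\hat G_T$. Concretely, $a_{T+1}$ has children $\lms a_T,a_T,b_T,b_T\rms$, and so on, arranged so that $G_{T+1}$'s two subtrees together contain the same multiset of grandchildren as $\hat G_{T+1}$'s. Your ``attach a gadget at each leaf'' scheme gives no such guarantee, and without it neither half of the induction step goes through beyond $K=0$.

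Two smaller points. First, you do not need your step (1) in any detail: all node features in the construction are constant multiples of $\epsilon$, and $\mathrm{TMD}$ is $1$-homogeneous in the features, so $\mathrm{TMD}(G_T(\epsilon),\hat G_T(\epsilon))=\epsilon\cdot\mathrm{TMD}(G_T(1),\hat G_T(1))$ is $\Theta(\epsilon)$ immediately. Second, for ReLU the clean bookkeeping is not a moment-propagation lemma of the form ``expected $p$-th moment $\le C\delta^p\epsilon$'' but an event-based one: there is an event (all $K{+}1$ biases land in intervals of width $O(\epsilon)$) of probability $\sim\epsilon^{K+1}$ on which the embedding gap is $O(\epsilon)$, and off this event the embeddings are \emph{exactly equal} (precisely because of the sum identity above). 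This gives $\EE\|\cdot\|^p\lesssim \epsilon^{K+1}\cdot\epsilon^p$ directly and avoids the conditional-expectation tangles you anticipate in your step (3).
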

The proof of this theorem is based on an adversarial set of examples we call $\epsilon$-Trees. These are defined recursively, where the first set of trees are of height two, and the leaves contain the $\pm\epsilon$ multisets. Deeper trees are then constructed by building upon substructures from the trees in the previous step as depicted in figure \ref{fig:eps_trees} alongside a rigorous formulation and proof in Appendix \ref{app:eps_trees}.

In contrast to the previously discussed methods, SortMPNN is lower Lipschitz in expectation, even with a width of $1$.

\begin{restatable}{theorem}{ThmSortMPNNInformal}(informal)
\label{ThmSortMPNNInformal}
For any given $W\ge 1, K\ge 0$, SortMPNN with width $W$ and depth $K$ is lower Lipschitz in expectation with respect to $TMD^{(K)}$.
\end{restatable}

While we don't formally analyze the lower-\Holder{} properties of AdaptMPNN, we conjecture its worst-case behavior will be similar to ReluMPNN. However, in some settings it will have better stability. For example, for our adversarial $\epsilon$-Trees example AdaptMPNN has a Lipschitz-like behavior, as shown in Figure \ref{fig:mpnn_exponents}.

\section{Experiments}\label{sec:experiments}

In the following experiments, we evaluate  SortMPNN, AdaptMPNN, ReluMPNN and SmoothMPNN. As the last two architectures closely resemble standard MPNN like GIN \citep{xu2018how}  with ReLU/smooth activation, our focus is mainly on the SortMPNN and AdaptMPNN architectures. 
In our experiments we consider several variations of these architectures which were omitted in the main text for brevity, and are described in appendix \ref{app:experiments} alongside further experiment details.

\paragraph{$\epsilon$-Tree dataset}
\begin{wraptable}[10]{r}{0.33\columnwidth}
    \centering
    \vspace{-1.25 em}
    \caption{$\epsilon$-Tree binary classification results}
    \label{tab:epsilon-Tree}
    \vspace{0.75 em}
    \scalebox{0.77}{
    \setlength{\tabcolsep}{3pt} %
    \begin{tabular}{l|c}\hline
         Model & Accuracy\\ \hline
         GIN\citep{xu2018how} & 0.5\\
         GCN\citep{kipf2017semisupervised} & 0.5\\
         GAT\citep{Velickovic18gat} & 0.5\\
         ReluMPNN & 0.5 \\
         SmoothMPNN & 0.5\\
         SortMPNN & 1.0\\
         AdaptMPNN  & 1.0\\ 
         \hline
    \end{tabular}}
    
\end{wraptable}
In order to show that the expected (lower) \Holder{} exponent is indeed a good indicator of separation quality, and to further validate the importance of separation quality analysis, we first focus on the adversarial $\epsilon$-tree construction used to prove Theorem \ref{thm:epstrees}.

To begin with, we show how randomly initialized MPNNs with the different multiset embeddings we analyzed distort the TMD metric on the $\epsilon$-trees. 
In figure \ref{fig:mpnn_exponents} we plot the distance between the embeddings provided by the MPNNs, as a function of the parameter $\epsilon$ determining the $\epsilon$ trees (this parameter is proportional to the TMD distance between the trees). We  see how the lower-\Holder{} exponent increases with depth for   ReluMPNN and SmoothMPNN, in a manner which is consistent with the $\alpha=1+\frac{K+1}{p}$ and $2^{K+1} $ bounds suggested by Theorem \ref{thm:epstrees} (for $p=2$). SortMPNN, in contrast, displays virtually no distortion in this example, and its behavior is consisten with the bi-Lipschitness predicted by our theory. Similar results (without theoretical justification) can be observed for AdaptMPNN.

We then proceed to train a binary classifier using these MPNNs in an attempt to separate the tree pairs. 
\begin{wrapfigure}[20]{r}{0.65\columnwidth}
    \vspace{-0.5 em}
    \centering
    \includegraphics[width=1\linewidth]{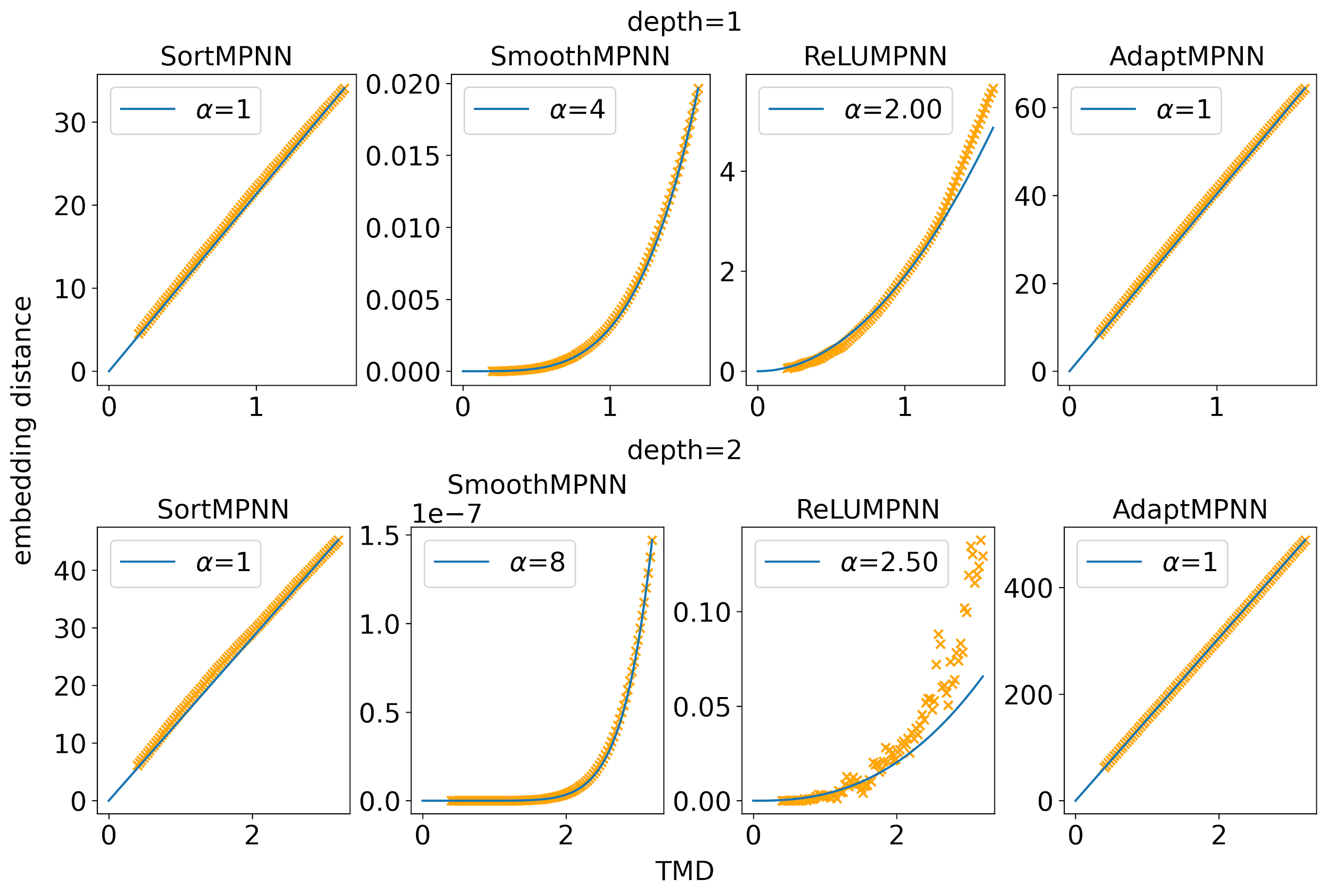}
    \caption{$l_2$ vs. TMD on $\epsilon$-Trees. The targeted ReluMPNN and SmoothMPNN exponents deteriorate with depth, in accordance with our theory (see Table \ref{tab:expnoent_bounds})}
    \label{fig:mpnn_exponents}
\end{wrapfigure}
The results in Table \ref{tab:epsilon-Tree} show how SortMPNN and AdaptMPNN  achieve perfect performance,  while ReluMPNN and SmoothMPNN,  in addition to several baseline MPNNs completely fail. This serves as proof to the importance of separation quality analysis, especially in light of the theoretical capability of SmoothMPNN \citep{amir2023neural} and GIN \citep{xu2018how} to separate these graphs.

\paragraph{TUDataset}

While the $\epsilon$-Tree dataset emphasizes the importance of high separation quality, validating our analysis, it is not obvious what effect this has in real world datasets. 
Therefore, we test SortMPNN and AdaptMPNN on a subset of the TUDatasets \citep{morris2020tudataset}, including Mutag, Proteins, PTC, NCI1 and NCI109. Table \ref{tab:tu_datasets} shows that SortMPNN  and AdaptMPNN outperform several baseline MPNNs (results are reported using the evaluation method from \citep{xu2018how}). 
Note that in all experiments we only compare to Vanilla MPNNs (1-WL), and not more expressive architectures which often do reach better results, at a higher computational cost.

\begin{table}
    \centering
    \caption{Classification accuracy on TUdatasets \citep{morris2020tudataset}. Best in \textbf{bold}, second \underline{underlined}.}
    \label{tab:tu_datasets}
    \scalebox{0.95}{
    \setlength{\tabcolsep}{3pt} %
    \begin{tabular}{l | ccccc}\hline
         Dataset& Mutag & Proteins & PTC & NCI1 & NCI109\\ \hline
         GIN\citep{xu2018how} & 89.4$\pm$5.6 & \underline{76.2$\pm$2.8} & 64.6$\pm$7 & 82.7$\pm$1.7 & 82.2$\pm$1.6 \\
         GCN\citep{kipf2017semisupervised} & 85.6$\pm$5.8 & 76$\pm$3.2 & 64.2$\pm$4.3 & 80.2$\pm$2.0 & NA \\
         GraphSage\citep{hamilton17graphsage} & 85.1$\pm$7.6 & 75.9$\pm$3.2 & 63.9$\pm$7.7 & 77.7$\pm$1.5 & NA\\
         SortMPNN &  \textbf{90.99$\pm$6.2} & \textbf{76.46$\pm$3.68} & \underline{66.31$\pm$6.73} & \textbf{83.55$\pm$1.82} & \underline{82.75$\pm$1.60}\\
         AdaptMPNN & \underline{90.41$\pm$6.1} & 75.12$\pm$3.64 & \textbf{66.87$\pm$5.37} & \underline{82.77$\pm$1.72} &\textbf{83.26$\pm$0.86} \\\hline
    \end{tabular}}
\end{table}

\paragraph{LRGB}

We further evaluate our architectures on two tasks from LRGB \citep{lrgb2022}: peptides-func, which is a multi-label graph classification task, and the  peptides-struct  regression task. Results in table \ref{tab:lrgb_datasets} show that SortMPNN and AdaptMPNN outperform other methods on peptides-func. For  peptides-struct, GCN is the best performing method with SortMPNN a close second. 

\begin{table}[b]
    \centering
    \caption{LRGB results. Best in \textbf{bold}, Second \underline{underlined}.}
    \label{tab:lrgb_datasets}
    \scalebox{0.95}{
    \setlength{\tabcolsep}{3pt} %
    \begin{tabular}{l | cc}
        \hline
         Dataset& peptides-func (AP$\uparrow$) & peptides-struct (MAE$\downarrow$)\\ \hline
         GINE\citep{Hu*2020Strategies} & 0.6621$\pm$0.0067 & 0.2473$\pm$0.0017 \\
         GCN\citep{kipf2017semisupervised} & 0.6860$\pm$0.0050 & \textbf{0.2460$\pm$0.0007} \\
         GatedGCN\citep{bresson2018residual} & 0.6765$\pm$0.0047 & 0.2477$\pm$0.0009 \\
         SortMPNN &  \textbf{0.6940$\pm$0.0049} & \underline{0.2464$\pm$0.0024}\\
         AdaptMPNN& \underline{0.6934$\pm$0.0099} & 0.2484$\pm$0.0034 \\\hline
  \end{tabular}}

\end{table}

Both experiments above adhere to the 500K parameter constraint as in \citep{lrgb2022}. In addition, we reran the peptides-struct experiment with a 100K, 50K, 25K, 7K and 1K parameter budget. The results in figure \ref{fig:pep_struct_small} in the appendix show SortMPNN and AdaptMPNN outperforming GCN for smaller models, with SortMPNN achieving the best results. We believe this is related to the fact that even a SortMPNN with width=1 is lower-Lipschitz in expectation, and accordingly when only a small number of features is available, its advantage on other methods is more substantial.

\paragraph{Subgraph aggregation networks - Zinc12K}
Finally, we experiment the use of our methods as the backbone MPNN in a  more advanced equivariant subgraph aggregation network (ESAN) \citep{DBLP:conf/iclr/BevilacquaFLSCB22}, which has greater separation power than MPNNs. 
To this extent, we run the experiment from \citep{DBLP:conf/iclr/BevilacquaFLSCB22} on the ZINC12K dataset, where we swap the base encoder from GIN \citep{xu2018how} to SortMPNN and AdaptMPNN.
\begin{wraptable}[10]{r}{0.6\columnwidth}
    \centering
    \caption{MAE per base encoder for ESAN on ZINC12K. Best in \textbf{bold}, second \underline{underlined}. }
    \label{tab:zinc}
    \scalebox{0.8}{
    \setlength{\tabcolsep}{3pt} %
    \begin{tabular}{l | ccc}\hline
         Method & GIN \citep{xu2018how} &  SortMPNN  &  AdaptMPNN\\ \hline
         DS-GNN (ED) & \underline{0.172$\pm$0.008} & \textbf{0.157$\pm$0.007} &  0.176$\pm$0.008\\
         DS-GNN (ND) & 0.171$\pm$0.010 & \textbf{0.152$\pm$0.009} & \underline{0.168$\pm$0.008}\\
         DS-GNN (EGO) & \underline{0.126$\pm$0.006} & \textbf{0.104$\pm$0.004}  & 0.127$\pm$0.007\\
         DS-GNN (EGO+) & \underline{0.116$\pm$0.009} & \textbf{0.115$\pm$0.008} & 0.126$\pm$0.007\\
         \hline
         DSS-GNN (ED) & \underline{0.172$\pm$0.005} &  \textbf{0.169$\pm$0.004}&  0.173$\pm$0.007\\
         DSS-GNN (ND) & \textbf{0.166$\pm$0.004} &  \underline{0.167$\pm$0.006}&  \underline{0.167$\pm$0.008}\\
         DSS-GNN (EGO) & \textbf{0.107$\pm$0.005} &  \underline{0.115$\pm$0.010}& 0.126$\pm$0.007\\
         DSS-GNN (EGO+) & \textbf{0.102$\pm$0.003}  & \underline{0.121$\pm$0.005}&  0.131$\pm$0.006\\
         \hline
    \end{tabular}}
\end{wraptable}
The results shown in table \ref{tab:zinc} show that SortMPNN outperforms GIN in five of the eight different scenarios. For fair comparison, the models don't surpass the 100K parameter budget.

\paragraph{Code and timing} Timing for the LRGB experiment are provided in table \ref{tab:runtime} in the appendix.  SortMPNN is marginally slower than  GCN, and AdaptMPNN is X1.87 times slower. Code is available at \footnote{https://github.com/YDavidson/On-The-Holder-Stability} . 

\section{Related work}
\paragraph{Sorting} Sorting based operations were used for permutation invariant networks on multisets in \citep{fspool}, and as readout functions for MPNNs in \citep{balan2022permutation,DGCNN,fingerprints}. To the best of our knowledge, our SortMPNN is the first network using sorting for aggregations, and the first MPNN with provable bi-Lipschitz (in expectation) guarantees. 

\paragraph{Bi-Lipschitz stability}
An alternative bi-Lipschitz embedding for multisets was suggested in \citep{cahill2024bilipschitz}. This embedding has higher computational complexity then sort based  embedding. Some additional examples of recent works on bi-Lipschitz embeddings include \citep{balan2023ginvariant, cahill2024bilipschitz,cahill2020complete}. (Upper) Lipschitz stability of graph neural networks from a spectral perspective is discussed in \cite{spectral1,spectral2}.

We note that 
\citep{boker2023finegrained} does provide lower and upper stability estimates for MPNNs with respect to a WL metric. These stability estimates are in an $\epsilon-\delta$ sense, which do not rule out arbitrarily bad \Holder{}  exponents. Moreover, they only consider graphs without node features. On the other hand, their analysis is more general in that they consider graphs of arbitrary size, and their graphon limit.  In an additional recent work, \citet{xu2023blisnet} introduced a GNN designed to be bi-Lipschitz with respect to a weighted inner product space. Their approach, however, is limited to scenarios with a fixed graph topology, where only scalar node features vary. This fixed topology justifies the use of the weighted inner product space, which does not incorporate information about the graph structure.

\section{Summary and Limitations} \label{sec:summary}
We presented expected \Holder{} stability analysis for functions based on ReLU summation, smooth activation summation, adaptive ReLU, and sorting. Our theoretical and empirical results suggest SortMPNN as a promising alternative to traditional sum-based MPNNs. 

A computational limitation of SortMPNN is that it requires prior knowledge of  maximal multiset sizes for augmentation. We believe that future work will reveal ways of achieving lower-Lipschitz architectures without augmentation. Other avenues of future work include analyzing \Holder{} properties with respect to other  graph metrics other than TMD, and resolving some questions we left open such as upper bounds for the \Holder{} exponents of smooth activations. 

\textbf{Acknowledgements: }N.D. and Y.D. are supported by Israeli Science Foundation grant no.
272/23. We thank Haggai Maron and Guy Bar-Shalom for their useful insights and assistance.
\newpage

\bibliography{main}
\bibliographystyle{iclr2025_conference}

\appendix

\section{Definitions and notation}
The following definitions and notation are used throughout the appendices.

\begin{itemize}
    \item We use the function $F(x,x',w)=F_{p,\alpha,f}(x,x',w) $ to denote
\[F(x,x',w)=\left\{ \frac{d_Y(f(x,w),f(x',w))}{d_X(x,x')^\alpha} \right\} ^p   \] 
    \item We call a  pair of multisets  \textit{balanced}, if the number of elements they contain are equal, and otherwise  we call them \textit{unbalanced}.

\end{itemize}

\section{\Holder{}  stability in expectation properties}\label{app:framework_properties}
In this section we fill in the  details of some properties of lower-\Holder{} in expectation functions, which were discussed in Subsection \ref{sub:Holder}.
\subsection{Reducing variance by averaging}\label{app:reducing_variance}
Given $N\in \NN$,  we can extend  $f(x;w):X \times W \to Y$ to a new parametric function $f_N:X \times W^N \to Y^N$ defined as 
\begin{equation}\label{eq:fN}
f_N(x;w_1,\ldots,w_N)=\left[f(x,w_1),\ldots,  f(x,w_N)\right] 
\end{equation}
where the f measure on $W^N$ is the product measure, and the distance we take on $Y^N$ is 
\[d_{Y_N}([y_1,\ldots,y_N],[y_1',\ldots,y_N'])=\left\{\frac{1}{N}\sum_{n=1}^N d(y_n,y_n')^p  \right\}^{1/p} \]
The function $F_N$ corresponding to this choice of $f_N$ is 
\begin{align*}
F_N(x,x',w)&=\left\{ \frac{d_{Y_N}(f_N(x,\mathbf{w}),f_N(x',\mathbf{w}))}{d_X(x,x')^\alpha} \right\} ^p   \\
&=\left(\frac{1}{d_X(x,x')} \right)^{\alpha p}\frac{1}{N} \sum_{n=1}^N d(f(x,w_n),f(x',w_n))^p\\
&=\frac{1}{N}\sum_{n=1}^N F(x,x',w_n)
\end{align*}
Thus $F_N$ is the average of $N$ independent copies of $F$, which implies that for all $x\neq x'$ the random variable $F_N(x,x',\mathbf{w})$ has the same expectation as $F$, and its standard deviation is uniformly bounded by $\sigma/\sqrt{N} $, where $\sigma$ is a bound on the standard deviation of $F(x,x',w)$ which is uniform in $x,x'$ (assuming that such a bound exists). In particular, this means that \emph{for any} $x,x'$, not only will the expectation of $F_N(x,x',\mathbf{w}) $ be bounded by from below by $m^p$, but also as $N$ goes to infinity the probability of $F_N(x,x',\mathbf{w}) $ being larger than $m^p-\epsilon $ will go to one.

\subsection{Boundedness assumptions}\label{app:boundedness}
\label{section: boundedness}
In the scenarios we consider in this paper the metric spaces are bounded. Assume $0<\alpha<\beta$, and the distances between any two elements $x,x'$ in a metric space $X$ are bounded by some constant $B $, then 
\[d(x,x')^\beta=B^\beta \left(\frac{d(x,x')}{B} \right)^\beta \leq B^\beta \left(\frac{d(x,x')}{B} \right)^\alpha=B^{\beta-\alpha}d(x,x')^\alpha . \]
In other words, in bounded metric spaces, up to a constant, increasing the exponent decreases the value. It follows that a parametric functions which is $\alpha$ lower \Holder{}  in expectation is also $\beta$ lower \Holder{}  in expectation.

\subsection{Composition}\label{app:composition} 
In this section we discuss what happens when we compose two functions $f(x;w): X \times W\mapsto Y$ and $g(y;v):Y\times V \mapsto Z $ to get a new parametric function
\[g\circ f(x;w,v)=g(f(x;w);v) . \]
\begin{lemma}\label{lem:composition}
Given two parametric functions $f(x;w): X \times W\mapsto Y$ and $g(y;v):Y\times V \mapsto Z$ which are $\alpha$ and $\beta$ lower  \Holder{} in expectation with $\alpha, \beta \ge 1$, then $g\circ f$ is $\alpha\cdot \beta$ lower \Holder{} in expectation. Similarly, if $f,g$ are uniformly upper Lipschitz, then $f\circ g$ is uniformly upper Lipschitz.
\end{lemma}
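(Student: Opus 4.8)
The plan is to prove each of the two claims in Lemma~\ref{lem:composition} directly from the definitions, treating the lower-\Holder{} case and the upper-Lipschitz case separately.

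\textbf{Upper Lipschitz case.} This is the easy half. Suppose $f$ is uniformly $L_f$-Lipschitz and $g$ is uniformly $L_g$-Lipschitz, i.e. for every $w$ and every $v$ we have $d_Y(f(x;w),f(x';w)) \le L_f\, d_X(x,x')$ and $d_Z(g(y;v),g(y';v)) \le L_g\, d_Y(y,y')$. Then for any fixed $(w,v)$,
\[
d_Z\big(g(f(x;w);v),\, g(f(x';w);v)\big) \le L_g\, d_Y(f(x;w),f(x';w)) \le L_g L_f\, d_X(x,x'),
\]
so $g\circ f$ is uniformly $(L_g L_f)$-Lipschitz. No probabilistic argument is needed here since the bound holds pointwise in the parameters.

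\textbf{Lower \Holder{} case.} Here I would work with the ratio quantity $F$ from the appendix notation. Fix $x\neq x'$ in $X$. For each fixed $w$, apply the lower-\Holder{} property of $g$ with the pair $y=f(x;w)$, $y'=f(x';w)$ (conditioning on $w$, so the randomness is only in $v$):
\[
c_g^p \le \EE_{v}\left\{ \frac{d_Z(g(f(x;w);v),g(f(x';w);v))}{d_Y(f(x;w),f(x';w))^\beta}\right\}^p .
\]
The key algebraic manipulation is to divide numerator and denominator inside the expectation by $d_X(x,x')^{\alpha\beta}$, rewriting the denominator as $\big(d_Y(f(x;w),f(x';w))/d_X(x,x')^{\alpha}\big)^{\beta}$. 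This gives
\[
\EE_{v}\left\{ \frac{d_Z(g\circ f(x;w,v),g\circ f(x';w,v))}{d_X(x,x')^{\alpha\beta}}\right\}^p \ge c_g^p \left(\frac{d_Y(f(x;w),f(x';w))}{d_X(x,x')^{\alpha}}\right)^{\beta p}.
\]
Now take $\EE_w$ of both sides and use Fubini/Tonelli (everything is nonnegative, so this is fine) on the left to get $\EE_{(w,v)}$ of the composed ratio. On the right I need $\EE_w$ of $\big(d_Y(f(x;w),f(x';w))/d_X(x,x')^{\alpha}\big)^{\beta p}$; since $\beta\ge 1$ this is at least $\big(\EE_w\{\cdots\}^{p}\big)^{\beta}$ by Jensen's inequality applied to the convex map $t\mapsto t^{\beta}$ on $[0,\infty)$, and the lower-\Holder{} property of $f$ bounds $\EE_w\{\cdots\}^p \ge c_f^p$. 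Combining, $\EE_{(w,v)}\{F_{p,\alpha\beta,g\circ f}\} \ge c_g^p c_f^{\beta p} = (c_g c_f^\beta)^p$, which is exactly $\alpha\beta$ lower-\Holder{} in expectation with constant $c_g c_f^{\beta}$, and strict positivity is inherited.

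\textbf{Main obstacle.} The only subtle point is the use of Jensen's inequality, which is exactly why the hypothesis $\beta\ge 1$ (and $\alpha\ge 1$, needed so that the iterated composition used in the Corollary stays in the regime) is imposed: without $\beta\ge 1$ the map $t\mapsto t^\beta$ is concave and the inequality reverses. I would also need to be slightly careful that the measurability hypotheses of Definition~\ref{HolderExpectation} are preserved under composition so that all the expectations are well-defined — this follows from measurability of $f(x;\cdot)$ and $g(y;\cdot)$ together with continuity of $g$ in its first argument (which holds since $g$ is, in our applications, uniformly Lipschitz in $y$), making $(w,v)\mapsto g(f(x;w);v)$ measurable. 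Everything else is routine.
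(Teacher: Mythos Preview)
Your proof is correct and follows essentially the same approach as the paper: split the composed ratio by inserting $d_Y(f(x;w),f(x';w))^{\beta}$, apply the lower-\Holder{} bound for $g$ in the inner $v$-expectation, then use Jensen's inequality for the convex map $t\mapsto t^{\beta}$ (this is where $\beta\ge 1$ is used) before invoking the lower-\Holder{} bound for $f$. The paper omits the upper-Lipschitz half and the measurability remark, but otherwise the arguments and the resulting constant $c_g c_f^{\beta}$ match.
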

\begin{proof}
    We omit the proof for uniform upper Lipschitz, and prove only the more challenging case of lower-\Holder{} in expectation. We have that for strictly positive $m_f,m_g$, $\mathbb{E}_{w\sim\mu}[\{\frac{d_Y(f(x;w), f(x';w))}{d_X(x,x')^{\alpha}}\}^p] \ge m_f$ and $\mathbb{E}_{v\sim\gamma}[\{\frac{d_Z(g(y,v), g(y',v))}{d_Y(y,y')^{\beta}}\}^p] \ge m_g$, from which we conclude
    \[
    \mathbb{E}_{w,v\sim(\mu,\gamma)}[\{\frac{d_Z(g(f(x;w),v), g(f(x';w),v))}{d_X(x,x')^{\alpha\cdot\beta}}\}^p]
    \]
    \[
    \stackrel{\text{w,v are independent}}{=} \int_w \int_v \{\frac{d_Z(g(f(x;w),v), g(f(x';w),v))}{d_X(x,x')^{\alpha\cdot\beta}}\}^p d\gamma d\mu 
    \]
    \[
    =\int_w \{\frac{d_Y(f(x;w),f(x';w))^{\beta}}{d_X(x,x')^{\alpha\cdot\beta}}\}^p \int_v \{\frac{d_Z(g(f(x;w),v), g(f(x';w),v))}{d_Y(f(x;w),f(x';w))^{\beta}}\}^p d\gamma d\mu
    \]
    \[
    = \int_w \{\frac{d_Y(f(x;w),f(x';w))^{\beta}}{d_X(x,x')^{\alpha\cdot\beta}}\}^p \cdot \mathbb{E}_{v\sim\gamma}[\{\frac{d_Z(g(f(x;w),v), g(f(x';w),v))}{d_Y(f(x;w),f(x';w))^{\beta}}\}^p]  d\mu
    \]
    \[
    \ge \int_w \{\frac{d_Y(f(x;w),f(x';w))^{\beta}}{d_X(x,x')^{\alpha\cdot\beta}}\}^p \cdot m_g  d\mu
    \]
    \[
    =m_g\cdot \mathbb{E}_{w\sim\mu}[(\{\frac{d_Y(f(x;w), f(x';w))}{d_X(x,x')^{\alpha}}\}^p)^\beta]
    \]
    \[
    \stackrel{(*)}{\ge} m_g\cdot (\mathbb{E}_{w\sim\mu}[\{\frac{d_Y(f(x;w), f(x';w))}{d_X(x,x')^{\alpha}}\}^p])^\beta
    \]
    \[
    \ge m_g \cdot m_f^\beta
    \]
    Where $(*)$ is true by Jensen's inequality, since $(\frac{d_Y(f(x;w), f(x';w))}{d_X(x,x')^{\alpha}})^p$ is non negative, and $\phi: \mathbb{R}^+\cup\{0\}\mapsto \mathbb{R}$, $\phi(t)=t^\beta$ is convex for $\beta\ge 1$

\end{proof}

\section{Multiset embeddings analysis proofs}\label{app:multiset_proofs}

The following are the proofs of the claims regarding the exponent of \Holder{}  stability in expectation from section \ref{sec:multisets}.

Before stating and proving the relevant claims, we will present some lemmas that will be used throughout this section and those that follow.

We first state the well known property of norm equivalence on finite dimensional normed spaces 

\begin{lemma}\label{norm_equiv}
    Let $1 \leq p,q\leq \infty $ and $d\in \NN$. There exists $c_{p,q,d} >0$  s.t. for any $v\in \RR^d$
    \[
   \|v\|_p \ge c_{p,q,d}\cdot \|v\|_q
    \]
\end{lemma}

In particular, this means that 
\begin{lemma}\label{wass_equiv}
    Let $1 \leq p,q\leq \infty $ and $\Omega\subseteq \RR^d$ There exists $C_{p,q,d} >0$ s.t.  for any $X,Y\in \mathcal{S}_{\le n}(\Omega)$
    \[
     W_p^{(z)}(X,Y) \ge C_{p,q,d}\cdot W_q^{(z)}(X,Y)
    \]
\end{lemma}
\begin{proof}
    \begin{align*}
        W_p^{(z)}(X,Y) &=  \left\{\min_{\tau \in S_n} \sum_{j=1}^n \|\rho_{(z)}(x)_j-\rho_{(z)}(y)_{\tau(j)}\|_p^p \right\}^{1/p}\\
        &\stackrel{\ref{norm_equiv}}{\ge} c_{p,1,d}\cdot \left\{\min_{\tau \in S_n} \sum_{j=1}^n \|\rho_{(z)}(x)_j-\rho_{(z)}(y)_{\tau(j)}\|_p \right\}\\
        &\stackrel{\ref{norm_equiv}}{\ge} c_{p,1,d}\cdot c_{p,q,d}\cdot \left\{\min_{\tau \in S_n} \sum_{j=1}^n \|\rho_{(z)}(x)_j-\rho_{(z)}(y)_{\tau(j)}\|_q \right\}\\ 
        &\stackrel{\ref{norm_equiv}}{\ge} c_{p,1,d}\cdot c_{p,q,d}\cdot c_{q,1,d}\cdot \left\{\min_{\tau \in S_n} \sum_{j=1}^n \|\rho_{(z)}(x)_j-\rho_{(z)}(y)_{\tau(j)}\|_q^q \right\}^{\frac{1}{q}}\\
        & = c_{p,1,d}\cdot c_{p,q,d}\cdot c_{q,1,d}\cdot  W_q^{(z)}(X,Y)
    \end{align*}
    The upper bound can be proven in the same manner. 
\end{proof}

Now that we are equipped with the above lemmas, we begin by proving for smooth functions.
\subsection{Summing over a smooth activation}
\SmoothMoments*
\begin{proof}
It is sufficient to prove the claim in the case where $d=1$.

Since $\sigma$ is $n$ times continuously differentiable, there exists some constant $C$ such that 
$$|\frac {\partial^n}{\partial t^n} \sigma(a t+b)| \leq C, \forall (t,a,b) \in [-1,1] \times S^{d-1} \times [-B,B] .$$
Now, let $f$ be any function with $n$ continuous derivatives and $|f^{(n)}(t)|<C$ for all $t\in [-1,1]$. For $x\in (-1,1)^n$ denote  $F(x)=\sum_{i=1}^n f(x_i)$. (when $f(t)=\sigma(at+b)$ we have $F(x)=m_\sigma(x;a,b)$).  By Taylor's approximation of $f$ around $0$, there exists points $c_1,\ldots,c_n\in (-1,1)$ such that  
\begin{align*}F(x)&=\sum_{i=1}^n \left( f(0)+f'(0)x_i+\frac{f''(0)}{2}x_i^2+\ldots+\frac{f^{(n-1)}(0)}{(n-1)!}x_i^{n-1}+\frac{f^{(n)}(c_i)}{n!}x_i^n\right) \\
&=nf(0)+f'(0)\sum_{i=1}^n x_i+\frac{f''(0)}{2}\sum_{i=1}^nx_i^2+\ldots+\frac{f^{(n-1)}(0)}{(n-1)!}\sum_{i=1}^n x_i^{n-1}+\sum_{i=1}^nx_i^n\frac{f^{(n)}(c_i)}{n!}\end{align*}  
It follows that if $x,y$ are two vectors which are not the same, even up to permutation, but their first $n-1$ moments are identical, then for every $\epsilon \in (0,1)$,  the vectors $\epsilon x$ and $\epsilon y$ will also have the same $n-1$ moments, and so there exist $c_1,\ldots,c_n,d_1,\ldots,d_n$ in $(-1,1)$ such that
$$|F(\epsilon x)-F(\epsilon y)|=\epsilon^n|\sum_{i=1}^n\frac{f^{(n)}(c_i)}{n!}x_i^n+\sum_{i=1}^n\frac{f^{(n)}(d_i)}{n!}y_i^n|\leq \frac{2n}{n!}C\epsilon^n $$
In contrast, the Wasserstein distance between $\epsilon x$ and $\epsilon y$ scales like $\epsilon$: 
$$W_p^z(\epsilon x,\epsilon y)=W_p(\epsilon x,\epsilon y)=\epsilon W_p(x,y) $$ 

It follows that for all $\alpha<n$, all  parameters $w=(a,b)\in S^{d-1} \times [-B,B]$, and all $\epsilon \in (0,1)$,
$$\left| \frac{m_\sigma(\epsilon x; w)-m_\sigma(\epsilon y; w)}{W_p^\alpha(\epsilon x,\epsilon y)} \right|^p\leq CW_p(x,y)^{-\alpha}\frac{2n}{n!}\epsilon^{(n-\alpha)p} $$
Taking the expectation over the parameters $w$ on the left hand side, we get the same inequality. Taking the limit $\epsilon \rightarrow 0$, we obtain
$$\mathbb{E}_w \left| \frac{m_\sigma(\epsilon x; w)-m_\sigma(\epsilon y; w)}{W_p^\alpha(\epsilon x,\epsilon y)} \right|^p \leq CW_p(x,y)^{-\alpha}\frac{2n}{n!}\epsilon^{(n-\alpha)p} \rightarrow 0 $$
from which we deduce that  $m_\sigma$ is not $\alpha$ lower-\Holder{} in expectation.

To conclude this argument , we note that there always exist $x,y\in \RR^n$ which are not identical up to permutation, but have the same first $n-1$ moments. This is because it is known that no mapping from $(-1,1)^n$ to $\RR^{n}$, and in particular the mapping which takes a vector of length $n$ to its first $n-1$ moments, cannot be injective, up to permutations \citep{Wagstaff2019OnTL}. To ensure that $x,y$ are in $(-1,1)^n$, we can scale both vectors by a sufficiently small positive number. In the next subsection we discuss a constructive method for producing pairs of sets with $n-1$ equal moments.
\end{proof}

\subsection{Creating sets with $n-1$ equal moments}\label{sub:equal_moments}
In the proof of theorem \ref{thm:Smooth} we 
explained that for every natural $n$, there exists pairs 
of vectors $x,y \in \RR^n$ which are not identical, up 
to permutation, but have the same first $n-1$ moments. 
We now give a constructive algorithm to construct such 
pairs for $n$ which is a power of $2$, that is, $n=2^k$ 
for some natural $k$. This algorithm was used to produce 
figure \ref{fig:multiset_exponents} in the main text. 

Our algorithm operates by recursion on $k$. For $k=1$, our goal is to find a pair of vectors $x^{(1)}, y^{(1)}$ of length $2^k=2$, whose first $2^k-1=1$ moments are equal. We can choose for example 
$$x^{(1)}=[-1,1], y^{(1)}=[-2,2]. $$

Now, assume that for a given $k$ we have a pair of vectors $x=x^{(k)}$ and $y=y^{(k)} $ of length $n=2^k$ which are not identical, up to permutation, but whose first $2^k-1$ moments are identical. We then perform the following three steps:

\textbf{step 1:} We translate all coordinates of $x$ and $y$ by the same number $t$, which we take to be the minimum of all coordinates in $x$ and $y$. We thus get a new pair of vectors
$$x'=[x_1-t,\ldots,x_n-t], \quad y'=[x_1-t,\ldots,x_n-t]  $$
whose coordinates are all non-negative. These vectors are still  distinct, even up to permutation, and have the same first $n-1=2^k-1$ moments.
 
\textbf{step 2:}
We take the root of the non-negative entries in both vectors, to get new vectors
$$x''=[\sqrt{x_1'},\ldots, \sqrt{x_n'}], \quad y''=[\sqrt{y_1'},\ldots, \sqrt{y_n'}] .$$
These two vectors are distinct, even up to permutations, and they agree on the even moments $2,4,\ldots,2(n-1)$. 

\textbf{step 3:}
Finally, we define a new pair of vectors of cardinality $2n=2^{k+1} $
$$x^{(k+1)}=[-x'',x''], \quad y^{(k+1)}=[-y'',y''] $$
These two new vectors are still distinct up to permutations. There even moments up to order $2(n-1)$ are still identical. Moreover, all odd moments of both vectors are zero, and hence identical. In particular, the two vectors have the same moments of order 
$1,2,\ldots,2(n-1),2n-1$, which is what we wanted.

\subsection{Sorting}
Our goal in this subsection is proving

\SortHolder*

Recall that $S_z$ is defined via
$$
	S_z(X;a,b)=b\cdot\mathrm{sort}(a \cdot \rho_{(z)}(X))
$$
Once $\rho_{(z)}$ is applied, we  can think of $S_z$ as operating on multisets with exactly $n$ elements, coming from the domain $\Omega \cup \{z\}$. The function $S_z$ is then a composition of parametric functions $L\circ s$, where 

\begin{equation} \label{eq:L}
L(y;b)=b\cdot y
\end{equation}
and 
\begin{equation}\label{eq:sort_basic}
s(X;a)=sort(a^TX)
\end{equation}
By the rules of composition (Lemma \ref{lem:composition}), it is sufficient to show that both functions are uniformly Lipschitz, and lower-Lipschitz in expectation. 

We first show for $L$

\begin{lemma}
    \label{lemma: abs_norm_preservation}
For every $p\geq 1$, the function $L:\RR^n \times S^{n-1} \to \RR$ is uniformly upper-Lipschitz, and lower Lipschitz in expectation. 
\end{lemma}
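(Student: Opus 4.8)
The plan is to isolate the only thing that matters: since $L(y;b)-L(y';b)=\langle y-y',b\rangle$ depends on $y,y'$ only through the increment $v=y-y'$, both claims reduce to statements about the linear functional $v\mapsto\langle v,b\rangle$ on $\RR^n$. I will equip $\RR^n$ with the $\ell_p$ distance (the argument works verbatim for any $\ell_q$ distance, changing only the constants). For the uniform upper bound, \Holder{}'s inequality gives $|L(y;b)-L(y';b)|=|\langle v,b\rangle|\le\|v\|_p\,\|b\|_{p'}$ with $1/p+1/p'=1$, and since $\|b\|_2=1$ for every $b\in S^{n-1}$ we have $\|b\|_{p'}\le\max\{1,n^{(p-2)/(2p)}\}=:C_{n,p}<\infty$; hence $x\mapsto L(x;b)$ is $C_{n,p}$-Lipschitz, uniformly over $b\in S^{n-1}$.

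For lower Lipschitz in expectation, fix $y\ne y'$, put $v=y-y'\ne 0$ and normalize $u=v/\|v\|_p$, so $\|u\|_p=1$; it suffices to produce a constant $c>0$, independent of $u$ (hence of $y,y'$), with $\EE_{b\sim S^{n-1}}|\langle u,b\rangle|^p\ge c^p$, since the left-hand side is exactly $\EE_b\{|L(y;b)-L(y';b)|/\|y-y'\|_p\}^p$. Writing $u=\|u\|_2\,\hat u$ with $\hat u=u/\|u\|_2\in S^{n-1}$, rotation invariance of the uniform law on $S^{n-1}$ makes $\langle\hat u,b\rangle$ equal in distribution to the first coordinate $b_1$, so $\EE_b|\langle u,b\rangle|^p=\|u\|_2^p\,\EE_b|b_1|^p$.

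It then remains to bound the two factors below by positive constants. The moment $\kappa_{n,p}:=\EE_b|b_1|^p$ depends only on $n$ and $p$ and is strictly positive because $b_1$ is not almost surely zero (it may be written explicitly as a Beta integral, but this is not needed). For the other factor, equivalence of norms on $\RR^n$ together with $\|u\|_p=1$ gives $\|u\|_2\ge\min\{1,\,n^{1/2-1/p}\}=:c_{n,p}>0$. Combining, $\EE_b|\langle u,b\rangle|^p\ge c_{n,p}^p\,\kappa_{n,p}$, so $L$ is lower Lipschitz in expectation with constant $c=c_{n,p}\,\kappa_{n,p}^{1/p}>0$, which together with the first paragraph proves the lemma.

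There is no genuine obstacle here; the one point that needs care is the mismatch between the $\ell_p$-normalization of the increment $u$ and the Euclidean geometry that makes $\langle\hat u,b\rangle$ distributionally a single coordinate of $b$, which is handled by the elementary bound $\|u\|_2\ge\min\{1,n^{1/2-1/p}\}$. If the metric placed on $\RR^n$ in the composition $S_z=L\circ s$ were taken to be $\ell_2$ rather than $\ell_p$, one would simply have $\|u\|_2=1$ and the lower bound would reduce directly to $\EE_b|\langle u,b\rangle|^p\ge\kappa_{n,p}$.
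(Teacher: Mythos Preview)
Your proof is correct and follows essentially the same route as the paper's: for the upper bound you use \Holder{}'s inequality where the paper uses Cauchy--Schwarz (and then implicitly norm equivalence), and for the lower bound both arguments reduce to rotation invariance of the uniform law on $S^{n-1}$ together with equivalence of $\ell_p$ and $\ell_2$ norms on $\RR^n$. The only cosmetic difference is that you track the explicit constants $C_{n,p}$, $c_{n,p}$, $\kappa_{n,p}$, whereas the paper simply invokes norm equivalence and writes the lower bound as $c^p\,\EE_{b}|\langle e_1,b\rangle|^p>0$.
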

\begin{proof}
\textbf{Uniformly upper Lipschitz}
For every $b\in S^{n-1}$ and $x,x' \in \RR^n$ we have, using Cauchy-Schwarz,
$$|L(x;b)-L(x';b)|=|b\cdot  (x-x'),|\leq \|x-x'\| $$
so for all $b$ we have a Lipschitz constant of 1.

\textbf{Lower  Lipschitz in expectation}
For every $x\neq x'$ in $\RR^n$ we have, due to lemma \ref{norm_equiv}, for the appropriate $c>0$ 
\begin{align*}
\EE_{b \sim S^{n-1}} \left[ \left\{\frac {|L(x;b)-L(x';b)|}{\|x-x'\|_p} \right\}^p\right]&\geq c^p\cdot \EE_{b \sim S^{n-1}} \left[ \left\{\frac {|L(x;b)-L(x';b)|}{\|x-x'\|_2} \right\}^p\right]\\
&=c^p\cdot\EE_{b \sim S^{n-1}}\left[| b\cdot \frac{x-x'}{\|x-x'\|_2}|^p\right]\\
&=c^p\cdot\EE_{b \sim S^{n-1}}\left[|b\cdot e_1|^p\right]>0 
\end{align*}
where the last equality is because  the distribution is rotational invariant.
\end{proof}

We now prove for $s$
\begin{lemma}\label{lem:sort_basic}
	Let $d,n,p$ be natural numbers. Let $s$ be as defined in \eqref{eq:sort_basic}, then $s$ is uniformly upper Lipschitz, and lower Lipschitz in expectation. (here, the domain of $s$ is the space $\SnOmegaEqual$ of multisets with $n$ elements in a compact set $\Omega \subseteq \RR^d$, endowed with the $W_1$ metric.  $a$ is drawn uniformly  from $S^{d-1}$,   and the metric on the output of $s$ is the  $\ell_p$ distance).  
\end{lemma}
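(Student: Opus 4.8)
The plan is to verify the two assertions separately.

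\textbf{Uniform upper Lipschitzness.} I would use the elementary fact that sorting is a contraction: for any $u,v\in\RR^n$ and any permutation $\pi$, $\|\sort(u)-\sort(v)\|_p\le\|u-\pi v\|_p$ (sorted matching is optimal for convex costs). Given multisets $X=\lms x_1,\dots,x_n\rms$ and $X'=\lms y_1,\dots,y_n\rms$, let $\tau\in S_n$ realize $W_1(X,X')=\sum_{j}\|x_j-y_{\tau(j)}\|_p$. Applying the contraction with $u=(a\cdot x_j)_j$ and $v$ equal to the $\tau$-rearrangement of $(a\cdot y_j)_j$ gives $\|\sort(a^T X)-\sort(a^T X')\|_p\le\big(\sum_j|a\cdot(x_j-y_{\tau(j)})|^p\big)^{1/p}$; then Cauchy--Schwarz ($|a\cdot w|\le\|w\|_2$ since $\|a\|_2=1$), equivalence of the $\ell_2$ and $\ell_p$ norms on $\RR^d$, and $\|\cdot\|_{\ell_p}\le\|\cdot\|_{\ell_1}$ applied to the vector of the $n$ distances yield $\|\sort(a^T X)-\sort(a^T X')\|_p\le C_{d,p}\,W_1(X,X')$, with $C_{d,p}$ independent of $a\in S^{d-1}$. (Measurability of $a\mapsto\sort(a^T X)$ for fixed $X$ is clear since it is continuous.)

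\textbf{Lower Lipschitzness in expectation.} Since distinct multisets in $\RR^d$ can share the same projected sorted vector, $s(\cdot;a)$ is not injective for fixed $a$, so no per-parameter lower bound can hold; I would instead invoke \cite{balan2022permutation}. There is an integer $D$ such that the stacked map $\Phi_D(X;a_1,\dots,a_D)=\big(\sort(a_i^T X)\big)_{i=1}^D$, from $(\SnOmegaEqual,W_1)$ into $\RR^{Dn}$ with the $\ell_p$ metric, is bi-Lipschitz for generic $(a_1,\dots,a_D)$, in particular for all $\mathbf a$ in a set $U\subseteq(S^{d-1})^D$ of positive $\mu^{\otimes D}$-measure, with lower Lipschitz constant bounded below by a fixed $c_0>0$. (The uniform $c_0$ follows from the a.e.\ positivity of the bi-Lipschitz constant $c(\mathbf a)$, since some superlevel set $\{c(\mathbf a)\ge 1/m\}$ has positive measure.) Now transfer this to a single projection via the averaging identity of Appendix~\ref{app:reducing_variance}: because the $a_i$ are i.i.d.,
\[
\EE_{a\sim S^{d-1}}\!\left[\frac{\|\sort(a^T X)-\sort(a^T X')\|_p^p}{W_1(X,X')^p}\right]
=\EE_{\mathbf a}\!\left[\frac1D\sum_{i=1}^D\frac{\|\sort(a_i^T X)-\sort(a_i^T X')\|_p^p}{W_1(X,X')^p}\right]
\ge\frac{1}{D}\int_U c_0^p\,d\mu^{\otimes D}=\frac{c_0^p\,\mu^{\otimes D}(U)}{D}>0,
\]
uniformly over $X\ne X'$, so $s$ is lower Lipschitz in expectation.

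\textbf{Main obstacle.} The delicate point is entirely in the second part: aligning the metric conventions of \cite{balan2022permutation} (their bi-Lipschitz statement is phrased with a min-over-permutations Frobenius-type distance on $\RR^{d\times n}$) with the Wasserstein metric $W_1$ on $\SnOmegaEqual$ used here --- which needs a short norm-equivalence argument with $(d,n)$-dependent constants --- together with the measure-theoretic extraction of a single constant $c_0$ valid on a positive-measure set of projection tuples from the merely ``generic'' bi-Lipschitz guarantee.
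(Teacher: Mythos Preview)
Your proposal is correct and follows essentially the same approach as the paper: both arguments use the sorting contraction together with Cauchy--Schwarz and norm equivalence for the uniform upper Lipschitz bound, and both invoke \cite{balan2022permutation} for the a.e.\ bi-Lipschitzness of the stacked map and then transfer this to the single-projection expectation via the averaging identity of Appendix~\ref{app:reducing_variance}. Your version is slightly more explicit in extracting a positive-measure set with a uniform lower constant (whereas the paper just asserts that a.e.\ lower Lipschitzness implies lower Lipschitzness in expectation), but this is the same mechanism spelled out.
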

\begin{proof}
Due to norm equivalence as stated in lemma \ref{norm_equiv}, it is sufficient to prove the claim when $p=2$. 

    Fix some balanced multisets $X \not \sim Y$. Let $\sigma$ be a permutation which minimizes $\sum_{j=1}^n \|x_j-y_{\tau(j)}\|_2$ over all $\tau \in S_n$. Then for every $a \in S^{d-1}$ we have, using Cauchy-Schwartz and norm equivalence \ref{norm_equiv}
	\begin{align*}
		\|s(X;a)-s(Y;a)\|_2^2&= \|sort(a^TX)-sort(a^TY)\|_2^2
		\\
		&= \min_{\tau \in S_n}\sum_{j=1}^n |a^Tx_j-a^Ty_{\tau(j)}|^2\\
		&\leq \sum_{j=1}^n |a^Tx_j-a^Ty_{\sigma(j)}|^2\\
		&\leq \sum_{j=1}^n \|x_j-y_{\sigma(j)}\|_2^2\\
        &\leq C\left(\sum_{j=1}^n  \|x_j-y_{\sigma(j)}\|_2 \right)^2\\
        &=W_1(X,Y)^2
	\end{align*}
    Concluding $s$ is uniformly upper Lipschitz.
    
    In addition, we also have that for large enough $N$, the function $s_N(\cdot;a)$ (concatenation of $s$ $N$  times, divided by $1/N$) will be  lower Lipschitz for Lebesgue almost every $a\in S^{d-1}$, as proved in \citep{balan2022permutation}. In particular, it follows that $s_N$ is lower Lipschitz in expectation. As discussed in Subsection \ref{app:reducing_variance}, this implies  that $s$ is lower Lipschitz in expectation as well.
\end{proof}

\subsection{Analyzing ReLU}
\ReluMoments*

\begin{proof}
We divide the proof into three parts, in accordance with the three parts of the theorem.

\textbf{Part 1: Uniform Lipschitz}
We first prove a uniform Lipschitz bound for all  balanced multisets $Y,Y'$ of cardinality $k \leq n$. Denote $X=\rho_{(z)}(Y), X'=\rho_{(z)}(Y') $. Then for every permutation $\tau \in S_n$,   \begin{align*}
    |m_{ReLU}(Y;a,b)-m_{ReLU}(Y';a,b)|&=|m_{ReLU}(X;a,b)-m_{ReLU}(X';a,b)|\\
    &=|\sum_{i=1}^n ReLU(ax_i-b)- ReLU(ax_{\tau(i)}'-b) |\\
    &\stackrel{(*)}\leq 	\sum_{i=1}^n |(ax_i-b)- (ax_{\tau(i)}'-b) |\\
    &=\sum_{i=1}^n |a\cdot (x_i-x_{\tau(i)}')|\\
    &\stackrel{(**)}{\leq} \sum_{i=1}^n \|a\|_2 \|x_i-x_{\tau{(i)}}\|_2\\
    &=\sum_{i=1}^n \|x_i-x_{\tau{(i)}}\|_2,\\
    \end{align*}
   where (*) is because $ReLU$ is Lipschitz with constant 1, and (**) is from Cauchy-Schwartz. 
   
    Since the inequality we obtain holds for all permutations $\tau$,  we can take the minimum over all permutations to obtain that
    \[ |m_{ReLU}(Y;a,b)-m_{ReLU}(Y';a,b)|\leq   W_1(X,X')=W_1^z(Y,Y')\]

To address multisets $Y,Y'$ of different sizes, we first note that since the elements of the multisets are in $\Omega$, and the parameters $a,b$ come from a compact set, there exists some constant $M>0$ such that 
$$|m_{ReLU}(Y;a,b)-m_{ReLU}(Y';a,b)|\leq M .$$
On the other hand, for all $Y,Y'$ of different sizes, we will always have that 
$$W_1^z(Y,Y')\geq dist(z,\Omega)>0 $$
therefore
$$|m_{ReLU}(Y;a,b)-m_{ReLU}(Y';a,b)|\leq M \frac{W_1^z(Y,Y')}{W_1^z(Y,Y')}\leq \frac{M}{dist(z,\Omega)}   W_1^z(Y,Y')  $$
Combining this with our bound for multisets of equal cardinality, we see that $m_{ReLU}$ is uniformly Lipschitz with constant $\max\{1, \frac{M}{dist(z,\Omega)}  \} $.

\textbf{Part 2: Lower bound on expected \Holder{} exponent}
    
  We show that   $m_{ReLU}$ is not $\beta$ lower-\Holder{} in expectation for all $\beta<(p+1)/p$. 
	
	Let $X$ be some matrix in $\Omega^n$ whose first two columns are the same $x_1=x_2$. Let $q$ be a vector with unit norm. For every $\epsilon>0$ define 
	\[X_\epsilon=[x_1-\epsilon q,x_1+\epsilon q,x_3,\ldots,x_n] \]
	It is not difficult to see that for all small enough $\epsilon$ we have that $W_1(X_\epsilon,X)=2\epsilon $. On the other hand, for every fixed  $a\in S^{d-1}$ and $b\in [-B,B]$ we have that, denoting $y=a\cdot x_1$ and $\delta=|\epsilon a\cdot q|$, we have 
	\begin{align*}
 m_{ReLU}(X;a,b)-m_{ReLU}(X_\epsilon;a,b)&=2\relu(a\cdot x_1-b)-\relu(a\cdot (x_1-\epsilon q)-b)\\
 & \quad \quad -\relu(a(\cdot x_1+\epsilon q)-b)\\
 &=2\relu(y-b)-[\relu(y+\delta-b)+\relu(y-\delta-b)]
	\end{align*}
Note that if $b>y+\delta$ then the expression above will be zero because all arguments of the ReLUs will be negative, and if $b<y-\delta$ then the expression above will also be zero because the arguments of all ReLUs will be positive so that we obtain
\[2\relu(y-b)-[\relu(y+\delta-b)+\relu(y-\delta-b)]=2(y-b)-[y+\delta-b+y-\delta-b]=0	\]
Thus this expression will not vanish only if $b\in [y-\delta,y+\delta] $ which is an interval of diameter $2\delta\leq 2\epsilon$. For each $b$ in this interval we have, since ReLU is 1-Lipschitz,  that
\begin{align*}
    &|2\relu(y-b)-[\relu(y+\delta-b)+\relu(y-\delta-b)]|\leq \\&|\relu(y-b)-\relu(y+\delta-b)|+|\relu(y-b)-\relu(y-\delta-b)|\leq  2\delta 
\end{align*} 
 So that in total the expectation for fixed $a$, over all $b$, is bounded by 
 \[\EE_{b\sim [-B,B]}|m_{ReLU}(X;a,b)-m_{ReLU}(X_\epsilon;a,b)|^p\leq (2\epsilon)^p\mu[y-\delta.y+\delta]\leq \frac{1}{2B} (2\epsilon)^{p+1} \]
 Which implies the same bound when taking the expectation over $a$ and $b$. Thus overall we obtain for all $\beta<(p-1)/p$ that
\[\EE_{a,b} \left\{ \frac{|m_{ReLU}(X;a,b)-m_{ReLU}(X_\epsilon;a,b)|}{W_1(X,X_\epsilon)^\beta} \right\} ^p \leq \frac{1}{2B}\frac{(2\epsilon)^{p+1}}{2^{\beta/p}\epsilon^{\beta \cdot p}}=\frac{2^{p+1}}{2B2^{
\beta}}\epsilon^{p+1-\beta\cdot p} \overset{\epsilon \rightarrow 0}{\rightarrow} 0   \]
While if $m_{ReLU}$ were $\beta$ lower-\Holder{} in expectation this expression should have been uniformly bounded away from zero.

\textbf{Part 3: lower-\Holder{} in expectation}
\phantomsection\label{par:oneD_relu}
Next we show that $m_{ReLU}$ is $(p+1)/p $ lower-\Holder{} in expectation. We first consider the restriction of $m_{ReLU}$ to the subspace of $\SnOmega$ which contains only multisets of cardinality exactly $n$. We denote this subspace by $\SnOmegaEqual $. In this case, we realize $m_{ReLU}$ as the composition of two functions:
the functions
$s(X;a)=sort(a^TX)$ from Lemma \ref{lem:sort_basic}, and the function $q(x;b)=\sum_{i}\relu(x_i-b) $
Note that indeed $f(X;a,b)=(q\circ s)(a,b) $. Since we already know that $s$ is lower Lipschitz in expectation, it is sufficient to show that $q$ is $(p+1)/p$ lower-\Holder{} in expectation, due to the theorem on composition \ref{app:composition}.

Let us denote $\hat B=\max\{\|x\| | \quad x \in \Omega \} $. By assumption $\hat B<B$. 

Note that the domain of $q$ is contained in 
\[\Omega_{[-B,B]}=\{x\in \RR^n| - B\leq x_1\leq x_2 \leq \ldots\leq x_n \leq  B \}.\]
Since the $2$-norm and $\infty$-norm are equivalent on $\RR^n$ \ref{norm_equiv}, to address the case of balanced multisets it is sufficient to prove
\begin{lemma}\label{lem:infty}
	Let $p>0,B>0$ and $n$ be a natural number. Let $q_b(x)=q(x;b)$ be the function described previously, defined on the domain 
 $$\Omega_{[A,B]}={x\in \RR^n | \quad A\leq x_1 \leq \ldots \leq x_n \leq B} $$
 There is a constant 
 \[C_{n,p}= \frac{1}{8n(B4-A)^p} \]
 such that for all $x,y\in \Omega_{[A,B]}$,
	\[\EE_{b\sim [A,B]} |q_b(x)-q_b(y)|^p\geq C_{n,p}\|x-y\|_{\infty}^{p+1}.\]
\end{lemma}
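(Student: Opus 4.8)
\textbf{Proof plan for Lemma \ref{lem:infty}.}

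The plan is to reduce the statement to a one-dimensional estimate about a single coordinate where $x$ and $y$ differ the most, and then lower-bound the expectation over $b$ by integrating over a small but controlled sub-interval of biases. First I would fix $x,y \in \Omega_{[A,B]}$, both sorted in increasing order, and let $j$ be the index achieving $\|x-y\|_\infty = |x_j - y_j|$; write $\Delta = |x_j-y_j|$ and assume without loss of generality $x_j < y_j$. The key observation is that $q_b(x) - q_b(y) = \sum_i [\relu(x_i - b) - \relu(y_i - b)]$, and we want to find a range of $b$ on which this sum is guaranteed to have absolute value of order $\Delta$. The natural candidate is $b$ lying in an interval just below $y_j$ (or between $x_j$ and $y_j$), where the $j$-th term contributes a definite amount while the other terms are either constant in $b$ or can be controlled.

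The main technical maneuver I expect to use is a \emph{derivative / finite-difference argument in $b$}: the function $b \mapsto q_b(x) - q_b(y)$ is piecewise linear in $b$, with slope changing only at the points $\{x_i\}\cup\{y_i\}$; its derivative on a generic sub-interval equals (number of $y_i \le b$) minus (number of $x_i \le b$). Since $x$ and $y$ are sorted and differ at coordinate $j$, one can locate an interval of $b$-values of length comparable to $\Delta / n$ — or more robustly, partition $[A,B]$ into at most $2n$ sub-intervals determined by the breakpoints and argue that on at least one of them the slope is nonzero (hence $\pm1$ at least) — on which $|q_b(x)-q_b(y)|$ grows linearly and reaches magnitude at least $c\,\Delta/n$ on a $b$-set of measure at least $c\,\Delta/n$. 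Integrating $|q_b(x)-q_b(y)|^p \ge (c\Delta/n)^p$ over that set of measure $\ge c\Delta/n$, and dividing by the total length $B - A$ of the bias interval, yields $\EE_b |q_b(x)-q_b(y)|^p \gtrsim \Delta^{p+1}/(n^{p+1}(B-A))$, which is exactly the form of $C_{n,p}$ claimed (the explicit constant $\tfrac{1}{8n(B-A)^p}$ — note the paper's ``$B4$'' is surely a typo for $4B$ or $(B-A)$ — would come from tracking these factors carefully).

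The hard part will be handling the cancellation among the non-$j$ terms: a priori, when $b$ moves through the interval near $y_j$, several other coordinates could cross $b$ and their contributions could partially cancel the gain from coordinate $j$. The clean way around this, which I would pursue, is \emph{not} to isolate a single coordinate but to use the piecewise-linear structure globally: since $q_\cdot(x) - q_\cdot(y)$ vanishes at $b = B$ (all arguments negative) and is continuous and piecewise linear with integer slopes, if it is not identically zero then there is some breakpoint interval where its slope is a nonzero integer, and between that interval's location and $b=B$ the accumulated value must be at least the interval's length in absolute value at one endpoint; combined with a pigeonhole over the $\le 2n$ breakpoint intervals to guarantee one of length $\ge \|x-y\|_\infty/(2n)$ carrying nonzero slope, this gives the bound without ever dealing with delicate cancellations. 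I would then separately note that $q_\cdot(x) \equiv q_\cdot(y)$ as functions of $b$ forces $x = y$ (sorted), so on $x \ne y$ the argument applies, and finally invoke norm equivalence on $\RR^n$ to pass from the $\infty$-norm to the $2$-norm needed upstream.
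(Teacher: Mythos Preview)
Your overall shape matches the paper's: locate a bias $b_0$ at which $|q_{b_0}(x)-q_{b_0}(y)| \gtrsim \Delta := \|x-y\|_\infty$, then use that $b \mapsto q_b(x)-q_b(y)$ is $2n$-Lipschitz to obtain a sub-interval of length $\sim \Delta/n$ on which the difference stays $\gtrsim \Delta$, and integrate. Your derivative framing is in fact a clean way to reach $b_0$. But the pigeonhole step you propose is a real gap: partitioning $[A,B]$ by the $2n$ breakpoints produces intervals whose \emph{total} length is $B-A$, not $\Delta$, so pigeonhole alone gives no lower bound on the length of any interval \emph{carrying nonzero slope}. Knowing merely that the function is not identically zero does not force a nonzero-slope piece of length $\ge \Delta/(2n)$.

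The observation you are missing --- and this is precisely what the paper's explicit computation of $\Delta(x_s)$ and $\Delta(y_s)$ encodes --- is that sortedness pins down the slope on the \emph{entire} interval $(x_j,y_j)$ of length $\Delta$. For $b \in (x_j,y_j)$ one has $|\{i:x_i\le b\}|\ge j$ (since $x_1\le\cdots\le x_j<b$) while $|\{i:y_i\le b\}|\le j-1$ (since $b<y_j\le\cdots\le y_n$), so the slope of $q_b(x)-q_b(y)$ has absolute value at least $1$ throughout $(x_j,y_j)$. Integrating gives $\bigl|(q_{x_j}(x)-q_{x_j}(y))-(q_{y_j}(x)-q_{y_j}(y))\bigr|\ge \Delta$, hence some $b_0\in\{x_j,y_j\}$ satisfies $|q_{b_0}(x)-q_{b_0}(y)|\ge \Delta/2$; from there your Lipschitz-spreading and integration go through unchanged and reproduce the paper's constant. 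The paper reaches exactly the same inequality $\Delta(x_s)-\Delta(y_s)\ge y_s-x_s$ by expanding the two ReLU sums term by term rather than in derivative language; with the slope observation in hand, your route is equivalent and arguably tidier.
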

\begin{proof}
	Let $x\neq y$ be a pair in $\Omega_{[A,B]}$. Let $s$ be an index for which $|y_s-x_s|=\|y-x\|_\infty$.
	without loss of generality assume that $y_s>x_s$. For every $b\in \RR$ we denote $\Delta(b)=q_b(y)-q_b(x) $. Let $t$ be the smallest integer such that $x_t\geq y_s $. Note that $t>s$. We now have 
	\begin{align*}\Delta(y_s)&=\sum_{j=1}^n \left[\relu(y_j-y_s)-\relu(x_j-y_s) \right] =\sum_{j>s} (y_j-y_s)-\sum_{ j \geq t} (x_j-y_s)\\
 &=\sum_{s<k<t} (y_k-y_s)+\sum_{j\geq t}(y_j-x_j) \end{align*}
	Now
	\begin{align*}\Delta(x_s)&=\sum_{i<s} \relu(y_i-x_s)+(y_s-x_s)+\sum_{j>s}\left[(y_j-x_s)- (x_j-x_s) \right]\\
		&\geq (y_s-x_s)+\sum_{j>s}(y_j-x_j ) \\
		&= (y_s-x_s)+\sum_{s< k<t}(y_k-y_s)+\sum_{s< k<t}(y_s-x_k)+\sum_{j\geq t}(y_j-x_j)\\
        &\geq (y_s-x_s)+\sum_{s< k<t}(y_k-y_s)+\sum_{j\geq t}(y_j-x_j)\\
		&= (y_s-x_s)+\Delta(y_s)
	\end{align*}
	We deduce that $\Delta(x_s)-\Delta(y_s)\geq  y_s-x_s $, and therefore at least one of  $|\Delta(x_s)|$ and $|\Delta(y_s)|$ is larger than $\frac{y_s-x_s}{2}$, so we found some $b_0$ for which $|\Delta(b_0)|\geq \frac{y_s-x_s}{2}  $. Next, we note that since $\Delta$ is a sum of $2n$ ReLU functions which are all $1$-Lipschitz, $\Delta$ is $(2n)$ Lipschitz. Therefore if $b$ is such that $|b-b_0|\leq \delta:= \frac{1}{8n}(y_s-x_s)$ then 
	\[||\Delta(b)|-|\Delta(b_0)|| \leq |\Delta(b)-\Delta(b_0)|\leq 2n\delta =\frac{y_s-x_s}{4}  \]
	implying that $|\Delta(b)|\geq \frac{y_s-x_s}{4} $. Thus 
	\begin{align*}
 \EE_{b} |q_b(y)-q_b(x)|^p&=\EE_{b} |\Delta(b)|^p\geq \mu\{b| |b-b_0|<\delta \} \cdot \left[ \frac{y_s-x_s}{4}\right]^p\geq \frac{\delta}{(B-A)4^p}|y_s-x_s|^p\\
 &=\frac{1}{8n(B-A)4^p} |y_s-x_s|^{p+1}=\frac{1}{8n(B-A)4^p} \|y-x\|_{\infty}^{p+1}  \end{align*}
	\end{proof}
Now let us consider the case of unbalanced multisets $Y,Y'$ in $\SnOmega$.  Our goal will be to show that the distance between all unbalanced multisets is uniformly bounded from below away from zero. 

For fixed $a\in S^{d-1}$ denote $y=a\cdot Y$ and $y'=a\cdot Y'$. Denote $x=\rho_{(-B)}(y)$, that is, $x$ is the multiset obtained from adding elements with value $-B$ to $y$ until it has $Y$ elements. Similarly, denote $x'=\rho_{(-B)}(y')$. Note that since $y$ and $y'$ don't have the same number of elements, and all entries of $y$ are in $[-B',B'] $, we have that 
$$\|x-x'\|_\infty\geq B-B'. $$
For all $b\in [-B,B]$ we have that $\relu(-B-b)=0 $, and therefore, according to Lemma \ref{lem:infty}, we have 
\begin{align*}\EE_b|m_{ReLU}(Y;a,b)&-m_{ReLU}(Y';a,b)|^p=\EE_b|q_b(y)-q_b(y')|^p\\
&=\EE_b|q_b(x)-q_b(x')|^p\geq C_{n,p} \|x-x'\|_\infty^{p+1} \geq C_{n,p}( B-B')^{p+1} 
\end{align*}
We deduce that 
\begin{align*}\EE_{a,b}|m_{ReLU}(Y;a,b)&-m_{ReLU}(Y';a,b)|^p\geq C_{n,p}^p(B-B')^p\\
&=C_{n,p}^p(B-B')^p \left( \frac{W_1^z(Y,Y')}{W_1^z(Y,Y')} \right)^{p+1}\geq   C\left(W_1^z(Y,Y') \right)^{p+1}\end{align*}
For an appropriate constant $C$, where we use the fact that $W_1^z$ is bounded from above. We have obtained a $(p+1)/p $ lower \Holder{} bound for both balanced and unbalanced multisets, and so we are done.  
\end{proof}
 
\paragraph{Adaptive ReLU}
\AdaptiveRelu*
\begin{proof}
To prove this theorem, we first recall the definition of $\Fadapt$
\begin{align*}
&m=\min \{ a \cdot x_1,\ldots,a \cdot x_r\}, \quad M=\max \{ a \cdot x_1,\ldots,a \cdot x_r\}, \quad b=(1-t)m+tM\\
&\Fadapt(X;a,t)=[r,m,M,\frac{1}{r}\sum_{i=1}^r \relu (a \cdot x_i-b)]
\end{align*}
To begin with, we note that the case of unbalanced multisets is easy to deal with. There exists constants $0<c=1<C$ such that, for every pair of unbalanced multisets $Y,Y' $, and every choice of parameters $a,t$,  
$$1 \leq \|\Fadapt(X;a,t)-\Fadapt(X;a,t)\|_p \leq C $$
The lower bound follows from the fact that the first coordinate of $\Fadapt$ is the cardinality of the sets. The upper bound follows from compactness. Similarly, the augmented Wasserstein distance between all unbalanced multisets in $\mathcal{S}_{\leq n}(\Omega)$ is uniformly bounded from above and below. This can be used to obtain both uniform upper Lispchitz bounds, and lower \Holder{} bounds, as discussed in previous proofs.

Thus, it is sufficient to prove  uniform upper Lispchitz bounds, and lower \Holder{} bounds in expectation, for balanced multisets. Without loss of generality we can assume the balanced multisets both have maximal cardinality $n$. So we need to prove the claim on the space $\SnOmegaEqual$. 

We can write $\Fadapt$, restricted to $\SnOmegaEqual$, as a composition 
$$ \Fadapt(X;a,t)=Q \circ s(X;a.t) $$
where $s(X;a)=\mathrm{sort}(a^TX) $, which is the uniformly upper Lipschitz, and lower Lipschitz in expectation function defined in Lemma \ref{lem:sort_basic}, and 
$Q(x;t)$ is defined via
$$m=\mathrm{min} \{x_1,\ldots,x_n \}, \quad M=\mathrm{max} \{x_1,\ldots,x_n \}, \quad b=(1-t)m+tM   $$
and 
$$Q(x;t)=[n,m,M,\frac{1}{n}\sum_{i=1}^n\relu(x_i-b)] $$
As $s$ is is  uniformly upper Lipschitz, and lower Lipschitz in expectation, it is sufficient to show that $Q$ is upper Lipchitz, and lower \Holder{} in expectation. 

We note that since $\Omega$ is bounded, there exists some $B>0$ such that $\|x\|\leq B, \forall x \in \Omega$, and for this $B$ we have that the image of $s$ is contained in 
$$\Omega_{[-B,B]}=\{x\in \RR^n| \quad -B \leq x_1 \leq \ldots \leq x_n \leq B \} $$
We can therefore think of $\Omega_{[-B,B]}$ as the domain of $Q$. 

We begin with the upper Lipschitz bound. Let $x,y$ be vectors in $\Omega_{[-B,B]}$, which we identify with multisets with $n$ elements in $\RR$. Denote the maximum and minimum of $x$ by $M_x$ and $m_x$. Define the maximum and minimum of $y$ by $M_y$ and $m_y$. Denote 
$$\phi(s,t,m,M)=\relu(s-[(1-t)m+tM]) $$
Then for all 
$ t\in [0,1]$. 
\begin{align*}
   &||Q(X;t)-Q(Y;t)||_p  \le C\cdot ||Q(X;t)-Q(Y;t)||_1  
    \\
    &=C\cdot(|m_x-m_y| + |M_x-M_y| + \frac{1}{n}|\sum_{i=1}^n\phi(x_i,t,m_x,M_x)-\sum_{i=1}^n\phi(y_i,t,m_y,M_y)| )\\
    &\le C\cdot(|m_x-m_y| + |M_x-M_y| + \frac{1}{n}\sum_{i=1}^n|x_i-y_i+t(M_x-M_x) + (1-t)(m_x-m_y)|)\\
    &\le C\cdot(|m_x-m_y| + |M_x-M_y| + \frac{1}{n}\sum_{i=1}^n(|x_i-y_i|+|(M_x-M_y)| + |(m_x-m_y))|)\\
    &\le C\cdot(||x-y||_{\infty} + ||x-y||_{\infty} + \frac{1}{n}\sum_{i=1}^n3\cdot||x-y||_{\infty})\\
    &= 5C\cdot  ||x-y||_{\infty}
    \leq 5CC'\cdot||x-y||_p
\end{align*}
Where $C,C'$ are the constants obtained from norm equivalence over $\RR^n$. 

To obtain a \Holder{} lower bound, the idea of the proof is that for given balanced multisets $x,y\in \RR^n$, we know from the analysis of $m_\relu$ that we can get a lower-\Holder{} bound when considering biases going between 
\[m_{x,y}=\min\{m_x,m_y\} \text{ and } M_{x,y}=\max \{M_x,M_y\} \]
and then showing that the difference between this case and the function $Q$ where the bias range depends on the maximum and minimum of the individual multisets $x,y$, is proportional to the different between the minimum and maximum of $x$ and $y$, which also appear in $Q$. Indeed, since $\relu$ is Lipschitz we have for every $s,m,M,\hat m, \hat M$ in $\RR$ and $t\in [0,1]$ that

    \begin{align}
    \label{eq:helper}
    |\phi(s,t,m,M) - \phi(s,t,\hat{m},\hat{M})|& \le |-t(M-\hat{M}) - (1-t)(m-\hat{m})|\\
    &\le t|M-\hat{M}| + (1-t)|m-\hat{m}|  \le |M-\hat{M}| + |m-\hat{m}| \nonumber
    \end{align}
Next, to bound $\|Q(x;t)-Q(y;t)\|_p$, due to equivalence of norms, it is sufficient to bound $\|Q(x;t)-Q(y;t)\|_1$. We obtain

\begin{align*}
 \|Q(x;t)-Q(y;t)\|_1= &|m_x-m_y|+|M_x-M_y|\\&+\frac{1}{n}|\sum_{i=1}^n\phi(x_i,t,m_x,M_x)-\sum_{i=1}^n\phi(y_i,t,m_y,M_y)|\\
 & = |m_x-m_{xy}|+|M_x-M_{xy}| + |m_{xy}-m_{y}|+|M_{xy}-M_{y}|\\&+\frac{1}{n}|\sum_{i=1}^n\phi(x_i,t,m_x,M_x)-\sum_{i=1}^n\phi(y_i,t,m_y,M_y)|\\
 &\stackrel{\eqref{eq:helper}}{\geq} \frac{1}{n}|\sum_{i=1}^n\phi(x_i,t,m_{xy},M_{xy})-\sum_{i=1}^n\phi(x_i,t,m_{x},M_{x})|\\
 &+\frac{1}{n}|\sum_{i=1}^n\phi(y_i,t,m_{y},M_{y})-\sum_{i=1}^n\phi(y_i,t,m_{xy},M_{xy})|\\
 &+\frac{1}{n}|\sum_{i=1}^n\phi(x_i,t,m_x,M_x)-\sum_{i=1}^n\phi(y_i,t,m_y,M_y)|\\
 &\stackrel{\text{triangle ineq.}}{\geq} \frac{1}{n}|\sum_{i=1}^n\phi(x_i,t,m_{xy},M_{xy})-\sum_{i=1}^n\phi(x_i,t,m_{x},M_{x})\\
 &+\sum_{i=1}^n\phi(y_i,t,m_{y},M_{y})-\sum_{i=1}^n\phi(y_i,t,m_{xy},M_{xy})\\
 &+\sum_{i=1}^n\phi(x_i,t,m_x,M_x)-\sum_{i=1}^n\phi(y_i,t,m_y,M_y)|\\
 &= \frac{1}{n}|\sum_{i=1}^n\phi(x_i,t,m_{xy},M_{xy})-\sum_{i=1}^n\phi(y_i,t,m_{xy},M_{xy})|\\
 &=|q(x;(1-t)m_{xy}+tM_{xy})-q(y;(1-t)m_{xy}+tM_{xy})|
\end{align*}

By Lemma \ref{lem:infty}, we deduce that 
\begin{align*} \EE_{t \sim [0,1]} \|Q(x;t)-Q(y;t)\|_1^p&\geq \EE_{b \sim [m_{xy},M_{xy}]} |q(x;b)-q(y;b)|^p\\
&\geq \frac{1}{8n4^p(M_{xy}-m_{xy})}\|y-x\|_\infty^{p+1} \geq \frac{1}{16nB4^p} \|y-x\|_\infty^{p+1}
\end{align*}
by invoking the equivalence of the infinity norm and $p$ norm we are done.

Finally, we will show that, when $n\geq 4$, adaptive ReLU cannot be $\alpha$-\Holder{}
 for any $\alpha>(p+1)/p$. This argument is essentially a reduction to our argument in the standard ReLU summation case. For simplicity of notation we prove this for the case where  $d=1$. Extending the argument to the   $d\geq 1$ case is straightforward.

For any $\epsilon>0$, consider the sets 
 $$X=\lms 0,0,1,-1 \rms, \quad X_\epsilon= \lms \epsilon,-\epsilon,1,-1 \rms.$$
and note that 
$$W_1(X,X_\epsilon)=2\epsilon. $$
Next, we note that for all $a$ in the zero dimensional unit circle $\{-1,1\}$ we have that $a\cdot X=X, a\cdot X_\epsilon=X_\epsilon$, and moreover, that $X$ and $X_\epsilon$ have the same number of elements. Therefore 
$$\EE_{a,t}\|\Fadapt(X_\epsilon;a,t)-\Fadapt(X;a,t) \|_p=\frac{1}{4}\EE_{b\sim [-1,1]} |\relu(-\epsilon-b)+\relu(\epsilon-b)-2\relu(-b)| .$$
Next, we note, as in previous arguments, that 
\begin{align*}
|\relu(-\epsilon-b)+\relu(\epsilon-b)-2\relu(-b)|^p&=0, \quad \forall b \not \in [-\epsilon,\epsilon]\\
|\relu(-\epsilon-b)+\relu(\epsilon-b)-2\relu(-b)|^p&<(2\epsilon)^p, \quad \forall b\in \RR
\end{align*}
Thus, if $b$ is in $[-\epsilon, \epsilon]$, which occurs with probability $\epsilon$, the expression above will be at most $(2\epsilon)^p$, and otherwise it will be zero. The expectation of this expression over $b$ is thus bounded by $2^p\epsilon^{p+1}$. Piecing all this together, we obtain for all $\beta<(p+1)/p$ 
 \begin{equation*}
\EE_{a,t} \left[ \frac{\|\Fadapt(X_\epsilon;a,t)-\Fadapt(X;a,t) \|_p }{W_1^\beta(X,X_\epsilon)} \right]^p\leq \frac{1}{4^p}\frac{2^p\epsilon^{p+1} }{2^{p\cdot \beta}\epsilon^{p \cdot \beta}}=2^{p(1-2-\beta)}\epsilon^{p(\frac{p+1}{p}-\beta)}\overset{\epsilon \rightarrow 0}{\rightarrow} 0 .
 \end{equation*}
 This shows that $\Fadapt$ is not $\beta$ lower-\Holder{} in expectation. 
\end{proof}

\begin{remark}\label{rem:adapt}
   The last part of the proof above can be used to derive adversarial examples on which adaptive-ReLU will have bad distortion: namely, 
$$X=\lms 0,0,1,-1 \rms, \quad X_\epsilon= \lms \epsilon,-\epsilon,1,-1 \rms.$$
On these examples, adaptive-ReLU and ReLU summation will both encounter high distortion. In contrast, the examples discussed in the main text, like 
$$Y=\lms 0,0 \rms, Y_\epsilon=\lms \epsilon,-\epsilon \rms $$
one can verify directly that the expectation of  $|\Fadapt(Y',a,t)-\Fadapt(Y,a,t)|^p $ scales linearly in $\epsilon^p $. This is because the bias for these examples naturally is in the range $[-\epsilon,\epsilon]$ where 'it can make a difference', while in the $X,X_{\epsilon}$ example, the bias is chosen from all of $[-1,1]$ and its probability to land in the domain $[-\epsilon,\epsilon]$ where   'it can make a difference' scales like $\epsilon$.

We believe this gives a good intuition for the reason why adaptive-ReLU is successful on many of the adversarial examples illustrated in the text, and also suggests how they can be changed so that adaptive ReLU will fail: we simply need to take these examples and add to all multisets considered a large positive and a large negative element. An example of this idea is shown in Figure \ref{fig:side_by_side_figures}, where adaptive-ReLU is initially successful in a classification task based on adversarial examples (subplot (a)), but fails completely once a large positive and negative element are added (subplot (b)).
\end{remark}

\section{Lipschitz COMBINE operations}\label{app:2-tuples}
In this section we describe how to construct COMBINE functions which are both uniformly Lipschitz, and lower-Lipschitz in expectation. The input to these functions are a pair of vectors $x_1,y_1 $ with the metric $\|x_1\|_p+\|y_1\|_p$. The output will be a vector (or scalar), and the metric on the output space will again be the $p$ norm.

Our analysis covers four 2-tuple embeddings, all of which are  uniformly upper Lipschitz, and lower Lipschitz in expectation. 

\begin{theorem}\label{TupleEmbeddings}
    Let $p>0$, the following 2-tuple embeddings are all bi-Lipschitz in expectation with respect to $l_p$ and the 2-tuple metric previously defined.
    
    \begin{enumerate}
        \item \textit{Linear combination}: Given a 2-tuple $(x,y)\in \RR^d\times \RR^d$ and $\alpha \sim U[-D,D]$ we define
        \[
            f(x,y; \alpha) = \alpha \cdot x + y
        \]
    
        \item \textit{Linear transform and sum}: Given a 2-tuple $(x,y)\in\RR^k \times \RR^l$, and $A \in \mathbb{R}^{l \times k}$, whose rows are independently and uniformly sampled from $S^{k-1} $, we define
        \[
        f(x,y; A) = Ax + y
        \]
    
        \item \textit{Concatenation}: 
        This embedding concatenates the tuple entries, and has no parameters
        \[
            f(x,y) = \begin{bmatrix}
                x \\
                y
            \end{bmatrix}
        \]
        \item \textit{Concat and project}: Given $(x,y)\in \RR^k \times \RR^l$ and  $\theta \sim S^{k+l-1}$
        \[
        f(x,y;\theta) = \theta\cdot\begin{bmatrix}
            x\\
            y
        \end{bmatrix}
        \]
        
    \end{enumerate}
\end{theorem}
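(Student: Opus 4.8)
The plan is to treat the four embeddings one at a time, and for each establish the uniform upper Lipschitz bound and the lower Lipschitz in expectation bound separately. Throughout I would abbreviate $u=x-x'$ and $v=y-y'$, so that the $2$-tuple distance between $(x,y)$ and $(x',y')$ is $\|u\|_p+\|v\|_p$, and I would freely rescale so that $\|u\|_p+\|v\|_p=1$, since every map in sight is linear in $(u,v)$ and the ratio defining \Holder{} continuity is scale invariant. All four upper Lipschitz claims are immediate. For the linear combination, $f(x,y;\alpha)-f(x',y';\alpha)=\alpha u+v$, so the triangle inequality gives the constant $\max(D,1)$. For the linear transform, $\|Au+v\|_p\le\|Au\|_p+\|v\|_p$, and since each row of $A$ is a unit vector $\|Au\|_\infty\le\|u\|_2$, so equivalence of norms on $\RR^k$ and $\RR^l$ yields a uniform constant. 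Concatenation is an isometry up to the fixed constant relating $\|[u;v]\|_p$ to $\|u\|_p+\|v\|_p$, and concat-and-project is the composition of concatenation with the linear functional $L$ of Lemma~\ref{lemma: abs_norm_preservation}, which is uniformly $1$-Lipschitz.

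Two of the lower bounds are then immediate. Concatenation needs no expectation: $\|[u;v]\|_p^p=\|u\|_p^p+\|v\|_p^p\ge 2^{1-p}(\|u\|_p+\|v\|_p)^p$ by the power-mean inequality, so it is deterministically lower Lipschitz. Concat-and-project is the composition $L\circ(\text{concatenation})$, where concatenation is deterministically lower Lipschitz with exponent $1$ and $L$ is lower Lipschitz in expectation with exponent $1$ by Lemma~\ref{lemma: abs_norm_preservation}; Lemma~\ref{lem:composition} then gives lower \Holder{} in expectation with exponent $1\cdot1=1$.

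The two substantive cases, the linear combination and the linear transform, both reduce to a one-parameter scalar estimate. For the linear combination the coordinates of $\alpha u+v$ share the single random parameter $\alpha$, so I would write $\EE_{\alpha}\|\alpha u+v\|_p^p=\sum_i\EE_{\alpha\sim U[-D,D]}|\alpha u_i+v_i|^p$ and prove the sub-lemma: there is $c_{D,p}>0$ with $\EE_{\alpha\sim U[-D,D]}|\alpha a+b|^p\ge c_{D,p}(|a|+|b|)^p$ for all $a,b\in\RR$. By homogeneity and the symmetry $\alpha\mapsto-\alpha$ one may assume $|a|\in\{0,1\}$ and $b\ge0$; the only danger is that the affine map $\alpha\mapsto\alpha a+b$ vanishes somewhere in $[-D,D]$, which is neutralised by integrating over a fixed subinterval on which $|\alpha a+b|$ is bounded below by a fixed multiple of $|a|+|b|$. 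Summing over coordinates and using $\sum_i(|u_i|+|v_i|)^p\ge\|u\|_p^p+\|v\|_p^p\ge2^{1-p}(\|u\|_p+\|v\|_p)^p$ closes the case. For the linear transform the rows $a_1,\dots,a_l$ of $A$ are independent, so $\EE_A\|Au+v\|_p^p=\sum_{j=1}^l\EE_{a_j\sim S^{k-1}}|a_j\cdot u+v_j|^p$, and rotational invariance reduces the $j$-th term to $\EE|\,\|u\|_2\,\xi+v_j|^p$ with $\xi$ the first coordinate of a uniform point on $S^{k-1}$. For $k\ge2$ the law of $\xi$ has a density bounded below by a positive constant on $[0,\tfrac12]$ (for $k=1$, $\xi=\pm1$ with probability $\tfrac12$ each, a two-term computation), so integrating over $[0,\tfrac12]$ gives $\EE|\,\|u\|_2\,\xi+v_j|^p\ge c_{k,p}(\|u\|_2+|v_j|)^p$; equivalence of $\|\cdot\|_2$ and $\|\cdot\|_p$ on $\RR^k$, summation over $j$, and the power-mean inequality as before finish the proof.

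I expect the main obstacle to be exactly the one-parameter scalar estimate underlying cases (1) and (2): lower-bounding $\EE|(\text{parameter})\cdot a+b|^p$ uniformly over \emph{all} $a,b$, including the configurations for which the affine map has a zero inside the support of the parameter. The remedy — restricting the expectation to a fixed sub-region of parameter space on which the affine map is bounded away from zero by a controlled amount, and, in the spherical case, additionally invoking a uniform lower bound on the marginal density of one coordinate away from $\pm1$ — is routine, but is where all the care concentrates; in particular the degenerate $k=1$ case of embedding (2), where the parameter law is atomic, must be verified by hand.
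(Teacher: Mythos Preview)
Your proposal is correct. For concatenation and concat-and-project you argue exactly as the paper does (norm equivalence on $\RR^2$ for the former, composition with Lemma~\ref{lemma: abs_norm_preservation} for the latter). For the two parametric cases you take a genuinely different route: you expand $\|\alpha u+v\|_p^p$ and $\|Au+v\|_p^p$ coordinate-by-coordinate and reduce everything to the scalar sub-lemma $\EE|(\text{param})\cdot a+b|^p\ge c\,(|a|+|b|)^p$, established by restricting the integral to a sub-region of parameter space on which the affine map is bounded away from zero. The paper instead passes to the $l_1$ norm, applies Jensen to pull the $p$-th power outside the expectation, and then replaces $\EE_\alpha\|\alpha u+v\|_1$ by $\EE_\alpha\|\alpha u\|_1+\|v\|_1$ (and, for the linear transform, splits $|\langle a_i,u\rangle+v_i|^p$ into $|\langle a_i,u\rangle|^p+|v_i|^p$). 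That splitting is the triangle inequality in the wrong direction and is not justified as written, so your coordinate-wise scalar lemma is the more robust argument at precisely the step where the paper's proof is loosest.
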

We note that the last method, concat and project, is the method used in the definition of SortMPNN in the main text. We also note that the first method, linear combination, is the method used in \citep{xu2018how}. Unlike the other methods, it requires the vectors $x,y$ to be of the same dimension.

In order to compare the proposed COMBINE functions which all share the bi-Lipschitz in expectation property, one could further explore separation quality by comparing the distortion defined by $\frac{M}{m}$ where $M$ ($m$) is the upper (lower) Lipschitz in expectation bound. A lower distortion would indicate that the function doesn't change the input metric as much (possibly up to multiplication by some constant), resulting in high quality separation. 

In figure \ref{fig:combine_exp}, we plot the empirical distortion of the different COMBINE functions with varying input and embedding dimensions. The embedding dimension is controlled by stacking multiple instances of the functions with independent parameters. Note that \textit{concatenation} isn't parametric and therefore we only have a single output dimension for it.

The experiment is run on $1,000$ different random tuple pairs $\{(x,y),(z,w)\}$ for each of the four input dimensions experimented with, where $x,y,z,w\in\RR^{\text{in\_dim}}$. All random data vectors are sampled with entries drawn from the normal distribution. The empirical distortion is then computed by taking the largest empirical ratio $M$ between the input and output distances and dividing it by the smallest ratio $m$ between the input and output distances. 

As we see in the figure, all proposed functions stabilize around a constant value as the embedding dimension increases. This value is the expected distortion on the experiment data. We can see that \textit{concat and project} maintains low expected distortion across all settings, while \textit{linear combination} has a higher expected distortion but not by much. \textit{Linear transform and sum} seems to have higher expected distortion for lower input dimensions, but it improves with large input dimensions, matching \textit{concat and project} distortion.

Another interesting result is that we can see that \textit{concat and project} seems to have the highest variance of the three for most input dimensions, since the empirical distortion for low embedding dimensions is higher than other functions in most  cases.

 \begin{figure}
\centering
     \includegraphics[width=\textwidth]{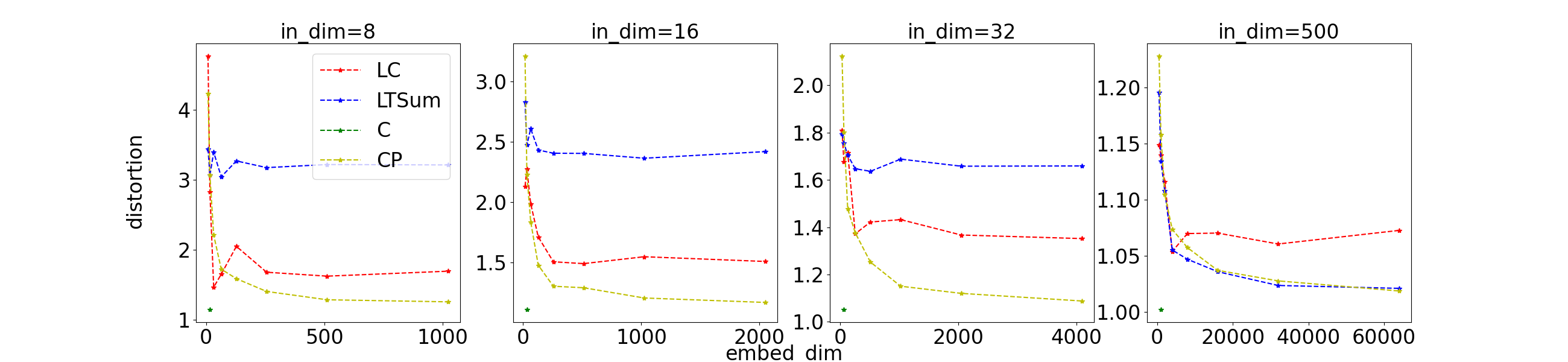}
     \caption{Distortion of the 2-tuple embeddings as a function of input and embedding dimension. LC=linear combination, LTSum=linear transform and sum, CP=concat project, C=concatenation.}
     \label{fig:combine_exp}
 \end{figure}

\begin{proof}[Proof of Theorem \ref{TupleEmbeddings}]
    \hspace{0.5mm}

 \textbf{Linear combination}: Let $x,z\in \mathbb{R}^k, y,w\in \mathbb{R}^l$
    We begin by proving that $f$ is lower Lipschitz in expectation.
    \[
    \mathbb{E}_{\alpha \sim U[-D,D]}[||f(x,y;\alpha) - f(z,w;\alpha)||_p^p]
    = \mathbb{E}_{\alpha \sim U[-D,D]}[||(\alpha\cdot x+y) -(\alpha\cdot w + z)||_p^p]
    \]
    \[
    = \mathbb{E}_{\alpha \sim U[-D,D]}[||\alpha\cdot (x-z) +(y-w )||_p^p]
    \]
    From norm equivalence \ref{norm_equiv}, $\exists c_1$ s.t.
    \[
    \ge c_1\cdot \mathbb{E}_{\alpha \sim U[-D,D]}[||\alpha\cdot (x-z) +(y-w )||_1^p]
    \]
    \[
    \stackrel{\text{Jensen's ineq.}}{\ge} 
    c_1\cdot \mathbb{E}_{\alpha \sim U[-D,D]}[||\alpha\cdot (x-z) +(y-w )||_1]^p
    \]
    \[
    = c_1\cdot {E}_{\alpha \sim U[-D,D]}[||\alpha\cdot (x-z)||_1 +||(y-w )||_1]^p
    \]
    From norm equivalence \ref{norm_equiv}, $\exists c_2$ s.t.
    \[
    \ge c_1\cdot c_2\cdot ({E}_{\alpha \sim U[-D,D]}[|\alpha|]\cdot|| (x-z)||_p +||(y-w )||_p)^p
    \]
    \[
    = c_1\cdot c_2\cdot (\frac{D}{2}\cdot|| (x-z)||_p +||(y-w )||_p)^p 
    \]
    \[
    \ge c_1\cdot c_2\cdot min(1,\frac{D}{2}^p)\cdot (|| (x-z)||_p +||(y-w )||_p)^p 
    \]

    For uniform upper Lipschitz bounds we have   
    \begin{align*}
    ||f(x,y;\alpha) - f(z,w;\alpha)||_p
    &= ||(\alpha\cdot x+y) -(\alpha\cdot w + z)||_p\\
    &\leq \|\alpha \cdot x-\alpha \cdot w\|_p+\|y-z\|_p\\
    &\leq 
    \mathrm{max}(1,D)^p \cdot (||(x-z)||_p +||(y-w )||_p)
    \end{align*}
    
        \textbf{Linear transform and sum}: We prove lower-Lipschitzness in expectation.  Let $a_i$ denote  the i'th row of $A$. Recall that for any $1\le i\le l$, $a_i$ is drawn uniformly from $S^{k-1}$.
    Let $x,z\in \mathbb{R}^k, y,w\in \mathbb{R}^l$
    \[
    \mathbb{E}_{A}[||f(x,y;A) - f(z,w;A) ||_p^p]
    = \mathbb{E}_{A}[|| A(x-z) + (y-w) ||_p^p]
    \]
    \[
    = \mathbb{E}_{A}[\sum_{i=1}^{l}|a_i\cdot (x-z)|^p + \sum_{j=1}^{l}|y_j-w_j|^p]
    = \sum_{i=1}^{l} \mathbb{E}_{a_i\sim S^{k-1}}[| a_i\cdot (x-z)|^p] + \sum_{j=1}^{l}|y_j-w_j|^p
    \]
    From lemma \ref{lemma: abs_norm_preservation} $\exists b>0$ s.t.
    \begin{align*}  \mathbb{E}_{A}[||f(x,y;A) - f(z,w;A) ||_p^p]
    &\geq n\cdot b\cdot ||x-z ||_p^p + ||y-w||_p^p
 \end{align*}
    Let us denote $t= \begin{bmatrix}
        ||x-z ||_p\\
        ||y-w||_p
    \end{bmatrix}\in \mathbb{R}^2$.
  Then
    \[
    n\cdot b\cdot ||x-z ||_p^p + ||y-w||_p^p \ge min(1,n\cdot b) \cdot (||x-z ||_p^p + ||y-w||_p^p) 
    =  min(1,n\cdot b) \cdot ||t||_p^p
    \]
    From norm equivalence \ref{norm_equiv}, $\exists c >0$
    \[
    \ge min(1,n\cdot b) \cdot c \cdot ||t||_1^p = min(1,n\cdot b) \cdot c \cdot (||x-z ||_p + ||y-w||_p)^p
    \]
    
 Uniform upper Lipschitzness is straightforward to prove.
        
        \textbf{Concatenation}: We show that this non-parametric function is bi-Lipschitz.
Let $x,z\in \mathbb{R}^k, y,w\in \mathbb{R}^l$. Denote as before 
$t= \begin{bmatrix}
        ||x-z ||_p\\
        ||y-w||_p
    \end{bmatrix}\in \mathbb{R}^2$. Then
 the distance difference in the output space is given by $|| f(x,y) - f(z,w) ||_p^p=\|t\|_p$, while the difference in the input space is given by 
    $$\|x-z\|_p+\|y-w\|_p=\|t\|_1 .$$
    The claim follows from equivalence of $p$ norm and $1$ norm on $\RR^2$ \ref{norm_equiv}.
  
      \textbf{Concat and project}: The claim follows from the fact that the concatenation operation is bi-Lipschitz, and Lemma \ref{lemma: abs_norm_preservation}.
\end{proof}

\section{Tree Mover's Distance} \label{app:tmd}
In this appendix section we review the definition of the  Tree Mover's Distance (TMD) from \citep{chuang2022tree}, which is the way we measures distances between graphs in this paper. 

We first review  Wasserstein distances. Recall that if $(X,D) $ is a metric space, $\Omega \subseteq X$ is a subset, and $z$ is some point in $X$ with $dist(z,\Omega)>0$, then we can define the Wasserstein distance on the space  of multisets consisting of $n$ elements in $\Omega$ via
\[
{W}_{1}(\lms x_1,\ldots,x_n \rms,\lms y_1,\ldots,y_n \rms) =\min_{\tau \in S_n} \sum_{j=1}^n D(x_j,y_{\tau(j)})
\]
The augmentation map on multisets of size $r\leq n$ is defined as 
$$\rho_{(z)}\left(\lms x_1,\ldots,x_r\rms \right)=\lms x_1,\ldots,x_r,x_{r+1}=z,\ldots.x_n=z\rms $$

and the augmented distance on multisets of size up to $n$ is defined via
$$W_{D,1}^{(z)}(X,\hat X)=W_{D,1}(\rho_{(z)}X,\rho_{(z)}\hat X) $$ 

We now return to define the TMD. We consider the space of graphs $\Gdomain$, consisting of graphs with $\leq n$ nodes, with node features coming from a compact domain $\Omega \subseteq \RR^d$. We also fix some $z\in \RR^d \setminus \Omega$.  The TMD is defined using the notion of computation trees:

\begin{definition}
    (Computation Trees). Given a graph $G=(V,E)$ with node features $\lms x_v \rms_{v\in V} $, let $T_v^{(0)}$ be the rooted tree with a single node $v$, which is also the root of the tree, and node features $x_v$. For $K \in \NN^+$  let $T_v^{(K)}$ be the depth-$K$ computation tree of node $v$ constructed by connecting the neighbors of the leaf nodes of $T_v^{(K-1)}$ to the tree. Each node is assigned the same node feature it had in the original graph $G$. The multiset of depth-$K$ computation trees defined by $G$ is denoted by $\mathcal{T}_G^{(K)}:=\lms T_v^{(K)} \rms_{v\in V}$.
    Additionally, for a tree $T_r$ with root $r$, we denote by $\mathcal{T}_r$ the multiset of subtrees that root at the descendants of $r$. 
\end{definition}

\begin{definition}
    (Blank Tree). A blank tree $\blankT$ is a tree (graph) that contains a single node and no edge, where the node feature is the blank vector $z$. 
\end{definition}
Recall that by assumption, all node features will come from the compact set $\Omega$, and $z \not \in \Omega$.

We can now define the tree distance:
\begin{definition}
    (Tree Distance).\footnote{Note the difference from the original definition in \citep{chuang2022tree} is due to our choice to set the depth weight to 1 and using the 1-Wasserstein which is equivalent to optimal transport} 
    The distance between two trees $T_a, T_b$ with features from $\Omega$ and $z\not \in \Omega$, is defined recursively as
    \[
    TD(T_a, T_b) := \begin{cases}
      \|x_{a}-x_{b}\|_p + W_{TD,1}^{(\blankT)} (\mathcal{T}_{a}, \mathcal{T}_{b}) & \text{if $K>0$} \\
      \|x_{a}-x_{b}\|_p & \text{otherwise}  
    \end{cases}
    \]
    where $K$ denotes the maximal depth of the trees $T_a$ and $T_b$. 
\end{definition}

\begin{definition}
    (Tree Mover's Distance). Given two graphs, $G, H$ and $w,K \ge 0$, the tree mover's distance is defined as 
    \[
    TMD^{(K)}(G, H) = W^{(\blankT)}_{TD,1}(\mathcal{T}_{G}^{(K)}, \mathcal{T}_{H}^{(K)})
    \]
\end{definition}
where $\mathcal{T}_{G}^{(K)}$ and $\mathcal{T}_{H}^{(K)}$ denote the multiset of all depth $K$ computational trees arising from the graphs $G$ and $H$, respectively. 
We refer the reader to \citep{chuang2022tree} where they prove this is a pseudo-metric that fails to distinguish only graphs which cannot be separated by $K$ iterations of the WL test.

\section{MPNN \Holder{} Proofs}

In the main text we informally stated the following theorem
\LipschitzMPNN*

Before stating the formal theorem and proof, we will present an important lemma alongside some assumptions and notation that will be used throughout this section.

\begin{lemma}
    \label{lemma:blank_tree_equiv}
  Let $\Omega \subseteq \RR^d$ be a compact set and let  $f: \Gdomain\rightarrow \mathbb{R}^{m}$ be a depth $K$ MPNN such that all the aggregation $\phi^{(k)}$, combine $\psi^{(k)}$ and readout $\eta$ are continuous. Let $z_k \not \in \Omega_k$, where $\Omega_k$ is the set of all possible feature vectors after running $k\le K$ message passing layers. Then, for any $G\in \Gdomain, v\in V_G$  and for any $p\ge1$, there exist $ C\ge c>0$ such that
    \[
    C\cdot TD(T_v^{(k)}, \blankT)\ge|| z_k -  x_v^{(k)}||_p \ge c\cdot TD(T_v^{(k)}, \blankT)
    \]
    
\end{lemma} 

\begin{proof}
    We begin by showing that $\Omega_k$ is compact.

    Since the number of nodes in graphs from $\Gdomain$ is bounded, the set of all possible graph topologies, i.e. pairs $(V,E)$ with at most $n$ nodes, is finite. Let $(V,E)$ be such a fixed topology. For every $k$, the function $f^{(k)}_{V,E}\left((x_v)_{v\in \Omega},\theta\right) $ mapping features $\Omega$ and parameters $\theta$ to new parameter $x_v^{(k)}$ is continuous since the aggregation and combine functions are continuous. Accordingly the image $f{(k)}_{V,E}(\Omega) $ is compact, and therefore 
    $$\Omega_k=\bigcup_{(V,E), |V|\leq n} f^{(k)}_{V,E}(\Omega) $$
    is compact as well.

    Therefore:
    \[a:=\inf_{ x \in \Omega_k} ||x-z_k||_p > 0\]
    \[d:=\sup_{ x \in \Omega_k} ||x-z_k||_p < \infty\]
    Additionally, since $\Omega$ is compact and $z\not \in \Omega$, we can show recursively, starting from $k=1$, that for all $k\in \{0\ldots,K\}$ and all trees $T_v^{(k)} \in \mathcal{T}^{(k)}_{\Gdomain}$, 
    \[b:=\inf_{T_v^{(k)} \in \mathcal{T}^{(k)}_{\Gdomain}} TD(\blankT,T_v^{(k)})> 0\]
    \[e:=\sup_{T_v^{(k)} \in \mathcal{T}^{(k)}_{\Gdomain}} TD(\blankT,T_v^{(k)})< \infty\]
    Where $\mathcal{T}^{(k)}_{\Gdomain}$ denotes the set of all possible height-$k$ computation trees from $\Gdomain$.
    Therefore, $\forall  G\in \Gdomain, v\in V_G$:
    \[
      \frac{d}{b}\cdot TD(\blankT,T_v^k) \ge d\cdot \frac{b}{b}=d \ge||z_k-x_v^{(k)} ||_p \ge a = a\cdot \frac{e}{e}\ge \frac{a}{e}\cdot TD(\blankT,T_v^k)
    \]   
    To conclude we define $C:=\frac{d}{b}, c:=\frac{a}{e}$.
    
\end{proof}

\paragraph{Assumptions and notations}
\phantomsection \label{par:mpnn_assum}
As stated in the main text, we consider graphs with up to $n$ nodes. We  will also make the disjointness assumption: We assume as previously that the initial features of the graphs all reside in a compact set $\Omega=\Omega_0\subseteq \RR^d$. We denote by $\Omega_k$ the space of all possible features which can be obtained by the MPNN at question after $k$ iteration, with any choice of parameters. This set is also compact (see proof of \ref{lemma:blank_tree_equiv}). The disjointness assumptions is that the 'augmentation vector' $z_k$ is not in $\Omega_k$, for all $k=0,1,\ldots,K$.  

We define for $k=0,\ldots,K-1$
$$x_{v,s}^{(k)}=\twopartdef{x_s^{(k)}}{(v,s)\in E}{z_k}{otherwise}, \text{ and } 
T_{v,s}^{(k)}=\twopartdef{T_s^{(k)}}{(v,s)\in E}{\blankT}{otherwise}$$
For $K$ we denote 
$$x_{s}^{(K)}=\twopartdef{x_s^{(K)}}{s\in V}{z_{K}}{otherwise}, \text{ and } 
T_{s}^{(K)}=\twopartdef{T_s^{(K)}}{s\in V}{\blankT}{otherwise} $$
We will also denote
$$x_{v,\bullet}^{(k)}=[x_{v,1}^{(k)},\ldots,x_{v,n}^{(k)}] \text{ and } x_{\bullet}^{(K)}=[x_{1}^{(K)},\ldots,x_{n}^{(K)}] $$

We can now state the  full formal statement of the theorem:

\begin{theorem}(Uniformly Lipschitz  MPNN embeddings, full version)
    \label{thm:mpnn_Lip}
    Let $p\ge 1$ and $K \in \NN$,
    Let $f: \Gdomain\rightarrow \mathbb{R}^{m}$ be a continuous MPNN with $K$ message passing layers. Under the above assumptions \ref{par:mpnn_assum} and given  the following holds for all $1\le k\le K$:
    \begin{enumerate}
        \item The aggregation, $\phi^{(k)}$, is uniformly upper Lipschitz  w.r.t. the augmented Wasserstein distance $W_1^{(z_k)} $ on $\mathcal{S}_{\leq n}(\Omega_k) $.

        \item The combine function  $\psi^{(k)}$, is uniformly upper Lipschitz.
            
        \item the readout function $\upsilon$, is uniformly upper Lipschitz  w.r.t. the augmented Wasserstein distance $W_1^{z_{(K+1)}} $ on $\mathcal{S}_{\leq n}(\Omega_{K+1}) $..
            
    \end{enumerate}
    Then, there exist constants $C_1,C_2>0$ such that
    \begin{enumerate}
        \item The \textbf{node embeddings} after $k$ message passing layers are uniformly upper Lipschitz w.r.t. $TD$  on their depth $k$ computation trees
        \[
        \|x^{(k)}_v-x^{(k)}_u\|_p^p\le C_1\cdot TD^p(T^{(k)}_v,T^{(k)}_u)
        \]
        
        \item The \textbf{graph embeddings} are uniformly upper Lipschitz w.r.t. $TMD^{(K)}$
        \[
        \|\cglobal-\cglobalhat\|_p^p \le C_2\cdot TMD^{(K)}(G,\hat{G})^p
        \]
    \end{enumerate}
\end{theorem}

\begin{proof}

    Since we are proving for uniformly Lipschitz, we will allow ourselves to omit the parameters for ease of notation, as the proof holds for any set of parameters. 
    
    \item \paragraph{Proof of First Claim}
    Let $u,v\in G$. The proof will be by induction on $k$. for $k=0$, by definition 
    \begin{align*}
        \|x_v^{(0)}-x_u^{(0)}\|_p^p = TD^p(T_v^{(0)}, T_u^{(0)})
    \end{align*}

    We now assume correctness for $k-1$ and prove for $k$.
    \begin{enumerate}
        \item \label{neighborhood_prop} First, using the properties of $\phi^{(k)}$ and the induction hypothesis we get:
        \begin{align*}
            &\|\phi^{(k)}(\N_v^{(k-1)})-\phi^{(k)}(\N_u^{(k-1)})\|_p^p \le c_{\phi} W_1^p(x^{(k-1)}_{v,\bullet},x^{(k-1)}_{u,\bullet})\\
            & \stackrel{\ref{wass_equiv}}{\le} c_{\phi}\cdot c_1 W_p^p(x^{(k-1)}_{v,\bullet},x^{(k-1)}_{u,\bullet}) = c_{\phi}\cdot c_1 (\min_{\tau \in S_n}\sum_{s=1}^n \|x^{(k-1)}_{v,s}-x^{(k-1)}_{u,\tau(s)}\|_p)^p\\
            & \stackrel{\ref{norm_equiv}}{\le} c_{\phi}\cdot c_1\cdot c_2 (\min_{\tau \in S_n}\sum_{s=1}^n \|x^{(k-1)}_{v,s}-x^{(k-1)}_{u,\tau(s)}\|_p^p)\\
            &\stackrel{\text{ind. hypothesis}+\ref{lemma:blank_tree_equiv}}{\le} c_{\phi}\cdot c_1\cdot c_2\cdot c_{(k-1)} \sum_{s=1}^n TD^p (T^{(k-1)}_{v,s}, T^{(k-1)}_{u,\tau^*(s)})\\
            & \stackrel{\ref{norm_equiv}}{\le} c_{\phi}\cdot c_1\cdot c_2\cdot c_{(k-1)}\cdot c_3 (\sum_{s=1}^n TD (T^{(k-1)}_{v,s}, T^{(k-1)}_{u,\tau^*(s)}))^p \\
            & = c W^{p}_{TD,1}(T^{(k-1)}_{v,\bullet},T^{(k-1)}_{u,\bullet})
        \end{align*}
        Where $\tau^*:=\argmin_{\tau \in S_n}\sum_{s=1}^n TD (T^{(k-1)}_{v,s}, T^{(k-1)}_{u,\tau(s)})$, $c_1,c_2,c_3$ are the relevant constants from the lemmas we used and $c$ is the multiplication of all previous constants.

        \item \label{TD_monotonicity} Next, we note that
        \begin{align*}
            TD(T_v^{(k-1)}, T_u^{(k-1)}) \le TD(T_v^{(k)}, T_u^{(k)})
        \end{align*}
        Since the depth $k-1$ computation trees are subtrees of the depth $k$ computation trees.
        
        \item Next, using the properties of $\psi^{(k)}$, the induction hypothesis and the above we get:
        \begin{align*}
            &\|x_v^{(k)}-x_u^{(k)}\|_p^p = \|\psi^{(k)}(x_v^{(k-1)}, \phi^{(k)}(\N_v^{(k-1)}))-\psi^{(k)}(x_u^{(k-1)}, \phi^{(k)}(\N_u^{(k-1)}))\|_p^p \\
            &\le c_{\psi} \cdot (\|x_v^{(k-1)}-x_u^{(k-1)}\|_p^p + \|\phi^{(k)}(\N_v^{(k-1)})-\phi^{(k)}(\N_u^{(k-1)})\|_p^p)\\
            &\stackrel{\text{ind. hypothesis}}{\le} c_{\psi} \cdot (c_{(k-1)}TD^p(T^{(k-1)}_v,T^{(k-1)}_u) + \|\phi^{(k)}(\N_v^{(k-1)})-\phi^{(k)}(\N_u^{(k-1)})\|_p^p)\\
            &\stackrel{\ref{neighborhood_prop}}{\le} c_{\psi} \cdot (c_{(k-1)}TD^p(T^{(k-1)}_v,T^{(k-1)}_u) + c W^{p}_{TD,1}(T^{(k-1)}_{v,\bullet},T^{(k-1)}_{u,\bullet}))\\
            &\stackrel{\ref{TD_monotonicity}}{\le} c_{\psi} \cdot (c_{(k-1)}TD^p(T^{(k)}_v,T^{(k)}_u) + c W^{p}_{TD,1}(T^{(k-1)}_{v,\bullet},T^{(k-1)}_{u,\bullet}))\\
            &\stackrel{\ref{norm_equiv}}{\le} c_{\psi} \cdot max(c_{(k-1)}, c) \cdot c_4 (TD(T^{(k)}_v,T^{(k)}_u) + W_{TD,1}(T^{(k-1)}_{v,\bullet},T^{(k-1)}_{u,\bullet}))^p\\
            &\le c_{\psi} \cdot max(c_{(k-1)}, c_1) \cdot c_4\cdot 2^p\cdot TD^p(T^{(k)}_v,T^{(k)}_u)
        \end{align*}
    \end{enumerate}
    Where $c_4$ is the relevant constant from norm equivalence,
    concluding the proof of the first claim.

    \item \paragraph{Proof of Second Claim} 
    Let us denote the final message passing features corresponding to the nodes of G  by $x_v^{(K)}$, and the depth-$K$ computation trees corresponding to each node of G by $T_v^{(K)}$. Similarly, we denote the features corresponding to the nodes of $\hat{G}$ by $\hat{x}_v^{(K)}$ and the trees by $\hat{T}_v^{(K)}$. Then, using the properties of $\upsilon$ and first part of the theorem we get
    \begin{align*}
        & \|\cglobal-\cglobalhat\|_p^p = \|\upsilon(x^{(K)}_{\bullet})-\upsilon(\hat{x}^{(K)}_{\bullet})\|_p^p\\
        &\le c_{\upsilon}W_1^p(x^{(K)}_{\bullet}, \hat{x}^{(K)}_{\bullet}) \stackrel{\ref{wass_equiv}}{\le} c_{\upsilon}\cdot c_1 W_p^p(x^{(K)}_{\bullet}, \hat{x}^{(K)}_{\bullet})
        = c_{\upsilon} \cdot c_1 (\min_{\tau\in S_n} \sum_{s=1}^n \|x^{(K)}_{s}-\hat{x}^{(K)}_{\tau(s)}\|_p^p)\\
        & \stackrel{(*)}{\le} c_{\upsilon} \cdot c_1 \cdot C\cdot  \sum_{s=1}^n TD^p (T^{(K)}_{s},\hat{T}^{(K)}_{\tau^*(s)}) \stackrel{\ref{norm_equiv}}{\le}  c_{\upsilon} \cdot c_1 \cdot C\cdot c_2\cdot (\sum_{s=1}^n TD (T^{(K)}_{s},\hat{T}^{(K)}_{\tau^*(s)}))^p \\
        & = c_{\upsilon} \cdot c_1 \cdot C\cdot c_2\cdot TMD^{(K)}(G,\hat{G})^p
    \end{align*}
    Where (*) is the first claim and lemma \ref{lemma:blank_tree_equiv}, $c_1,c_2$ are the relevant constants from norm equivalence and $\tau^*=\argmin_{\tau\in S_n} \sum_{s=1}^n TD (T^{(K)}_{s},\hat{T}^{(K)}_{\tau(s)})$, concluding the proof.
    
\end{proof}

\subsection{SortMPNN}

In the main text we stated the informal theorem
\ThmSortMPNNInformal*

We next state the theorem formally and present the proof.

Our results in this section hold for any number of MPNN iterations $K$, and any choice of 'widths' $W_1,\ldots,W_{K+1}$. For simplicity we prove this for the apriori hardest case, $W_1=1=\ldots=W_{K+1}$. We will call this version of SortMPNN thin-SortMPNN.

Explicitly, thin-SortMPNN is defined by:

  \textbf{SortMPNN: For $\mathbf{k=1\ldots,K}$}

\begin{align*}
	&\text{\textit{AGGREGATE}: } c_{v}^{(k)} =  \mathrm{sort}\left(a^{(k)}\cdot x_{v,\bullet}^{(k-1)} \right)  \\
	&\text{\textit{COMBINE}: } x_v^{(k)}= d^{(k)} \cdot concat(x_v^{(k-1)},c_v^{(k)})
\end{align*}
Note that we omit the second inner product in the definition of $S_z$ as it is superfluous: this inner-product would be subsumed by the following inner product with $d^{(k)}$ in the \textit{COMBINE} function. 

The READOUT function is given by
\[ \textit{READOUT}: 
\cglobal = b^{({K+1})}\cdot \mathrm{sort}\left(a^{(K+1)}\cdot x_{\bullet}^{(K)} \right), 
\]

We will denote by $\theta^{(k)}$ the concatenation of all network parameters up to the creation of the node features $x_v^{(k)}$, where $\theta^{(0)}$ is just an 'empty vector'. We denote all network parameters, including those used by the readout function by $\theta^{(K+1)}$. The distribution on each of the parameter vectors $a^{(k)},b^{(k)},d^{(k)}$ is taken to be uniform on the unit sphere of the relevant dimension, as discussed in the main text. 

We will now state our theorem on lower Lipschitzness of thin-SortMPNN. 

\begin{theorem}(lower Lipschitz SortMPNN, formal)\label{ThmSortMPNN}
    Under the above assumptions \ref{par:mpnn_assum}, and given $a^{(k)}, b^{(k)}$ are distributed uniformly on the appropriate unit sphere, then for thin-SortMPNN with depth $K$ the following holds:
    \begin{enumerate}
        \item The \textbf{node embeddings} after $k$ message passing layers are lower Lipschitz in expectation w.r.t. $TD$  on their depth $k$ computation trees
        \[
        \EE_{\phi^{(k)}}[\|x^{(k)}_v-x^{(k)}_u\|_p^p]\ge c_1\cdot TD^p(T^{(k)}_v,T^{(k)}_u)
        \]
        
        \item The \textbf{graph embeddings} are lower Lipschitz in expectation w.r.t. $TMD^{(K)}$
        \[
        \EE_{\phi^{(K+1)}}[\|\cglobal-\cglobalhat\|_p^p] \ge c_2\cdot TMD^{(K)}(G,\hat{G})^p
        \]
    \end{enumerate}
\end{theorem}

For the proof of the theorem we will need the following simple but useful lemma
\begin{lemma}\label{lem:easy}
Let $D,N$ be natural numbers and $p>1$. Then there exists a positive $\delta=\delta(D,N,p) $ such that, for all $N$ fixed vectors $x_1,\ldots,x_N$ in $\RR^D$
$$\prob\left\{ a\in S^{D-1}| \quad |a\cdot x_i|\geq \delta\|x_i\|_p, \forall i=1,\ldots,N     \right\} \geq 1/2 $$
\end{lemma}
\begin{proof}[Proof of Lemma \ref{lem:easy}]
Due to equivalence of norms \ref{norm_equiv}, it is sufficient to prove the claim when $p=2$.

For every $y\in S^{D-1}$ and $\delta>0$, denote 
$$B(y,\delta)=\{a\in S^{D-1}| \quad |a \cdot y|<\delta  \} $$
We note that for any fixed positive $\delta$ and $y,y'\in S^{D-1}$, the probability of
$B(y,\delta) $ and  $B(y',\delta) $ will be the same, due to the rotation invariance of the uniform measure on $S^{D-1}$, and the fact that if $R$ is a rotation taking $y$ to $y'$, then 
$$a\in B(y,\delta) \text{ iff } |a \cdot y|<\delta \text{ iff }  |Ra \cdot Ry|<\delta \text{ iff } Ra \in B(y',\delta)  $$
We can therefore denote the probability of $B(y,\delta) $ by $p_\delta$, and this definition does not depend on the choice of $y$. Next, note that $p_{\delta}$ is a non-negative seuence converging monotonely to $0$ as $\delta \rightarrow 0$. Accordingly,we can choose some $\delta_0$ such  that $p_\delta<\frac{1}{2N} $.

Now, assume we are given $N$ fixed points in $\RR^D$. Without loss of generality we can assume the first $M$ points are non-zero, and the last $N-M$ points are zero. We then have
\begin{align*}
\prob\left\{ a\in S^{D-1}| \quad |a\cdot x_i|\geq \delta\|x_i\|_2, \forall i=1,\ldots,N     \right\} &=\prob\left\{ a\in S^{D-1}| \quad |a\cdot x_i|\geq \delta\|x_i\|_2, \forall i=1,\ldots,M     \right\}\\
&=\prob\left\{ a\in S^{D-1}| \quad |a\cdot \frac{x_i}{\|x_i\|_2}|\geq \delta, \forall i=1,\ldots,M     \right\}\\
&=1-\prob \left( \cup_{i=1}^M B(\frac{x_i}{\|x_i\|_2},\delta_0)   \right)\\
&\geq 1-\sum_{i=1}^M \prob \left( B(\frac{x_i}{\|x_i\|_2},\delta_0)   \right)\\
&\geq 1-\frac{M}{2N}\\
&\geq 1-\frac{N}{2N}=\frac{1}{2}.
\end{align*}
 \end{proof}
Using the lemma, we can now prove the theorem:
\begin{proof}[Proof of Theorem \ref{ThmSortMPNN}]
In general, to show that a function $f:X\times W \to Y $ is lower-Lipschitz in expectation, it suffices to show that there exist $\delta,\epsilon>0$ such that for all $x_1,x_2\in X$, the probability of the set $\{ w\in W: \quad \frac{|f(x_1,w)-f(x_2,w)|}{d_X(x_1,x_2)}>\epsilon \} $ is larger than $\delta$. We will use this alternative requirement in the proof of both parts of the theorem.

	\item \paragraph{Proof of First Claim}
 
	For ease of notation, we will use $W$ throughout this proof to denote $W_{TD,1}^{\blankT{}}$, and $W^p(\cdot, \cdot)=W_{TD,1}^{\blankT{}}(\cdot,\cdot)^p$.
 
 We prove the claim be induction on $k$. For $k=0$ we have equality 
		 $$ \|x_v^{(0)}-x_u^{(0)}\|^p=TD^p(T_v^0,T_u^0)$$
	We now assume correctness for $k-1$ and prove for $k$.  Now note that
 \begin{enumerate}
     \item With probability of at least $p_{k-1}$ on $\theta^{(k-1)}$, we know that for all nodes $u,v$
     $$\|x_v^{(k-1)}-x_u^{(k-1)}\|^p\geq c_kTD^p(T_v^{(k-1)},T_u^{(k-1)})$$
     \item Once $\theta^{(k-1)}$ is fixed, and all features $x_v^{(k-1)}$ are determined, we can use Lemma \ref{lem:easy} to show that with an appropriate $\delta>0$ and probability of at least  $1/2$ on $a^{(k)}$, 
     $$|a^{(k)}\cdot(x-y)|\geq \delta\|x-y\| $$
     
     for all $x,y$ in the set $\{x_v^{(k)} \}_{v\in V} \cup \{z_k\} $. It follows that, with probability $1/2p_{k-1}$ on $(a^{(k)},\theta^{(k-1)})$ 
     \begin{align*}
     \|c_v^{(k)}-c_u^{(k)}\|_p^p&=
     \| \mathrm{sort}\left(a^{(k)}\cdot x_{v,\bullet}^{(k-1)} \right)- \mathrm{sort}\left(a^{(k)}\cdot x_{u,\bullet}^{(k-1)} \right)\|_p^p\\
     &=\min_{\tau\in S_n} \sum_{s=1}^n |a^{(k)}\cdot (x^{(k-1)}_{v,s}-x^{(k-1)}_{u,\tau(s)})|^p\\
     &\geq \min_{\tau\in S_n} \sum_{s=1}^n \delta^p \|x^{(k-1)}_{v,s}-x^{(k-1)}_{u,\tau(s)}\|_2^p  \\
     &\geq \delta^pC_1^p\min_{\tau\in S_n} \sum_{s=1}^n  \|x^{(k-1)}_{v,s}-x^{(k-1)}_{u,\tau(s)}\|_p^p \\
     &\stackrel{(*)}{\geq }\delta^pC_1^p\tilde{c}_k \min_{\tau\in S_n}TD^p(T_{v,s}^{(k-1)},T_{u,\tau(s)}^{(k-1)})\\
     &\geq \delta^pC_1^p\tilde{c}_k C_2 \left[ \min_{\tau\in S_n}TD (T_{v,s}^{(k-1)},T_{u,\tau(s)}^{(k-1)}) \right]^p\\
     &=c_{k,1}W^p(\mathcal{T}_v^{(k-1)},\mathcal{T}_u^{(k-1)})
     \end{align*}
     where in the last equation we used $c_{k,1}$ to denote the product of all constants appearing previously. (*) is the induction hypothesis and lemma \ref{lemma:blank_tree_equiv}.

\item Once $(a^{(k)},\theta^{(k-1)})$ are determined, we know that $c_v^{(k)}$ and $x_v^{(k-1)}$ are determined. Thus, for an appropriate positive $\delta'$, we know that 
with probability of at least $1/2$ on $d^{(k)}$, and thus with probability of at least $p_k:=p_{k-1}/4$ on $\theta^{(k)}=(d^{(k)},a^{(k)},\theta^{(k-1)})$, we have 
\begin{align*}
|x_v^{(k)}-x_u^{(k)}|^p&=|d^{(k)}\cdot concat(x_v^{(k-1)}-x_u^{(k-1)},c_v^{(k)}-c_u^{(k)} ) |^p\\
&\geq (\delta')^p \|concat(x_v^{(k-1)}-x_u^{(k-1)},c_v^{(k)}-c_u^{(k)} )\|_p^p\\
&\geq (\delta')^p\left(\|x_v^{(k-1)}-x_u^{(k-1)}\|_p^p+\|c_v^{(k)}-c_u^{(k)} \|_p^p\right)\\
&\geq (\delta')^pc_{k-1}TD^p(T_v^{(k-1)},T_u^{(k-1)})+(\delta')^pc_{k,1}W^p(\mathcal{T}_v^{(k-1)},\mathcal{T}_u^{(k-1)})\\
&\geq (\delta')^p\min\{c_{k-1},c_{k,1} \}\|x_v^{(0)}-x_u^{(0)}\|_p^p+W^p(\mathcal{T}_v^{(k-1)},\mathcal{T}_u^{(k-1)})\\
&\stackrel{(*)}{\geq}  C(\delta')^p\min\{c_{k-1},c_{k,1} \}\left(\|x_v^{(0)}-x_u^{(0)}\|_p+W(\mathcal{T}_v^{(k-1)},\mathcal{T}_u^{(k-1)}) \right)^p\\
&=c_{k}TD^p(T_v^{(k)},T_u^{(k)})
\end{align*}

where in the last equation we used $c_k$ to denote all constants incurred up to this step. In the inequality (*) we used equivalence of norms in euclidean space, with an appropriate constant $C$. 
 \end{enumerate}

 This concludes the proof of the first part of the theorem. 
	
\item \paragraph{Proof of Second Claim}
Let us denote the final message passing features corresponding to the nodes of $G$ by $x_v^{(K)}$, and the depth-K computation trees corresponding to each node of $G$ by $T_v^{(K)}$. Similarly, we denote the features corresponding to the nodes of $\hat G$  by $\hat x_v^{(K)}$ and the trees by $\hat T_v^{(K)}$. By applying the first part of the theorem (to the disjoint union of the graphs $G$ and $\hat G$), we know that with probablity of at least $p_K$ on the parameters $\theta_K$, we have that 
$$|x_v^{(K)}-\hat x_u^{(K)}|^p\geq  c_kTD^p(T_v^{(K)},\hat{T}_u^{(K)}) $$

Once $\theta_K$ is fixed, all node features $x_v^{(K)}$ and $\hat x_u^{(K)}$ are determined. By Lemma \ref{lem:easy}, we have for an appropriate $\delta>0$ that, with probablity of at least $1/2$ on $a^{(K+1)}$, 
$$|a^{(K+1)}\cdot (x_v^{(K)}-\hat x_u^{(K)})| \geq \delta \|x_v^{(K)}-\hat x_u^{(K)}\|_2, \quad \forall u,v\in [n]. $$
It follows that for an appropriate $\delta'$, with probability of at least $p_k/4$ on $\theta_{K+1}=(b^{(K+1)},a^{(K+1)},\theta_K) $
\begin{align*}
|\cglobal-\cglobalhat|^p&=| b^{(K-1)}\cdot[\mathrm{sort}\left(a^{(K+1)}\cdot x_{\bullet}^{(K)} \right)-\mathrm{sort}\left(a^{(K+1)}\cdot \hat{x}_{\bullet}^{(K)} \right)]|^p\\
&\geq \delta'^p \|\mathrm{sort}\left(a^{(K+1)}\cdot x_{\bullet}^{(K)} \right)-\mathrm{sort}\left(a^{(K+1)}\cdot \hat{x}_{\bullet}^{(K)} \right)\|_p^p\\
&=\delta'^p \min_{\tau \in S_n} \sum_{v=1}^n |a^{(K+1)}\cdot x_v^{(K)}-a^{(K+1)}\cdot \hat{x}_{\tau(v)}^{(K)}|^p \\
&\geq (\delta\cdot \delta')^p \min_{\tau \in S_n} \sum_{v=1}^n \|x_v^{(K)}-x_{\tau(u)}^{(K)}\|_p^p\\
&\stackrel{(*)}{\geq} (C\cdot \delta\cdot \delta')^p \left(\min_{\tau \in S_n} \sum_{v=1}^n \|x_v^{(K)}-x_{\tau(u)}^{(K)}\|_p \right)^p\\
&\ge(C\cdot \delta\cdot \delta')^pTMD^{(K)}(G,\hat G)
\end{align*}
where for $(*)$ we used equivalence of norms in Euclidean spaces, and the last inequality uses the first claim and lemma \ref{lemma:blank_tree_equiv}.

\end{proof}

\subsection{MPNN with lower-\Holder{} components isn't necessarily lower-\Holder{}}\label{app:not_composable}

Consider the following setting: we consider graphs in $\Gdomain$ where $\Omega=[-2,2]\subseteq \RR$. Assume we have an MPNN $f$ with two layers, where the  aggregation functions are the one dimensional functions $q:(x_1,\ldots,x_n)\mapsto \sum_{i=1}^n\relu(x_i-b) $ where $b\sim [-3,3]$. We showed this type of multiset aggregation is  Lower-\Holder{} in expectation \ref{par:oneD_relu}. We claim that with these one-dimensional aggregations and for any COMBINE function, there are graphs $G_1, G_2\in \Gdomain $ which can be separated by two iterations of 1-WL, but cannot be separated by two of our MPNN iterations, for any choice of network parameters. An illustration of these graphs is given in Figure \ref{fig:counter}. The parameter $\epsilon $ can be taken to be any fixed number in $(0,1/2)$. 
As can be seen, the depth-2 computation trees of the nodes at level 2 (children of the root, filled in red) differ between the graphs. Therefore, the 1-WL test will succeed in determining the graphs are non-isomorphic after two iterations. 

We now explain  why the aforementioned MPNN won't succeed in separating the graphs in two iterations. In our analysis, we use the names of the nodes defined in the figure: $a_1,\ldots,e_1$ for the nodes of $G_1$ and $a_2,\ldots,e_2$ for the nodes of $G_2$. We denote that feature vector at (e.g.,) $a_1$ after the $k$-th iteration by $a_1^{(k)}$.   

The core reason for the failure of two iterations of the MPNN to separate $G_1,G_2$ is that, depending on the value of $b$, the first aggregation will not be able to simultaneously separate the multisets $\lms -\epsilon, \epsilon\rms$ and $\lms 0,0\rms$ and the multisets $\lms 1-\epsilon, 1+\epsilon\rms$ and $\lms 1,1\rms$. In general, we can consider three options, depending on the value of $b$:

\begin{enumerate}
    \item\label{no_sep} $b\in [-3,-\epsilon]\cup [\epsilon,1-\epsilon]\cup [1+\epsilon,3]$: In this case, the aggregation won't separate the multisets $\lms -\epsilon, \epsilon\rms$ and $\lms 0,0\rms$, and it also won't separate the multisets $\lms 1-\epsilon, 1+\epsilon\rms$ and $\lms 1,1\rms$.

    In this case, after the first message passing layer, we will get equality between the corresponding features of nodes $a_1^{(1)}=a_2^{(1)}=c_1^{(1)}=c_2^{(1)}, b_1^{(1)}=b_2^{(1)}=d_1^{(1)}=d_2^{(1)}$. This means the multisets $\lms a_1^{(1)},b_1^{(1)} \rms$ and $\lms a_2^{(1)},b_2^{(1)} \rms$ will be equal, and so will the multisets  $\lms c_1^{(1)},d_1^{(1)} \rms$ and $\lms c_2^{(1)},d_2^{(1)} \rms$. Therefore, the second message passing iteration will result in the equality between the corresponding features of nodes $e_1^{(2)}=e_2^{(2)}, f_1^{(2)}=f_2^{(2)}$, and the graphs won't be separated.

    \item\label{sep_eps} $b\in (-\epsilon, \epsilon)$: In this case the aggregation can separate the multisets $\lms -\epsilon, \epsilon\rms$ and $\lms 0,0\rms$, but not the multisets $\lms 1-\epsilon, 1+\epsilon\rms$ and $\lms 1,1\rms$.

    In this case, after a single message passing layer, we will get equality between features of the nodes $a_1^{(1)}=a_2^{(1)}, c_1^{(1)}=c_2^{(1)}, d_1^{(1)}=b_2^{(1)}=b_1^{(1)}=d_2^{(1)}$.

    This means the multisets $\lms a_1^{(1)},b_1^{(1)} \rms$ and $\lms a_2^{(1)},b_2^{(1)} \rms$ will be equal, and so will the multisets  $\lms c_1^{(1)},d_1^{(1)} \rms$ and $\lms c_2^{(1)},d_2^{(1)} \rms$. Therefore, the second message passing iteration will result in the equality between the corresponding features of nodes $e_1^{(2)}=f_2^{(2)}, f_1^{(2)}=e_2^{(2)}$, and the graphs won't be separated.

    \item\label{sep_1_eps} $b\in (1-\epsilon, 1+\epsilon)$: In this case the aggregation can separate the multisets $\lms 1-\epsilon, 1+\epsilon\rms$ and $\lms 1,1\rms$, but not the multisets $\lms -\epsilon, \epsilon\rms$ and $\lms 0,0\rms$.

    This will lead to a similar result as the previous case, just with a slight change to the equivalence classes of the nodes.
\end{enumerate}

Overall, we have shown that the proposed MPNN, which is composed of lower-\Holder{} in expectation components, fails to separate the above graphs for any choice of parameters. Therefore, it cannot be lower-\Holder{} in expectation w.r.t. $TMD^{(2)}$.

We note that the graphs can be separated by a similar ReLUMPNN which is wider, or has  additional message passing layers. Our point here is just that there exists an MPNN with $K$ message passing layers, composed of lower-\Holder{} in expectation components, which isn't   lower \Holder{} in expectation w.r.t. $TMD^{K}$.

\begin{figure}	
\centering
\includegraphics[width=0.6\textwidth]{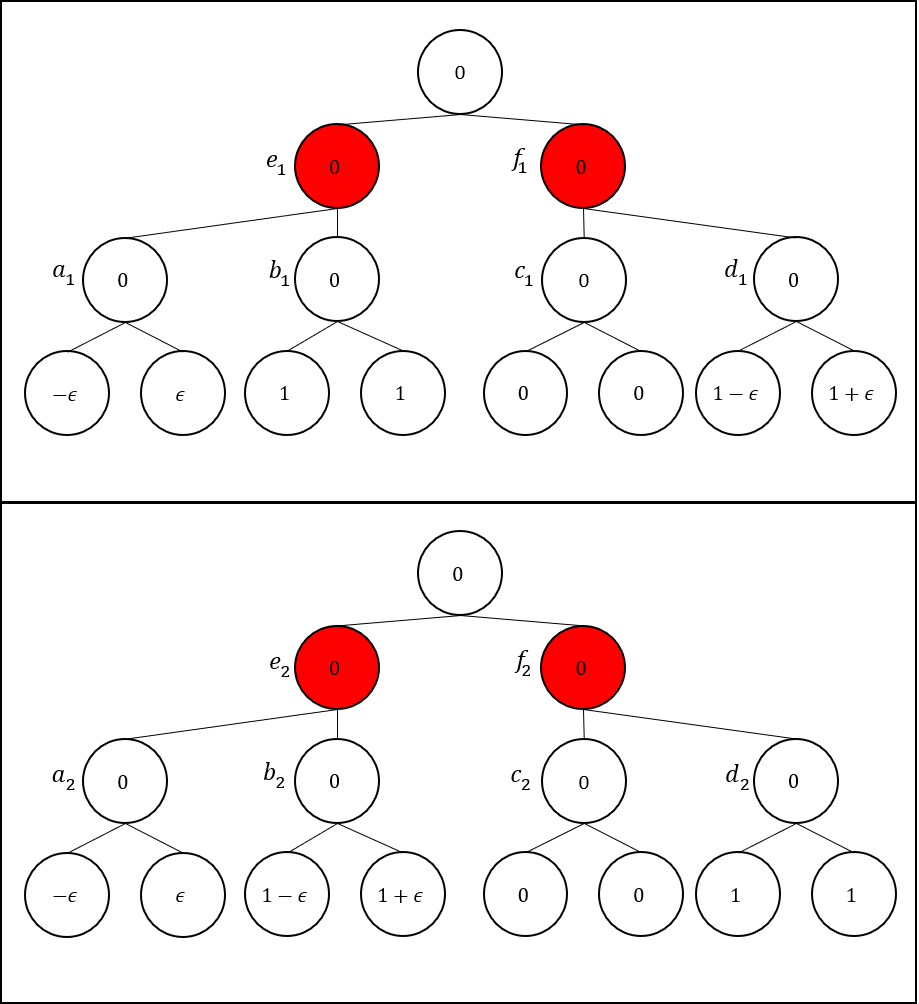}
	\caption{A pair of labeled graphs $G_1$ (top) and $G_2$ (bottom) used to prove that MPNNs with \Holder{} building blocks aren't necessarily \Holder{}}
	\label{fig:counter}
\end{figure}

\subsection{ReluMPNN and SmoothMPNN}
\label{app:eps_trees}
In the main text of the paper we stated the following informal theorem
\lb*
We note that the distribution on the parameters taken in this theorem is the same as described throughout the text. The aggregation used for both MPNNs at hand is of the form $$\lms x_1,\ldots,x_s \rms \mapsto \sum_{i=1}^s\rho(a\cdot x_i-b) $$
where $a$ and $b$ are drawn uniformly from the unit sphere and some interval $[-B,B]$ respectively. Of course, in ReluMPNN $\rho=\relu$ while with smoothMPNN $\rho$ is a smooth function. The parameters of the linear COMBINE function are also selected uniformly in the unit sphere.

\paragraph{$\epsilon$-Tree dataset} The proof of the theorem is  based on a family of adversarial example which we denote by $\epsilon$-trees. The recursive construction depends on two parameters: a real number $\epsilon$, and an integer $T=0,1,\ldots$. This will give us a pair of trees $(G,\hat G) $ where $G=G(T,\epsilon),\hat G=\hat G(T,\epsilon)  $. 

The recursive construction of $\epsilon$ trees is depicted in Figure \ref{fig:eps_trees}. In the first step $T=0$, we define 'trees' $a_0,b_0,c_0,d_0$ which consist of a single node, with feature value of $\epsilon, -\epsilon,2\epsilon$ and $-2\epsilon$, respectively. We connect $a_0,b_0$ via a new root with feature value $0$ to create the tree $G_0=G(T=0,\epsilon)$, as depicted in the left hand side of Figure \ref{fig:eps_trees}. We do the same thing with $c_0,d_0$ to obtain the tree  $\hat G_0=\hat G(T=0,\epsilon) $.

We now proceed recursively, again as shown in Figure \ref{fig:eps_trees}. Assuming that for a given $T$ we have defined the trees $a_T,b_T,c_T,d_T,G_T,\hat G_T $, we define the corresponding trees for $T+1$ as shown in the figure: 

For example, $a_{T+1}$ is constructed by joining together two copies of $a_T$ and two copies of $b_T$ at a common root with feature value $0$ as shown in the figure on the right hand side. From the figure one can also infer the construction of $b_{T+1},c_{T+1},d_{T+1}$ from $a_T,b_T,c_T,c_T$. Finally, the tree $G_{T+1}$ is constructed by joining together $c_{T+1},d_{T+1}$ while $\hat G_{T+1} $ is constructed by joining together $a_{T+1},b_{T+1}$ 

\begin{proof}[Proof of Theorem \ref{thm:epstrees}]
Denoting an MPNN of depth $T$ by $f_T(G,w)$, where $w$ denotes the network's parameters, our goal is to provide bounds of the form
\begin{equation}\label{eq:hard}
\EE_{w }  \|f_T(G(T,\epsilon),w)-f_T(\hat G(T,\epsilon),w)\|^p \leq C\epsilon^{p\beta} \end{equation}
where $\beta$ depends on $T$ and the activation used, and on the other had show that $TMD^T(G(T,\epsilon),\hat G(T,\epsilon)) $ scales linearly in $\epsilon$. This will prove that the MPNN cannot be $\alpha$-lower \Holder{} with any $\alpha < \beta$.

We can see that  $TMD^T(G(T,\epsilon),\hat G(T,\epsilon)) $ scales linearly in $\epsilon$ because the TMD is 
homogenous. Namely, given any $G=(V,E,X)$ and 
$\epsilon>0$, let $\epsilon G=(V,E,\epsilon X)$. 
Then for any $T$ we have 
$$TMD^T(\epsilon G,\epsilon \hat G)=\epsilon \cdot TMD^T( G,\hat G).  $$

We now turn to estimate the exponent $\beta$ in \eqref{eq:hard}, for smooth or ReLU activations, and different values of $T$. To avoid cluttered notation we will explain what occurs for low values $T=0$ and $T=1$ in detail, and outline the argument for larger $T$.

When $T=0$, we apply an MPNN of depth $T=0$ to the graphs $G_0,\hat G_0$, which means that we just apply a readout function to the initial features. This is exactly the $\pm \epsilon$ example discussed in the main text, and so \eqref{eq:hard} will hold with $\beta=(p+1)/p$ with ReLU activations, and $\beta=2$ with smooth activations. 

We now consider the case $T=1$. Here we apply  we an MPNN of depth $T=1$ to the graphs $G_1,\hat G_1$. Let us denote the nodes of $G_1$ by $V_1$, using the natural correspondence defined by the figure (middle) we can think of $V_1$ as the nodes of $\hat G_1$ as well.  We denote the node feature values at node $v$ after a single MPNN iteration by $x_{v,w}$ and $\hat x_{v,w}$, where $w$ denotes the network parameters as before. 

Note that for the root $r$ we have that $x_{r,w}=\hat x_{r,w}$ for any choice of parameters $w$. 

At initialization, all leaves of the trees $G_0,G_1$
are assigned one of the values $\epsilon,-\epsilon,2\epsilon,-2\epsilon$ (according to the labeling $a_0,b_0,c_0$ or $d_0$). If two leaves are assigned the same value (both are denoted by, say, $a_0$), then they will have the same node feature after a single message passing iteration, since they are only connected to their father, and all fathers have the same initial node feature. 

We now consider the two remaining nodes. We denote the node connected to the root from the left by $v_1$ and the node connected to the root from the right by $v_2$. 

Let us first focus on the smooth activation case.  We note that the children of $v_1$ and $v_2$, in both graphs, have initial features of order $\epsilon$, and the all sum to the same value zero. By considering the Taylor approximation of the activation, as discussed in the proof of Theorem \ref{thm:Smooth}, we see that for  an appropriate constant $C$ we have that for all small enough $\epsilon$,
\begin{equation}\label{eq:squared}
|c_{v,w}-\hat c_{u,w}| \leq C\epsilon^2, \text{ for all parameters } w 
\end{equation}
where $v$ and $u$ could either be $v_1$ or $v_2$, and we recall that $c_{v,w}$ is the output of the aggregation function, applied to the initial features of the neighbors of $v$ in $G$ (and $\hat c_{u,w} $ is defined analogously for the neighbors of $u$ in $\hat G$).

Additionally, we note that for every parameter $w$ we have that 
\begin{equation} \label{eq:same_sum}
c_{v_1,w}+c_{v_2,w}=2\left[\psi_w(\epsilon)+\psi_w(-\epsilon)+\psi_w(2\epsilon)+\psi_w(-2\epsilon)\right]=\hat{c}_{v_1,w}+\hat{c}_{v_2,w}
\end{equation}
where $\psi_w$ denotes the neural network used for aggregation. The next step in the MPNN procedure is the COMBINE step, after which we will have a bound of the form \eqref{eq:same_sum}, namely 
\begin{equation}\label{eq:squared2}
|x_{v,w}-\hat x_{u,w}| \leq C\epsilon^2, \text{ for all parameters } w 
\end{equation}
since the COMBINE functions we use are uniformly Lipschitz, and we will also have 
or every parameter $w$ we have that 
\begin{equation} \label{eq:same_sum2}
x_{v_1,w}+x_{v_2,w}=\hat{x}_{v_1,w}+\hat{x}_{v_2,w}
\end{equation}
since our COMBINE function is linear. 

Now, when applying the readout function, we have eleven node features in the multiset of nodes of $G_1$ and $\hat G_1$. For any parameter vector $w$, if we remove the nodes $v_1,v_2$, we get two identical multisets. Therefore, we only need to bound the difference 
$$\|\left[\eta_w(x_{v_1,w})+\eta_w(x_{v_2,w}) \right]-\left[\eta_w(\hat{x}_{v_1,w})+\eta_w(\hat{x}_{v_2,w}\right] \|$$
Since by  \eqref{eq:same_sum2} the features sum to the same value, and all are the same up to $\epsilon^2$, this difference goes like $\epsilon^4$. This is what we wanted. For general $T$, we will be able to apply this same argument recursively $T+1$ times to get the lower bound of $2^{T+1}$ for the smooth exponent.

Now, let us consider what happens when we replace the smooth activation with ReLU activation. In This case too, the only features which really matter are those corresponding to $v_1,v_2$. Recalling our analysis of the $\pm \epsilon$ for ReLU activations, we note that if the bias of the network falls outside the interval $ [-2\epsilon,2\epsilon]$, then all node features $x_{v_1,w},x_{v_2,w},\hat{x}_{v_2,w},\hat{x}_{v_2,w} $ will all be identical. If the bias does fall in that interval, then the difference between these node features  can be bounded from above by $C\epsilon$ for an appropriate constant $C$. When applying readout, there is a probability of $\sim \epsilon^2$ that the final global features will be distinct (if the parameters of aggregation, and the parameters of readout, are both well behaved). The difference in this case can again be bounded by $C\epsilon$. It follows that 
$$\EE_w  \|f_T(G(T,\epsilon),w)-f_T(\hat G(T,\epsilon),w)\|^p \leq C \epsilon^{2} \epsilon^p  $$
which will lead to a lower bound on the \Holder{} constant $\alpha \geq (1+\frac{2}{p})  $. 

For general $T$, we will get that there is a probability of $\sim \epsilon^{T+1}$ to obtain any separation, and that, if this separation occurs, the difference between features will be $\leq C\cdot \epsilon$. As a result, the expectation 
$$\EE_{w }  \|f_T(G(T,\epsilon),w)-f_T(\hat G(T,\epsilon),w)\|^p $$
will scale like $\epsilon^{T+1+p} $, which leads to our lower bound on the exponent for ReLU activations.
\end{proof}

\begin{figure}
    \includegraphics[width=\textwidth]{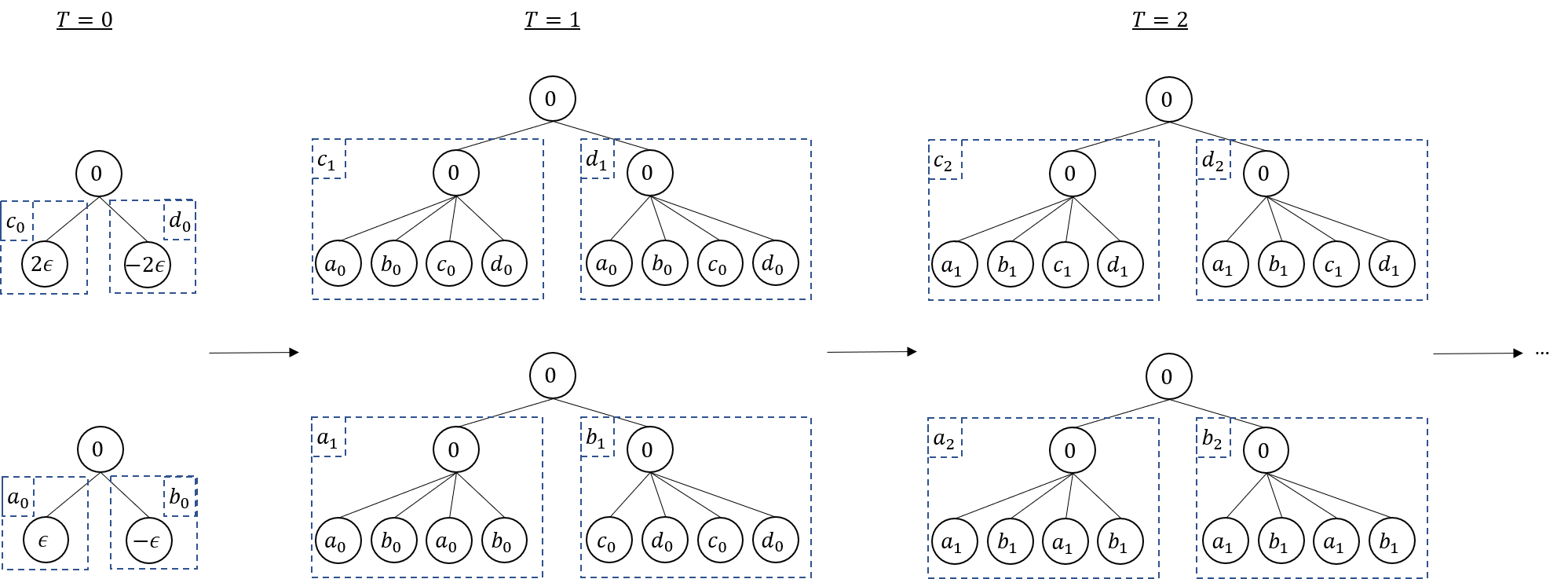}
    \caption{$\epsilon$-tree dataset construction: The trees are built using basic elements $a_t,b_t,c_t,d_t$. At step $T$, the basic elements are defined using subtrees from the previous step.  }
    \label{fig:eps_trees}
\end{figure}

\section{Experiments}\label{app:experiments}
\subsection{Hardware}
All experiments were executed on an a cluster of 8 nvidia A40 49GB GPUs.

\subsection{Architecture variations}
In this subsection we will lay out the architectural variations that were considered for SortMPNN and AdaptMPNN, that were omitted from the main text for brevity. We include explanations of these variations with the connection between them and the theoretical analysis.

 \paragraph{SortMPNN}
First off, we must choose how to model of the blank vector, since SortMPNN uses the blank vector explicitly in \eqref{eq:sort}. We propose three options for the (\textit{blank method}):
\begin{enumerate}
    \item \textit{Learnable}: For every multiset aggregation function there is a unique learnable blank vector that is used for augmentation. Since we assume the node features are from a compact subset of $\RR^d$, there exist valid augmentation vectors, and upon training we can expect the learnable vector to converge to a valid augmentation vector. Note that this method assumes training takes place.

    \item  \textit{Iterative update}: In this method, the blank vector of each layer is the output of the previous MPNN layers on the blank tree, where the blank tree node feature is set to zero. This works under two assumptions: (1): The nodes in the dataset don't have the feature zero. (2): The intermediate layer outputs on computation trees from the data don't clash with the same layer output on the blank tree. This seems relatively reasonable since SortMPNN is bi-Lipschitz in expectation, which in expectation should lead to injectivity.

    \item \textit{Zero}: simply using the zero vector as the blank vector. This works under the assumption that the zero vector is neither a valid initial feature, nor a valid output of intermediate layers. This is the weakest of the three, but was experimented with nonetheless.
    
\end{enumerate}

Given a multiset $X$, we experimented with two variations for the aggregation implementation. Both share step (1), but differ in step (2):
\begin{enumerate}
    \item \textit{projection}: $X\mapsto colsort(A\rho_{(z)}(X)):=Y\in \RR^{m\times n}$, where $A\in \RR^{m\times d}$. 
    \item \textit{collapse}: In this step, we either perform (1) a \textit{matrix} collapse: $rowsum(B\odot Y)$ where $\odot$ is the element-wise product and $B\in \RR^{m\times n}$ or (2) a \textit{vector} collapse: $Yb$, for $b\in \RR^n$.
\end{enumerate}
Note that when choosing to use the matrix collapse, under the assumption that the blank method provides a valid blank vector, this is equivalent to running $m$ separate copies of $S_z$ (\ref{eq:sort}) with independent parameters (at least upon initialization), just like we proposed in the main text. When using the vector collapse, the aggregation takes the form of multiple instances of $S_z$ except that that the parameter $b$ used in the inner product is shared across instances. Note the \Holder{} expectation doesn't change in this case, but the variance is likely higher. We explored this option since it reduces the number of parameters needed for the aggregation, potentially easing the optimization process.

In addition, we experimented with adding bias to the projection, and to the vector collapse.

\paragraph{AdaptMPNN}
The main detail we left out from the main text is the fact that the output of $\Fadapt$ is in $\RR^4$, whereas we would want it to be in $\RR$ in order to be able to stack $m$ instances and get an output in $\RR^d$. In order to get the desired output dimension, we compose $\Fadapt$ with a projection, and the aggregation on a multiset $X\in \RR^{d\times r}$ as follows:
\begin{enumerate}
    \item \textit{Stacked Adaptive ReLU}: 
    \[
    Y:=\begin{bmatrix}
    \Fadapt(X;a_1,t_1)\\
    \vdots\\
    \Fadapt(X;a_m,t_m)
    \end{bmatrix}
    \in \RR^{m\times 4}\] 
    \item \textit{Project}: $rowsum(B\odot Y)$ where $\odot$ is the element-wise product and $B\in \RR^{m\times 4}$ has rows that are drawn uniformly independently from $S^3$.
\end{enumerate}
As we proved previously \ref{lemma: abs_norm_preservation}, an inner product with a vector drawn uniformly from $S^{d-1}$ is bi-\Holder{} in expectation. From composition properties \ref{lem:composition}, this means that the above stacking of multiple independent instances of $project\circ \Fadapt$ keeps the \Holder{} properties of $\Fadapt$.

In addition, we experimented with four optional changes to the aggregation:
\begin{enumerate}
    \item Adding the sum $\sum a\cdot x_j$ a 5'th coordinate to the output of $\Fadapt$. The idea behind this choice is that the sum is an informative feature which is encountered when using the standard ReLU activations and bias smaller than the minimal feature in the multiset. For adaptive ReLU, this summation will only occur when $t=0$.
    
    \item Note that the adaptive relu uses the parameter $t\in [0,1]$ to choose a bias within the minimal and maximal multiset values. When training, we experimented with either clamping $t$ to $[0,1]$ or otherwise lifting this constraint.

    \item We experimented with adding a bias term to the projection step, since this can potentially lead to stronger expressivity.

    \item the initialization of $t$ was optionally chosen to be linspace between $[0,1]$.
\end{enumerate} 

\subsection{Adversarial Multiset dataset experiments}\label{app:adversarial_multiset_experiment}
Figure \ref{fig:relu_vs_smooth_per_width} and figure \ref{fig:multiset_exponents} both made use of the adversarial multiset pairs which are constructed as is described in detail in section \ref{sub:equal_moments}.

To produce figure \ref{fig:relu_vs_smooth_per_width}, we ran $m_{relu}, m_{\sigma}$ on a single adversarial pair of multisets that have $8$ scalar elements, with $7$ equal moments between the multisets, with $\epsilon = 0.1$.

To produce figure \ref{fig:multiset_exponents}, we used $200$ adversarial pairs, each containing $16$ scalar elements, where each pair shares the first $15$ moments. The pairs differ in the value of $\epsilon$, where $\epsilon\in [0,1]$

To further strengthen the evidence of the importance of expected \Holder{} stability, we performed two additional experiments, where the analyzed models were trained on adversarial datasets.

First, we trained the models on 4 different datasets created as described in \ref{sub:equal_moments}. The four datasets were constructed such that each consists of 100 pairs of graphs with $n-1$ shared moments, where $n$ is one of {1,3,7,15} per dataset. The 100 pairs are constructed by creating the pair from \ref{sub:equal_moments}, and multiplying all the multiset elements by $\epsilon$, where $\epsilon\in np.linspace(0.01,0.1,100)$. In each pair, the multiset that started the iterative construction process with the values $\lms -1,1 \rms$ was assigned the label $0$, and the other was assigned the label $1$. Then, these 100 pairs were randomly split into train, validation and test (0.8/0.1/0.1). The per model test set accuracy results as a function of number of equal moments is presented in figure \ref{fig:train_multisets}.
We see that the better the expected \Holder{} exponent (depicted for this type of dataset in figure \ref{fig:multiset_exponents}), the better the post-training accuracy.

Since the first experiment makes use of a dataset that doesn't expose adaptive ReLU's worst-case behavior, we proceed to repeat the experiment, except that now make a slight change to the dataset multisets: for each multiset from the previous experiment, we add to the multisets two more elements with the values $2$ and $-2$. This causes the adaptive capabilities of adaptive ReLU to become useless, since the largest and smallest values in the multiset are far larger than the values needed to separate the values that are proportional to $\epsilon$. The results are depicted in figure \ref{fig:train_multisets_target_adapt}.

\begin{figure}[htbp]
    \centering
    \begin{subfigure}{0.45\linewidth}
        \centering
        \includegraphics[width=\linewidth]{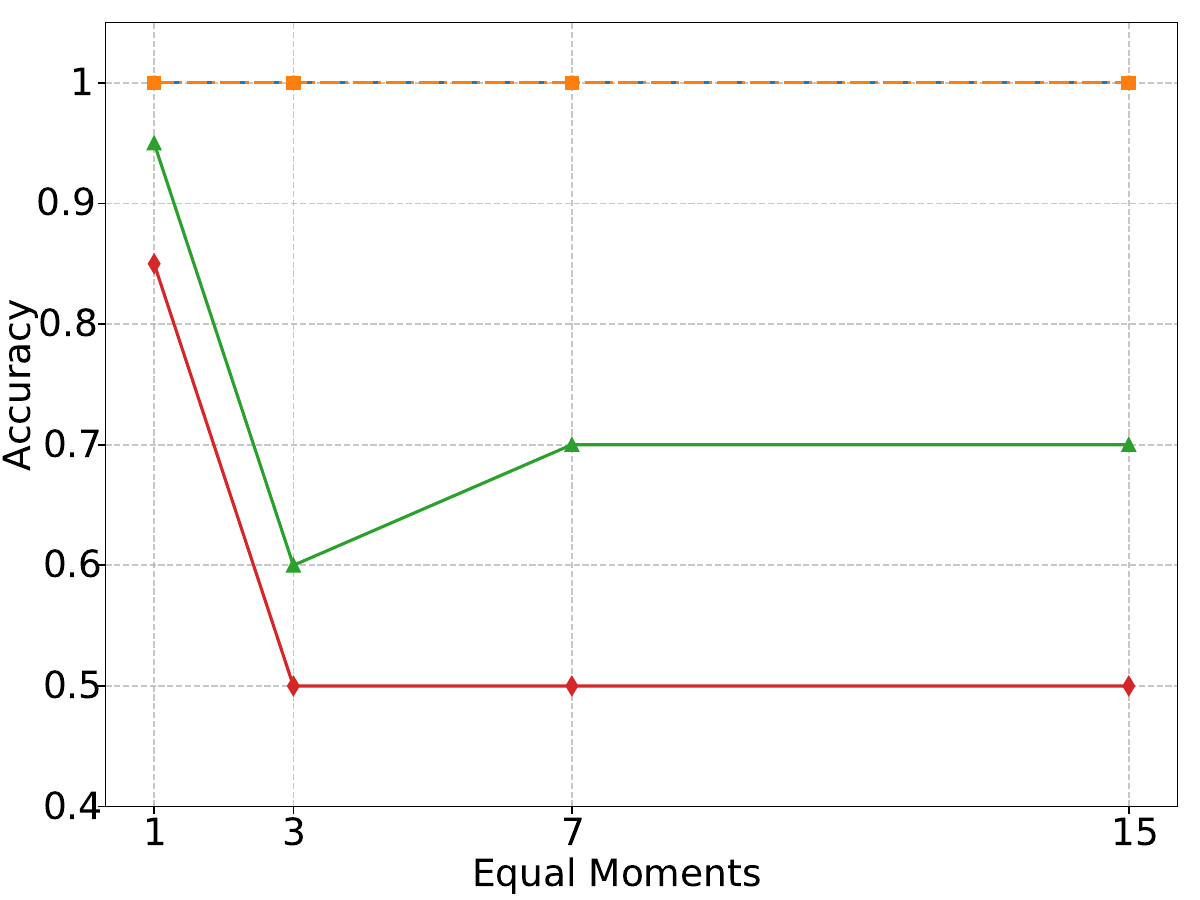}
        \caption{Datasets constructed based on the multiset construction from subsection \ref{sub:equal_moments}. }
        \label{fig:train_multisets}
    \end{subfigure}
    \begin{subfigure}{0.45\linewidth}
        \centering
        \includegraphics[width=\linewidth]{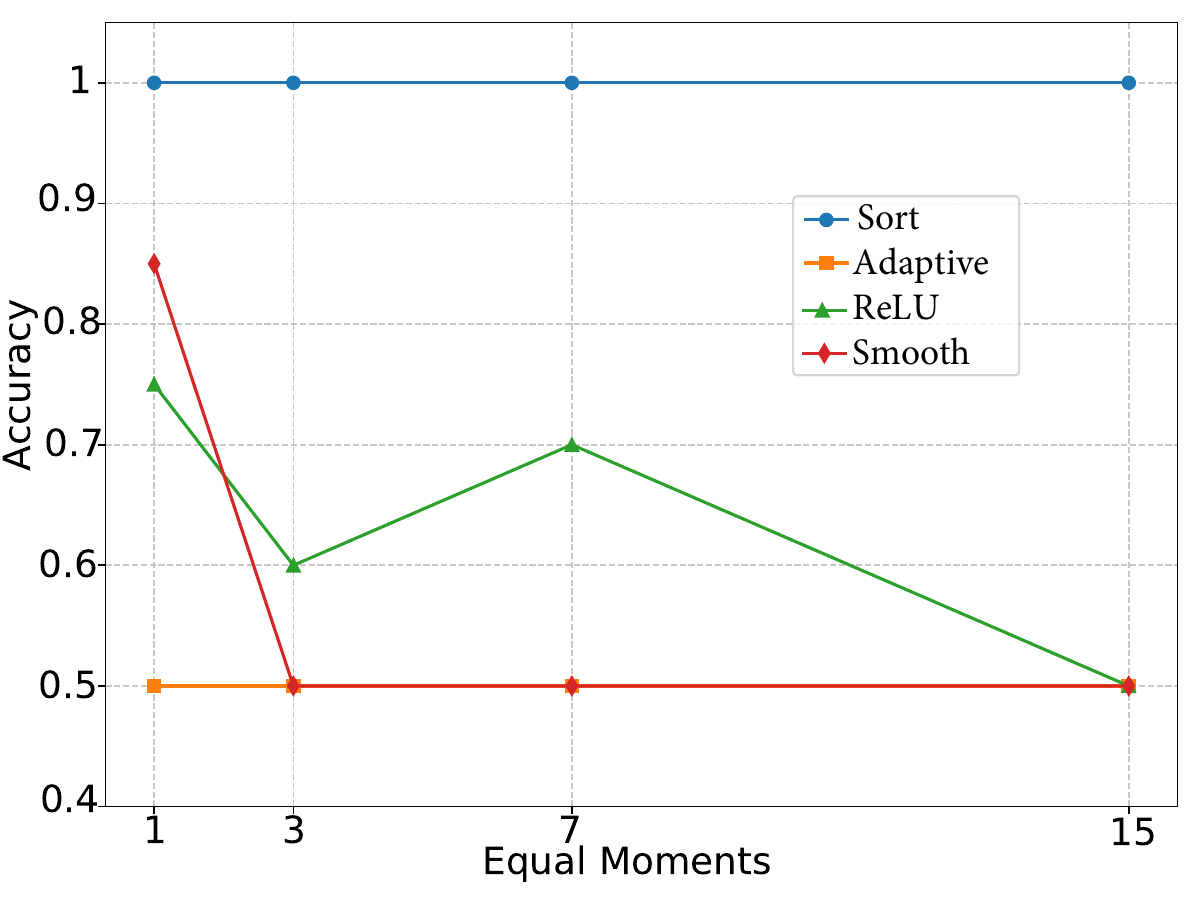}
        \caption{Same datasets as \ref{fig:train_multisets}, with additional features $\{2,-2\}$ targeting adaptive ReLU.}
        \label{fig:train_multisets_target_adapt}
    \end{subfigure}
    \caption{Test set accuracy for different configurations of datasets with equal moments.}
    \label{fig:side_by_side_figures}
\end{figure}

\subsection{$\epsilon$-tree dataset}\label{app:eps_tree_experiment}

\paragraph{Distortion experiment}
The experiment depicted in \ref{fig:mpnn_exponents} was run on a set of $1,000$ pairs of trees with $\epsilon$ going from $0.05$ to $0.4$. The first $500$ trees are of height $h=3$ and were input to MPNNs of depth $d=1$, and the other $500$ trees were of height $h=4$ and were input to MPNNs of depth $d=2$. The plots in figure \ref{fig:mpnn_exponents} were separated by MPNN depth in order for the phenomenon to be clearly visible. We note that the feature of all non-leaf nodes which appear with $0$ in figure \ref{fig:eps_trees}, was set to be $1$. All the models in the experiment used linear combination as the combine function, had an embedding dimension of $45,000$, except SortMPNN that used an embedding dimension of $2,048$ due to resource constraints. The large width was used to show the behavior of the MPNNs as we approach the expected value over the parameters. In order to plot the curve in each subplot, the minimal ratio $r_{min}=l_2/TMD^{\alpha}$ was computed for each graph pair where $\alpha$ is the theoretical lower-\Holder{} exponent. Then, $r_{min}\cdot x^{\alpha}$ was plotted. Note that $r_{min}$ was only computed for half of the pairs with the largest TMD values in order to avoid the case where the embedding distance is zero leading to $r_{min}=0$.

Throughout the experiment, double precision was used in order to capture the small differences as needed. The measured embedding distance was taken over the output of the readout function, without further processing.
\paragraph{Separation experiment}

For the trained graph separation experiment, $100$ pairs of $\epsilon$ trees of height $h=4$ with $\epsilon\in [0.1,1]$ were used. Trees corresponding to the first row in figure \ref{fig:eps_trees} were given the label $0$, and trees from the second row were given the label $1$. The models were trained by inputting a single pair as a batch per training iteration. All models were composed of 2 message passing layers, followed by readout and a single linear layer to obtain a logit. All of $\{64,128,256,512\}$ were tested for the embedding dimension with results consistent regardless of the embedding dimension. The models were trained for 10 epochs using the adamw optimizer and a learning rate of 0.01.

In an attempt to see the difference in quality between ReluMPNN and SmoothMPNN on this dataset, we reran the above, using the 'smallest' tree hight in our adversarial construction ($h=3,d=1$), and used a single message passing layer. However, even in this setting both ReluMPNN and SmoothMPNN completely failed to achieve separation.

\subsection{TUDataset}
Results on datasets that appear in table \ref{tab:tu_datasets} for GIN, GCN, GraphSage are from \citep{xu2018how}, except for GIN on NCI109 which is from \citep{chuang2022tree}. The reported results in \ref{tab:tu_datasets} for the fully trained SortMPNN and AdaptMPNN were chosen out to be the best single result out of 30 random hyper-parameter choices from the following hyperparameters:
batch size$\in\{32,64\}$, depth$\in[2-5]$, embed dim$\in\{16,32,64\}$, combine$\in\{ConcatProject,LinearCombination,LTSum\}$, dropout$\in\{0,0.1,0.2\}$,output mlp depth $\in\{1,2,3\}$, weight decay $\in\{0,0.01,0.1\}$, lr$\in[0.0001,0.01]$, optimizer $\in[adam,adamw]$, \#epochs$=500$. In addition, for SortMPNN the blank method was taken to be iterative update, and the collapse method was taken to be from \{matrix, vector\} without bias begin used at any stage. For AdaptMPNN, adding the sum of the multiset was a hyper-parameter, as was the choice if to clamp the parameter $t$ to be in $[0,1]$. The choice of the 30 random configurations was chosen with wandb \citep{wandb} sweeps.

As stated in the main text, results are reported as in \citep{xu2018how}. Namely, we report the mean and standard deviation of the validation performance over stratified 10-folds, where the validation curves of all 10 folds are aggregated, and the best epoch is chosen. 

\subsection{LRGB}
In both experiments from table \ref{tab:lrgb_datasets}, the models adhere to the 500K parameter limit. The hyper-parameters were tuned for a single seed, and then the configuration with the best validation results was rerun for 4 different seeds. We report the mean and standard deviation on the test set in table \ref{tab:epsilon-Tree}. The code from the official LRGB github repo was used to run these experiments, with the only change being the addition of our architectures. The reported results for other models are taken from \citep{tönshoff2023did}. 

Note that any theoretical result we presented that made use of a uniform distribution on the unit sphere holds also for any $t$-normalized unit sphere, i.e. ${x| \|x\|=t}$, where $t\in R$, where only the multiplicative constants in the proofs will change. In this experiment, for SortMPNN we normalized the parameters sampled from the unit sphere as follows: (1) the parameter $a$ was sampled from a $\frac{1}{d}$-normalized sphere, (2) the parameter $b$ was sampled from a $\frac{1}{n}$-normalized sphere. This normalization was chosen through hyper-parameter tuning on the validation set.

\paragraph{peptides-func}
For SortMPNN, the following hyper-parameters were used to achieve the results shown in the table \ref{tab:lrgb_datasets}: blank method - learnable, collapse method - vector, bias used both in projection and collapse, combine - ConcatProject, embedding dimension - 165, message passing layers - 4, final mlp layers - 3, dropout - 0.05, weight decay - 0.01, positional encoding - RWSE \citep{dwivedi2022graph}, learning rate - 0.005, scheduler - cosine with warmup, optimizer - adamW. Number of parameters: 494865. 

For AdaptMPNN - the following hyper-parameters were used:  the bias parameter $t$ was clamped to stay within $[0,1]$, the sum of a multiset was added as a 5'th coordinate, bias added in project, collapse method - LinearCombination, embedding dimension - 220, message passing layers - 6, final mlp layers - 3, dropout - 0.05,weight decay - 0.01, positional encoding - RWSE, learning rate - 0.001, scheduler - cosine with warmup, optimizer - adamW. Number of parameters: 480915.

\paragraph{peptides-struct}
For SortMPNN:  blank method - learnable, collapse method - vector, bias not used both in projection or collapse, combine - LinearCombination, embedding dimension - 200, message passing layers - 6, final mlp layers - 3, dropout - 0, weight decay - 0.01, positional encoding - LapPE \citep{dwivedi2021generalization}, learning rate - 0.005, scheduler - step, optimizer - adamW. Number of parameters: 493858.

For AdaptMPNN:the bias parameter $t$ was clamped to stay within $[0,1]$, the sum of a multiset was added as a 5'th coordinate, bias added in project, collapse method - LinearCombination, embedding dimension - 220, message passing layers - 6, final mlp layers - 3, dropout - 0,weight decay - 0.1, positional encoding - LapPE, learning rate - 0.001, scheduler - step, optimizer - adamW. Number of parameters: 480973.

\paragraph{small models}
Additional experiments run on peptides struct were performed by running models with the exact same hyper-parameters aside from the width. Results are shown in figure \ref{fig:pep_struct_small}. Exact width and number of parameters (including for the 500K budget) appear in table \ref{tab:pep_struct_params}.

\begin{figure}[ht]
    \centering
    \begin{minipage}{0.45\textwidth}
        \centering
        \includegraphics[width=\linewidth]{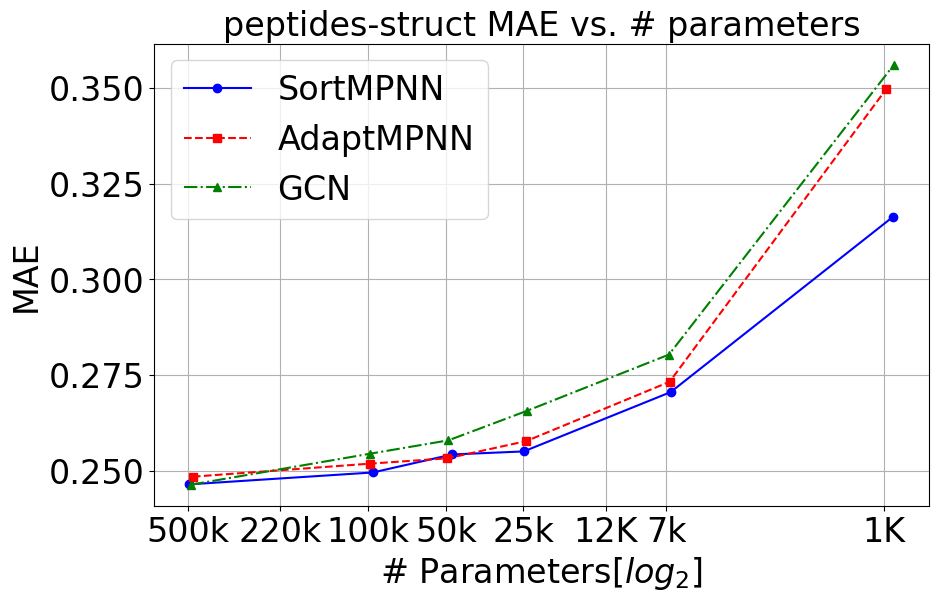}
        \caption{pep-struct, \#params vs. MAE} 
        \label{fig:pep_struct_small}
    \end{minipage}%
    \hfill
    \begin{minipage}{0.5\textwidth}
        \centering
        \captionof{table}{[width | \#params] per model for each parameter budget}
        \resizebox{\linewidth}{!}{%
            \begin{tabular}{l | lll}\hline
                 Budget & SortMPNN & AdaptMPNN & GCN \\ \hline
                 500K & [200 | 493.8K] & [220 | 478.8K] & [235 | 488K] \\
                 100K & [70 | 95.9K] & [95 | 98.6K] & [100 | 98.4K] \\
                 50K & [48 | 47.3K] & [65 | 49.3K] & [68 | 48.8K] \\
                 25K & [34 | 24.9K] & [44 | 24.3K] & [46 | 24.2K] \\
                 7K & [17 | 6.7K] & [22 | 6.7K] & [23 | 6.8K] \\
                 1K & [3 | 922] & [5 | 915] & [5 | 982] \\
                 \hline
            \end{tabular}%
        }
        \label{tab:pep_struct_params}
    \end{minipage}
\end{figure}

\subsection{Subgraph aggregation networks}
In this experiment, we made use of edge features. The incorporation of edge features was done via the concatenation of the edge and relevant node features, followed by a linear layer projecting the concatenated vector to the desired dimension. The projected vector was then passed through a ReLU activation.

Test set mean and standard deviation were computed over 10 random seeds. Hyper-parameters were chosen by randomly running 40 configurations with the DS edge deleted configuration, and choosing the best. the validation set was done using a single seed. The code provided with \citep{DBLP:conf/iclr/BevilacquaFLSCB22} was used to run these experiments, with the only change being the addition of our architectures. 

\paragraph{SortMPNN}
Both for the DS and DSS experiments, the model consisted of 5 message passing layers, used sum Jumping Knowledge (JK) from \citep{xu18JumpingKknowledge}, Combine - ConcatProject, matrix collapse, bias was added to projection, blank method - learnable. The model was trained for 400 epochs with lr=0.01 and batch size of 128.

\paragraph{AdaptMPNN}
Both for the DS and DSS experiments, the model consisted of 5 message passing layers, sum JK for DS and last for DSS, combine - ConcatProject, sum wasn't added, $t$ was clapmed to $[0,1]$, bias was added throughout aggregation steps. The model was trained for 400 epochs, with lr of 0.005 for DS and 0.01 for DSS. 

\section{Runtime}

\begin{wraptable}{}{0.6\columnwidth}
    \centering
    \centering
    \scalebox{1.0}{
    \setlength{\tabcolsep}{3pt} %
    \begin{tabular}{|l|c|c|c|c|}
    \hline
     Model& GINE & GCN & SortMPNN & AdaptMPNN \\
    \hline
    Avg. time per epoch (s) & 12.16 & 14.25 & 16.08 & 26.77 \\
    \hline
    
    \end{tabular}}
    \caption{Average time per epoch of various models on the peptides-struct dataset. Average taken over 250 epochs.}
    \label{tab:runtime}
\end{wraptable}


\end{document}